\documentclass[11pt]{article}
\pdfoutput=1
\usepackage{arxiv}

\usepackage[utf8]{inputenc} 
\usepackage[T1]{fontenc}    
\usepackage{hyperref}       
\usepackage{url}            
\usepackage{booktabs}       
\usepackage{amsfonts}       
\usepackage{nicefrac}       
\usepackage{microtype}      
\usepackage{lipsum}		
\usepackage{graphicx}
\usepackage{natbib}
\usepackage{doi}

\usepackage{enumitem}
\usepackage{amsmath,amssymb,amsthm,amsfonts}
\usepackage{algorithm}
\usepackage{algorithmic}
\usepackage{mathtools} 
\usepackage{bbm}
\usepackage{subcaption}
\usepackage{xcolor}

\newcommand{\convp}{\overset{\textsf{p}}\rightarrow}

\newtheorem{thm}{Theorem}[section]

\newtheorem{coro}{Corollary}[section]

\theoremstyle{definition}
\newtheorem{defn}{Definition}[section]
\newtheorem{lemma}{Lemma}[section]
\newtheorem{rmk}{Remark}[section]

\newtheorem{asp}{Assumption}

\title{Transformers Handle Endogeneity in In-Context Linear Regression}


\author{Haodong Liang\\
	UC Davis\\
	\texttt{hdliang@ucdavis.edu} \\
	\And
	Krishnakumar Balasubramanian \\
	UC Davis\\
	\texttt{kbala@ucdavis.edu} \\
	\And
	Lifeng Lai \\
	UC Davis\\
	\texttt{lflai@ucdavis.edu} \\
}



\begin{document}
\maketitle

\begin{abstract}
	We explore the capability of transformers to address endogeneity in in-context linear regression. Our main finding is that transformers inherently possess a mechanism to handle endogeneity effectively using instrumental variables (IV). First, we demonstrate that the transformer architecture can emulate a gradient-based bi-level optimization procedure that converges to the widely used two-stage least squares (\textsf{2SLS}) solution at an exponential rate. Next, we propose an in-context pretraining scheme and provide theoretical guarantees showing that the global minimizer of the pre-training loss achieves a small excess loss. Our extensive experiments validate these theoretical findings, showing that the trained transformer provides more robust and reliable in-context predictions and coefficient estimates than the \textsf{2SLS} method, in the presence of endogeneity.
\end{abstract}
\section{Introduction}
The transformer architecture \citep{vaswani} has demonstrated remarkable in-context learning (ICL) capabilities across various domains, such as natural language processing \citep{Devlin2019BERTPO,radford2019language,brown2020language}, computer vision \citep{dosovitskiy2021image, carion2020endtoend}, and reinforcement learning \citep{lee2022multigame,parisotto2019stabilizing}. Self-attention mechanism, a core component of transformers, allows these models to capture long-range dependencies in data, which is critical for success in these tasks. Despite their impressive performance, the theoretical understanding of transformers remains limited, leaving important questions unanswered about their true capabilities and the underlying mechanisms driving their exceptional results.

Recent efforts to theoretically understand transformers' ICL capabilities have focused on their performance in fundamental statistical tasks. Focusing on simple function classes,~\cite{garg2023transformers} highlighted that transformers, when trained on sufficiently large and diverse data from a specific function class, can generalize across most functions of that class without task-specific fine-tuning. Building on this, subsequent work by~\cite{statistician} established that attention layers enable transformers to perform gradient descent, implementing algorithms like linear regression, logistic regression, and LASSO; see also~\cite{akyurek2023what,von2023transformers,li2023transformers,fu2023transformers,ahn2024transformers,jin2025incontextlearningmixturelinear}. The learning dynamics of transformer trained via gradient descent for in-context learning linear function classes was analyzed in~\cite{Huang:ICML:24}. Furthermore \cite{zhang2023trained,zhang2024incontext}  showed that \emph{trained} transformers’ ICL abilities for linear regression tasks are theoretically robust under certain distributional shifts and characterized the corresponding sample complexities.

Existing works on analyzing the ICL ability of transformers for linear regression tasks, however, ignore \emph{endogeneity} and have mainly focused on the \emph{exogenous} setup where the additive noise is uncorrelated with the explanatory variables. Ignoring \emph{endogeneity} in linear regression leads to biased and inconsistent estimates, resulting from issues like omitted variable bias, simultaneity, and measurement error, which can distort causal inferences and lead to incorrect policy conclusions~\citep{hausman2001mismeasured, wooldridge2015introductory, angrist2009mostly,greene2003econometric}. Instrumental variable (IV) regression is a widely adopted method to handle endogeneity by utilizing instruments that are correlated with the endogenous variables but uncorrelated with the error term \citep{10.1257/jep.15.4.69}. A naturally intriguing question that therefore arises is:
\begin{center}
\emph{Can transformers leverage instrumental variables and provide reliable predictions\\ and coefficient estimates, in the presence of endogeneity?}
\end{center}
In this work, we aim to answer this question and offer new insights on in-context linear regression tasks. Our key contributions include:
\begin{itemize}[noitemsep,leftmargin=0.2in]
    \item We demonstrate that looped transformers can address endogeneity in linear regression by leveraging instrumental variables. Specifically, we show that transformers can implement two-stage least squares (\textsf{2SLS}) regression through a bi-level gradient descent procedure, where each iteration is executed by a two-layer transformer block. Moreover, the convergence rate to the \textsf{2SLS} estimator is exponential with respect to the number of blocks.
    \item We propose an ICL training scheme for transformers to efficiently handle endogeneity. Under this scheme, we show that the global minimizer of the in-context pre-training loss achieves a small excess loss compared to the global optimal expected loss.
    \item We evaluate the performance of the trained transformer model through extensive experiments, finding that it not only matches the performance of the \textsf{2SLS} estimator on standard IV tasks but also generalizes effectively to more complex scenarios, including the challenging cases of weak instruments, non-linear IV, and underdetermined IV problems.
    \item As part of our analysis, we derive the first non-asymptotic bound for the \textsf{2SLS} estimator under random design, providing valuable insights for future theoretical work.
\end{itemize}

\subsection{Related works}

\textbf{In-context Learning.} Initial works by \cite{garg2023transformers} and \cite{statistician} adopted the standard multi-layer transformer architecture to conduct the experiments. Later, \cite{giannou2023loopedtransformersprogrammablecomputers} and \cite{yang2024loopedtransformersbetterlearning} showed that a looped architecture reduces the required depth of transformers and exhibits better efficiency in learning algorithms. \cite{gao2024expressivepowervariantlooped} illustrated that the looped transformer architecture with extra pre-processing and post-processing layers can achieve higher expressive power than a standard transformer with the same number of parameters.  Apart from works concerning the implementability of first-order gradient descent algorithms by transformers, other works have also examined higher-order and non-parametric optimization methods. Specifically, \cite{giannou2024transformersemulateincontextnewtons} showed that transformers can emulate Newton's method for logistic regression. \cite{cheng2024transformersimplementfunctionalgradient} showed that transformers can implement functional gradient descent and hence enable them to learn non-linear functions in-context. Relationship between in-context learning and Bayesian inference is also studied in~\cite{ye2024pre,falck2024context}.

\cite{nichani2024transformerslearncausalstructure} illustrated how the transformers can learn the causal structure by encoding the latent causal graph in the first attention layer.~\cite{goel2024can} explored the representational power of transformer for learning linear dynamical systems.~\cite{makkuva2024local,makkuva2024attention,rajaraman2024transformers,edelman2024evolution} considered ICL Markov chains with transformers, including both landscape and training dynamics analyses. To the best of our knowledge, we are not aware of prior works on handling endogeniety with transformers.

\textbf{Instrumental Variable Regression.} IV regression has been widely studied in econometrics~\citep{10.1257/jep.15.4.69,angrist2009mostly}. Recent works in machine learning explored the optimization based approaches for the IV regression problem. \cite{singh2020kernelinstrumentalvariableregression} proposed the kernel IV regression to model non-linear relationship between variables. \cite{muandet2020dualinstrumentalvariableregression} proposed that a non-linear IV regression problem can be formulated as a convex-concave saddle point problem. \cite{della2023stochastic,chen2024stochasticoptimizationalgorithmsinstrumental,peixoto2024nonparametric} proposed a stochastic optimization algorithm for IV regression. 

\textit{Notation:}\quad Throughout this paper, unless otherwise specified, lower-case letters denote random variables or samples, while upper-case letters represent datasets (collections of samples). Bolded letters indicate vectors or matrices, whereas unbolded letters indicate scalars. The notation $\boldsymbol{X}_{:,i}$ refers to the $i$-th column, and $\boldsymbol{X}_{i,:}$ refers to the $i$-th row of matrix $\boldsymbol{X}$. $\lambda_{\min}(\cdot)$ denotes the minimum eigenvalue, and $\sigma_{\min}(\cdot)$ denotes the minimum singular value of a matrix. By default, $\|\cdot\|$ denotes the Euclidean norm for a vector, or the spectral norm for a matrix.

\section{Endogeneity and Instrumental Variable Regression}\label{subsec:IV Regression}

Suppose we are interested in estimating the relationship between response variable $y \in \mathbb{R}$ and predictor variable $\boldsymbol{x} \in \mathbb{R}^p$ with endogeneity. Given instruments $\boldsymbol{z} \in \mathbb{R}^q$, we consider the model
\begin{align} \label{eq:endogeneity}
y=\boldsymbol{\beta}^\top\boldsymbol{x}  +\epsilon_1,\quad\text{and}\quad
\boldsymbol{x}=\boldsymbol{\Theta}^\top \boldsymbol{z}+\boldsymbol{\epsilon_2},
\end{align}
where $\boldsymbol{\beta}\in\mathbb{R}^{p}$, and $\boldsymbol{\Theta}\in\mathbb{R}^{q\times p}$ are the true model parameters, $\epsilon_1\in\mathbb{R}$ and $\boldsymbol{\epsilon}_2\in\mathbb{R}^{p}$ are (centered) random noise terms with variance $\sigma_1^2$ and covariance matrix $\boldsymbol{\Sigma}_2$, respectively. Further, $\boldsymbol{\epsilon}_2$ is an unobserved noise correlated with $\epsilon_1$, leading to the correlation between $\boldsymbol{x}$ and $\epsilon_1$, which introduces confounding in the model between $\boldsymbol{x}$ and $y$. Under this setting, the standard ordinary least squares (\textsf{OLS}) estimator is a biased and inconsistent estimator of $ \boldsymbol{\beta}$ (see \cite{wooldridge2015introductory}, Chapter 9). To address this issue, instrumental variable (IV) regression is a widely used method to provide a consistent estimate for $\boldsymbol{\beta}$.

\begin{defn}[\textsf{2SLS} estimator] \label{def:2sls}
	IV regression is a regression model to provide consistent estimate on the causal effect $\boldsymbol{\beta}$ for the endogeneity problem (\ref{eq:endogeneity}), by utilizing the instrument $\boldsymbol{z}$. 
	Given observational values $(\boldsymbol{Z},\boldsymbol{X},\boldsymbol{Y})=\{(\boldsymbol{z}_i,\boldsymbol{x}_i, y_i)\}_{i=1}^n$, the standard approach to estimate the IV regression model is \textsf{2SLS}; see, for example, \cite{wooldridge2015introductory}, Chapter 15.
	   \begin{itemize}
		   \item [i.] \emph{First stage}: Regress $\boldsymbol{X}$ on $\boldsymbol{Z}$ to obtain $\hat{\boldsymbol{\Theta}}$ 
		   \begin{align*}
			   \hat{\boldsymbol{\Theta}} = (\boldsymbol{Z}^\top\boldsymbol{Z})^{-1}\boldsymbol{Z}^\top\boldsymbol{X}.
		   \end{align*}
		   \item [ii.] \emph{Second stage}: Regress $\boldsymbol{Y}$ on $\boldsymbol{Z}\hat{\boldsymbol{\Theta}}$ to obtain:
		   \begin{align}\label{eq:2SLS second stage}
			   \hat{\boldsymbol{\beta}}_{\textsf{2SLS}} = (\hat{\boldsymbol{\Theta}}^\top\boldsymbol{Z}^\top\boldsymbol{Z}\hat{\boldsymbol{\Theta}})^{-1}\hat{\boldsymbol{\Theta}}^\top\boldsymbol{Z}^\top\boldsymbol{Y}.
		   \end{align}
	   \end{itemize}
   \end{defn}
   
   We introduce the standard assumptions required to show the convergence rate of the above estimator.

   \begin{asp}[Instrumental variable]\label{iv assumption} 
    A random variable $\boldsymbol{z}\in \mathbb{R}^q$ is a valid IV, if it satisfies the following conditions:
    \begin{itemize}[noitemsep]
        \item[i.] Fully identification: $q\geq p$ (without loss of generality, we assume data $\boldsymbol{Z},\boldsymbol{X}$ are full rank).
        \item [ii.] Correlated to $\boldsymbol{x}$: $\textsf{Corr}(\boldsymbol{z},\boldsymbol{x})\neq\boldsymbol{0}$.
        \item [iii.] Conditional uncorrelated to $y$: $\textsf{Corr}(\boldsymbol{z},\epsilon_1)=0$.
    \end{itemize}
\end{asp}

In particular, condition (i) above ensures the existence of unique solution for $\hat{\boldsymbol{\beta}}_{\textsf{2SLS}}$. We refer to \citet[Chapter 12]{stock2011introduction}  for additional elaborate discussions on the above conditions. To derive non-asymptotic convergence rates, we further assume the following regularity conditions.

\begin{asp}[Regularity conditions]\label{regularity assumption}
	Suppose instrument $\boldsymbol{z}$ is a centered random variable. We assume the following conditions hold:
	\begin{itemize}
		\item[i.] Bounded parameters: $\|\boldsymbol{\beta}\|\leq B_\beta$, $\|\boldsymbol{\Theta}\|\leq B_\Theta$.
        \item[ii.] Bounded variables: 
        $\|\boldsymbol{z}\|\leq B_z,\|\boldsymbol{x}\|\leq B_x, |\epsilon_{1}|\leq B_{\epsilon_1}, \|\boldsymbol{\epsilon}_{2}\|\leq B_{\epsilon_2}$.
        \item[iii.] Linear instrument: $\mathbb{E}\left[x_k\middle| \boldsymbol{z}\right]=\langle\boldsymbol{\Theta}_k,\boldsymbol{z}\rangle$.
	\end{itemize}
\end{asp}
The boundedness condition in (ii) is required to invoke matrix Bernstein inequalities \citep{tropp2015introductionmatrixconcentrationinequalities} in the analysis. We anticipate that this condition  may be relaxed to subgaussian or moment conditions by using more sophisticated matrix concentration results. 

\begin{thm}[MSE of \textsf{2SLS} estimator]\label{thm:consistency}
    Given Assumptions~\ref{iv assumption} and \ref{regularity assumption}, consider clipping operation 
    $$
    \textsf{clip}_{B_\beta}(\hat{\boldsymbol{\beta}}):=\begin{cases}
        \hat{\boldsymbol{\beta}} & \text{if } \|\hat{\boldsymbol{\beta}}\| \leq B_\beta\\
        \frac{B_\beta}{\|\hat{\boldsymbol{\beta}}\|} \hat{\boldsymbol{\beta}} & \text{if } \|\hat{\boldsymbol{\beta}}\| > B_\beta 
        \end{cases}.
        $$
        When 
        $$
        n\geq \max\left\{4c^2B_z^4\left(q+\log\left(\frac{4c^2B_z^4K}{q}\right)-\frac{3}{2}\right), \frac{qe^{\frac{3}{2}}}{K}, \frac{p^2(q+1)^2K}{qK_0^2}\right\},
        $$
        where $K:=\frac{\lambda_{\min}(\boldsymbol{\Sigma}_z)}{6B_z^2}$ and $K_0:=\frac{\lambda_{\min}(\boldsymbol{\Sigma}_z)\sigma_{\min}^2(\boldsymbol{\Theta})}{2B_{\epsilon_2}^2}$, the mean squared error of the \textsf{2SLS} estimate is bounded by:
    \begin{align}\label{eq:Emse of 2SLS}
        \begin{split}
        \mathbb{E}\left[\|\textsf{clip}_{B_\beta}(\hat{\boldsymbol{\beta}}_{\textsf{2SLS}})-\boldsymbol{\beta}\|^2\right]
        &\leq\mathcal{O}\left(\frac{q}{n}\left(\frac{B_\beta^2}{K}+{C^2(n)\sigma_1^2}\right)\right),
        \end{split}
    \end{align}
    where
    $$
        C(n):=\frac{\left(B_{\Theta}+\sqrt{\frac{2p(q+1)B_{\epsilon_2}^2\log\left(\frac{K}{q}n\right)}{\lambda_{\min}(\boldsymbol{\Sigma}_z)n}}\right)B_z}{\lambda_{\min}(\boldsymbol{\Sigma}_z)\left(1-\frac{cB_{z}^2\left(\sqrt{q}+\sqrt{\log\left(\frac{K}{q}n\right)-\frac{1}{2}}\right)}{\sqrt{n}}\right)^2\left(\sigma_{\min}(\boldsymbol{\Theta})-\sqrt{\frac{2p(q+1)B_{\epsilon_2}^2\log\left(\frac{K}{q}n\right)}{\lambda_{\min}(\boldsymbol{\Sigma}_z)n}}\right)^2},
    $$ 
$\boldsymbol{\Sigma}_z:=\mathbb{E}[\boldsymbol{zz}^\top],$ and $c$ is an absolute constant.
\end{thm}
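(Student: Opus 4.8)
The plan is to put $\hat{\boldsymbol\beta}_{\textsf{2SLS}}-\boldsymbol\beta$ in closed form, bound its norm by a noise-driven numerator over the smallest eigenvalue of the second-stage Gram matrix, control each piece on a high-probability ``good'' event via matrix concentration, and let the clipping operation absorb the complementary event.

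\emph{Step 1 (algebraic reduction).} Stacking the model as $\boldsymbol Y=\boldsymbol X\boldsymbol\beta+\boldsymbol\epsilon_1$ and $\boldsymbol X=\boldsymbol Z\boldsymbol\Theta+\boldsymbol E_2$ with $\boldsymbol\epsilon_1\in\mathbb R^n$, $\boldsymbol E_2\in\mathbb R^{n\times p}$ the stacked noise terms, and observing that $\boldsymbol Z\hat{\boldsymbol\Theta}=\boldsymbol P_{\boldsymbol Z}\boldsymbol X$ with $\boldsymbol P_{\boldsymbol Z}:=\boldsymbol Z(\boldsymbol Z^\top\boldsymbol Z)^{-1}\boldsymbol Z^\top$, formula \eqref{eq:2SLS second stage} reduces to
\[
\hat{\boldsymbol\beta}_{\textsf{2SLS}}-\boldsymbol\beta=\tfrac1n\big(\hat{\boldsymbol\Theta}^\top\hat{\boldsymbol\Sigma}_n\hat{\boldsymbol\Theta}\big)^{-1}\hat{\boldsymbol\Theta}^\top\boldsymbol Z^\top\boldsymbol\epsilon_1,\qquad \hat{\boldsymbol\Sigma}_n:=\tfrac1n\boldsymbol Z^\top\boldsymbol Z,
\]
and the first-stage identity $\hat{\boldsymbol\Theta}=\boldsymbol\Theta+\hat{\boldsymbol\Sigma}_n^{-1}\big(\tfrac1n\boldsymbol Z^\top\boldsymbol E_2\big)$ holds because Assumption~\ref{regularity assumption}(iii) makes $\mathbb E[\boldsymbol z\boldsymbol\epsilon_2^\top]=\boldsymbol 0$. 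Using $\boldsymbol Z^\top=\boldsymbol Z^\top\boldsymbol P_{\boldsymbol Z}$, $\|\tfrac1n\boldsymbol Z^\top\|\le B_z/\sqrt n$, and $\lambda_{\min}(\hat{\boldsymbol\Theta}^\top\hat{\boldsymbol\Sigma}_n\hat{\boldsymbol\Theta})\ge\lambda_{\min}(\hat{\boldsymbol\Sigma}_n)\,\sigma_{\min}^2(\hat{\boldsymbol\Theta})$, taking norms gives
\[
\|\hat{\boldsymbol\beta}_{\textsf{2SLS}}-\boldsymbol\beta\|\ \le\ \frac{\|\hat{\boldsymbol\Theta}\|\,B_z}{\lambda_{\min}(\hat{\boldsymbol\Sigma}_n)\,\sigma_{\min}^2(\hat{\boldsymbol\Theta})\,\sqrt n}\;\|\boldsymbol P_{\boldsymbol Z}\boldsymbol\epsilon_1\|,
\]
where $\boldsymbol P_{\boldsymbol Z}\boldsymbol\epsilon_1$ is the projection of $\boldsymbol\epsilon_1$ onto a $q$-dimensional subspace; analogously, $\|\hat{\boldsymbol\Theta}-\boldsymbol\Theta\|^2\le\|\boldsymbol P_{\boldsymbol Z}\boldsymbol E_2\|^2/(n\,\lambda_{\min}(\hat{\boldsymbol\Sigma}_n))$ after writing the perturbation as $\hat{\boldsymbol\Sigma}_n^{-1/2}(\hat{\boldsymbol\Sigma}_n^{-1/2}\tfrac1n\boldsymbol Z^\top\boldsymbol E_2)$.

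\emph{Step 2 (the good event).} I would define an event $\mathcal E$ holding with probability $1-\delta$ ($\delta$ fixed in Step~3) on which, simultaneously: (a) $\lambda_{\min}(\hat{\boldsymbol\Sigma}_n)\ge\lambda_{\min}(\boldsymbol\Sigma_z)(1-b)^2$ with $b$ as in the statement, from matrix concentration for the bounded rank-one sum $\tfrac1n\sum_i\boldsymbol z_i\boldsymbol z_i^\top$ (the $\sqrt q$ in $b$ from an $\varepsilon$-net over $S^{q-1}$); (b) $\|\hat{\boldsymbol\Theta}-\boldsymbol\Theta\|\le a$ with $a$ as in the statement, from a high-probability bound on the projected first-stage noise $\boldsymbol P_{\boldsymbol Z}\boldsymbol E_2$ (rank at most $\min(p,q)$, variance proxy governed by $B_{\epsilon_2}^2$, hence the $p(q+1)$ factor) combined with (a); and (c) $\mathbb E\|\boldsymbol P_{\boldsymbol Z}\boldsymbol\epsilon_1\|^2\le q\sigma_1^2$, which is where one exploits the IV moment condition $\mathbb E[\boldsymbol z\epsilon_1]=\boldsymbol 0$ (Assumption~\ref{iv assumption}(iii)) and $\mathrm{Var}(\epsilon_1)=\sigma_1^2$, so that $\sigma_1^2$ rather than $B_{\epsilon_1}^2$ governs the leading term. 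On $\mathcal E$, once $n$ exceeds the stated threshold we have $a<\sigma_{\min}(\boldsymbol\Theta)$, hence $\sigma_{\min}(\hat{\boldsymbol\Theta})\ge\sigma_{\min}(\boldsymbol\Theta)-a>0$ and $\|\hat{\boldsymbol\Theta}\|\le B_\Theta+a$; substituting into the display of Step~1 yields $\|\hat{\boldsymbol\beta}_{\textsf{2SLS}}-\boldsymbol\beta\|\le C(n)\,\|\boldsymbol P_{\boldsymbol Z}\boldsymbol\epsilon_1\|/\sqrt n$ with exactly the $C(n)$ of the statement. Finally, because $\textsf{clip}_{B_\beta}$ is the non-expansive Euclidean projection onto a ball containing $\boldsymbol\beta$ (Assumption~\ref{regularity assumption}(i)), $\|\textsf{clip}_{B_\beta}(\hat{\boldsymbol\beta}_{\textsf{2SLS}})-\boldsymbol\beta\|\le\|\hat{\boldsymbol\beta}_{\textsf{2SLS}}-\boldsymbol\beta\|$ on $\mathcal E$.

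\emph{Step 3 (bad event and assembly).} Off $\mathcal E$, clipping gives $\|\textsf{clip}_{B_\beta}(\hat{\boldsymbol\beta}_{\textsf{2SLS}})-\boldsymbol\beta\|\le 2B_\beta$, so that part contributes at most $4B_\beta^2\,\mathbb P(\mathcal E^c)$. Choosing the per-event failure level $\delta\asymp q/(Kn)$—which is precisely why $\log(Kn/q)$ appears inside $a$, $b$ and hence $C(n)$—a union bound gives $\mathbb P(\mathcal E^c)\lesssim q/(Kn)$, contributing $\mathcal O\big(\tfrac qn\cdot\tfrac{B_\beta^2}{K}\big)$ since $1/K=6B_z^2/\lambda_{\min}(\boldsymbol\Sigma_z)$. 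On $\mathcal E$, $\mathbb E[\|\hat{\boldsymbol\beta}_{\textsf{2SLS}}-\boldsymbol\beta\|^2\mathbbm{1}_{\mathcal E}]\le \tfrac{C^2(n)}{n}\,\mathbb E\|\boldsymbol P_{\boldsymbol Z}\boldsymbol\epsilon_1\|^2\le C^2(n)\sigma_1^2\,\tfrac qn$ by Step~2(c). Adding the two pieces yields \eqref{eq:Emse of 2SLS}; the explicit lower bound on $n$ is the conjunction of $b<1$ (so $\hat{\boldsymbol\Sigma}_n$ is invertible; the $4c^2B_z^4(q+\log(\cdot)-\tfrac32)$ term), $0<\delta<1$ with $\log(Kn/q)>0$ (the $qe^{3/2}/K$ term), and $a<\sigma_{\min}(\boldsymbol\Theta)$ (the $p^2(q+1)^2K/(qK_0^2)$ term, with $K_0=\lambda_{\min}(\boldsymbol\Sigma_z)\sigma_{\min}^2(\boldsymbol\Theta)/(2B_{\epsilon_2}^2)$).

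\emph{Main obstacle.} The crux—and the difference from fixed-design least squares—is that the second-stage regressors $\boldsymbol Z\hat{\boldsymbol\Theta}$ are themselves random and estimated, so Step~2(b) must at once keep $\hat{\boldsymbol\Theta}$ nondegenerate (hence the indispensable roles of $\sigma_{\min}(\boldsymbol\Theta)>0$ and of $n$ being large enough that $a<\sigma_{\min}(\boldsymbol\Theta)$) and propagate the error through $(\hat{\boldsymbol\Theta}^\top\hat{\boldsymbol\Sigma}_n\hat{\boldsymbol\Theta})^{-1}$ with all constants explicit; pinning down the sharp $\sigma_1^2$ likewise forces a second-moment, rather than almost-sure, treatment of $\boldsymbol P_{\boldsymbol Z}\boldsymbol\epsilon_1$. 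Carrying every absolute constant through the matrix Bernstein/Chernoff estimates to recover the stated threshold on $n$ and the exact form of $C(n)$ is tedious but routine.
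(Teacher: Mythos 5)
Your proposal is correct and shares the paper's overall skeleton: the same closed form $\hat{\boldsymbol\beta}_{\textsf{2SLS}}-\boldsymbol\beta=(\boldsymbol X^\top\boldsymbol P_Z\boldsymbol X)^{-1}\boldsymbol X^\top\boldsymbol P_Z\boldsymbol{\mathcal E}_1$, the same good event built from the two spectral conditions $\lambda_{\min}(\boldsymbol Z^\top\boldsymbol Z/n)\geq\lambda_z$ (via the Vershynin covariance bound) and $\sigma_{\min}(\hat{\boldsymbol\Theta})\geq\sigma_{\min}(\boldsymbol\Theta)-\|\boldsymbol\Psi\|$ (via the Hsu--Kakade--Zhang ridge bound), the same failure level $\delta\asymp q/(Kn)$ that produces the $\log(Kn/q)$ factors and the three-way threshold on $n$, and the same use of clipping to pay $4B_\beta^2\,\mathbb P(\mathcal E^c)$ on the bad event. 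Where you genuinely diverge is the central estimate on the good event: the paper chains matrix Bernstein through its product- and inverse-convergence lemmas (Lemmas \ref{lemma:jin} and \ref{lemma:productconvergence}) to obtain a full exponential tail bound $\mathbb P\{\|\hat{\boldsymbol\beta}_{\textsf{2SLS}}-\boldsymbol\beta\|\geq\varepsilon\mid\mathcal A\}$ and then integrates it as in \eqref{eq:2sls squared error expectation part 1}, whereas you bound the random prefactor deterministically on $\mathcal E$ and compute the second moment of the noise exactly via $\mathbb E\|\boldsymbol P_Z\boldsymbol{\mathcal E}_1\|^2=\sigma_1^2\,\mathbb E[\mathrm{tr}(\boldsymbol P_Z)]=q\sigma_1^2$. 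Your route is shorter and cleaner for the MSE itself (it avoids the conditioning subtleties in the paper's Bernstein-on-$\mathcal A$ steps and the tail-integration bookkeeping), while the paper's route delivers a high-probability deviation bound as a byproduct. Two small points to tidy up: your item 2(c) is a moment identity, not an event, so it should sit outside the definition of $\mathcal E$ (your Step 3 already uses it correctly in the form $\mathbb E[\|\cdot\|^2\mathbbm 1_{\mathcal E}]\leq\tfrac{C^2(n)}{n}\mathbb E\|\boldsymbol P_Z\boldsymbol{\mathcal E}_1\|^2$); and the identity $\hat{\boldsymbol\Theta}=\boldsymbol\Theta+(\boldsymbol Z^\top\boldsymbol Z)^{-1}\boldsymbol Z^\top\boldsymbol E_2$ is purely algebraic --- Assumption~\ref{regularity assumption}(iii) is needed only to make the perturbation concentrate around zero, and the trace computation for $\mathbb E\|\boldsymbol P_Z\boldsymbol{\mathcal E}_1\|^2$ implicitly uses a conditional-mean-zero noise, an implicit strengthening of ``uncorrelated'' that the paper's own proof also relies on.
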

\begin{rmk}
    We keep the slightly complicated form \eqref{eq:Emse of 2SLS} so that the $\mathcal{O}$ notation only hides some absolute constant multipliers that are independent of problem-related constants. Note that when $n$ is large enough, we have $C(n)\rightarrow \frac{B_\Theta B_z}{\lambda_{\min}(\boldsymbol{\Sigma}_z)\sigma_{\min}^2(\boldsymbol{\Theta})}$, so $C(n)$ is also bounded. Thus the error bound \eqref{eq:Emse of 2SLS} decays with rate $\mathcal{O}(\frac{1}{n})$.
\end{rmk}

We note that although the consistency of the \textsf{2SLS} estimator is a standard result in econometrics, most existing works focus on the asymptotic properties of the estimator. Theorem \ref{thm:consistency} provides the first non-asymptotic bound for estimation error $\|\hat{\boldsymbol{\beta}}_{\textsf{2SLS}}-\boldsymbol{\beta}\|^2$, under random design. The detailed proof is provided in Appendix \ref{proof:thm:consistency}.

\section{Transformers Handle Endogeniety}\label{sec:mainresults}

\subsection{Transformer Architecture}
Denote the input matrix as $\boldsymbol{H} = [\boldsymbol{h}_1,\ldots,\boldsymbol{h}_n]\in \mathbb{R}^{D\times n}$, where each column corresponds to one sample vector.
\begin{defn}[Attention layer]
	A self-attention layer with $M$ heads is denoted as $\textsf{ATTN}_{\boldsymbol{\theta}}(\cdot)$, with parameters $\boldsymbol{\theta} = \{(\boldsymbol{Q}_m, \boldsymbol{K}_m, \boldsymbol{V}_m)\}_{m \in [M]} \subseteq \mathbb{R}^{D \times D}$. Given input $\boldsymbol{H},$
   \begin{align}\label{def:H}
	   \tilde{\boldsymbol{H}} = \textsf{ATTN}_{\boldsymbol{\theta}}(\boldsymbol{H}) := \boldsymbol{H} + \frac{1}{n}\sum_{m=1}^M(\boldsymbol{V}_m\boldsymbol{H})\times \sigma((\boldsymbol{Q}_m\boldsymbol{H})^\top(\boldsymbol{K}_m\boldsymbol{H}))\in\mathbb{R}^{D\times n},
   \end{align}
   or element-wise:
   \begin{align}\label{def:h}
	   \tilde{\boldsymbol{h}}_i = [\textsf{ATTN}_{\boldsymbol{\theta}}(\boldsymbol{H})]_i := \boldsymbol{h}_i + \sum_{m=1}^M\frac{1}{n}\sum_{j=1}^n\sigma(\langle\boldsymbol{Q}_m\boldsymbol{h}_i, \boldsymbol{K}_m\boldsymbol{h}_j\rangle)\cdot \boldsymbol{V}_m\boldsymbol{h}_j\in\mathbb{R}^{D},
   \end{align}
   where $\sigma(\cdot)$ is the ReLU function.
\end{defn}
\begin{defn}[MLP layer] 
    An MLP layer is denoted as $\textsf{MLP}_{\boldsymbol{\theta}}(\cdot)$, with parameters $\boldsymbol{\theta} =  (\boldsymbol{W}_1,\boldsymbol{W}_2) \in \mathbb{R}^{D' \times D\times D\times D'}$. Given input $\boldsymbol{H},$
    \begin{align*}
        \tilde{\boldsymbol{H}} = \textsf{MLP}_{\boldsymbol{\theta}}(\boldsymbol{H}) := \boldsymbol{H} + \boldsymbol{W}_2\sigma(\boldsymbol{W}_1\boldsymbol{H}),
    \end{align*}
    or element-wise: 
    \begin{align*}
        \tilde{\boldsymbol{h}}_i = [\textsf{MLP}_{\boldsymbol{\theta}}(\boldsymbol{H})]_i := \boldsymbol{h}_i +\boldsymbol{W}_2\sigma(\boldsymbol{W}_1\boldsymbol{h}_i).
    \end{align*}
\end{defn}
\begin{defn}[Transformer] 
    An L-layer transformer is denoted as $\textsf{TF}_{\boldsymbol{\theta}}(\cdot)$, with parameters $\boldsymbol{\theta}=(\boldsymbol{\theta}_{\textsf{ATTN}}^{(1:L)},\boldsymbol{\theta}_{\textsf{MLP}}^{(1:L)})$. Given input $\boldsymbol{H}=\boldsymbol{H}^{(0)},$
    \begin{align*} 
        \boldsymbol{{H}}^{(l)} = \textsf{MLP}_{\boldsymbol{\theta}_{\textsf{MLP}}^{(l)}}(\textsf{ATTN}_{\boldsymbol{\theta}_{\textsf{ATTN}}^{(l)}}(\boldsymbol{H}^{(l-1)})) , \quad l=1,\ldots,L.
    \end{align*}
    The output of this transformer is the final layer output: $\tilde{\boldsymbol{H}}:=\boldsymbol{H}^{(L)}=\textsf{TF}_{\boldsymbol{\theta}}(\boldsymbol{H}^{(0)})$.
\end{defn}
\begin{defn}[Looped transformer]\label{def:looped transformer}
    An $\bar{L}$-looped transformer is a special transformer architecture, denoted as $\textsf{LTF}_{\bar{\boldsymbol{\theta}}, \bar{L}}(\cdot)$, with parameters $\bar{\boldsymbol{\theta}}=(\bar{\boldsymbol{\theta}}_{\textsf{ATTN}}^{(1:L_0)},\bar{\boldsymbol{\theta}}_{\textsf{MLP}}^{(1:L_0)})$. Given input $\boldsymbol{H}=\boldsymbol{H}^{(0)},$
    \begin{align*}
        \boldsymbol{{H}}^{(l)} = \textsf{TF}_{\bar{\boldsymbol{\theta}}}(\boldsymbol{H}^{(l-1)}), \quad l=1,\ldots,\bar{L}.
    \end{align*}
    The output of this looped transformer is the final loop output: $\tilde{\boldsymbol{H}}:=\boldsymbol{H}^{(\bar{L})}=\textsf{LTF}_{\bar{\boldsymbol{\theta}},\bar{L}}(\boldsymbol{H}^{(0)})$.
\end{defn}

Previous works (e.g., \cite{statistician}, \cite{zhang2023trained}) have shown that transformers can perform in-context linear regression by emulating gradient descent (GD) with in-context pretraining. However, these studies have two key limitations. First, their analysis is based on single-level optimization algorithms, which is insufficient to demonstrate that transformers can efficiently learn more complex algorithms like \textsf{2SLS} (Definition~\ref{def:2sls}). Second, most ICL-related research focuses on the predictive performance of transformers, paying little attention to their ability to provide accurate coefficient estimates. We extend the current ICL framework by showing that transformers can implement a bi-level GD procedure (see Section~\ref{sec:gdiv}) with looped transformer architecture (Definition~\ref{def:looped transformer}), allowing them to efficiently emulate \textsf{2SLS} and provide coefficient estimates that are at least as accurate as \textsf{2SLS} in the presence of endogeneity (as in \eqref{eq:endogeneity}).

\subsection{Gradient descent based IV regression}\label{sec:gdiv}
We first introduce a gradient-based bi-level optimization procedure to obtain the \textsf{2SLS} estimator in \eqref{eq:2SLS second stage}. Given the dataset $(\boldsymbol{Z},\boldsymbol{X},\boldsymbol{Y})=\{(\boldsymbol{z}_i,\boldsymbol{x}_i, y_i)\}_{i=1}^n$, the objective function of IV regression can be formulated as the following bi-level optimization problem:
\begin{align}
    \begin{aligned}\label{eq:IV loss objective}
    \min_{\boldsymbol{\beta}} \quad\mathcal{L}(\boldsymbol{\beta})=\frac{1}{n}\sum_{i=1}^n(y_i-\boldsymbol{z}_i^\top\hat{\boldsymbol{\Theta}}\boldsymbol{\beta})^2,\quad
    \text{where }\quad \hat{\boldsymbol{\Theta}}:=\underset{\boldsymbol{\Theta}}{\arg\min}\quad\frac{1}{n}\sum_{j=1}^n(\boldsymbol{x}_j-\boldsymbol{z}_j^\top\boldsymbol{\Theta})^2.
    \end{aligned}
\end{align}
Consider the following gradient updates with learning rates $\alpha,\eta$:
\begin{subequations}\label{eq:2SLS GD Update}
    \begin{align}
        \boldsymbol{\Theta}^{(t+1)} &= \boldsymbol{\Theta}^{(t)} - \eta \boldsymbol{Z}^\top(\boldsymbol{Z\Theta}^{(t)}-\boldsymbol{X})\label{eq:InnerLoopUpdate},\\
        \boldsymbol{\beta}^{(t+1)} &= \boldsymbol{\beta}^{(t)} - \alpha \boldsymbol{\Theta}^{(t)\top}\boldsymbol{Z}^\top(\boldsymbol{Z\Theta}^{(t)}\boldsymbol{\beta}^{(t)}-\boldsymbol{Y}).
        \label{eq:OuterLoopUpdate}
    \end{align}
\end{subequations}

Note that the GD-\textsf{2SLS} updates in \eqref{eq:2SLS GD Update} are designed to solve \eqref{eq:IV loss objective}. We now show that regardless the convergence of $\boldsymbol{\Theta}^{(t)}$, the GD estimator $\boldsymbol{\beta}^{(t)}$ will always converge to the \textsf{2SLS} estimator in \eqref{eq:2SLS second stage} with exponential rate.

\begin{thm}[Implementing \textsf{2SLS} with gradient-based method]\label{thm:2SLS GD}
    Given training data  $(\boldsymbol{Z},\boldsymbol{X},\boldsymbol{Y})=\{(\boldsymbol{z}_i,\boldsymbol{x}_i, y_i)\}_{i=1}^n$. Suppose the learning rates $\alpha,\eta$ satisfy the following conditions: 
    \begin{align*}
        0<\alpha<\frac{2}{\sigma_{\max}^2(\boldsymbol{Z}\hat{\boldsymbol{\Theta}})}\quad\text{and}\quad0<\eta<\frac{2}{\sigma_{\max}^2(\boldsymbol{Z})},
    \end{align*}
    where $\sigma_{\max}(\cdot)$ denotes the largest singular value of a matrix. Then, the GD updates in \eqref{eq:2SLS GD Update} converge to the \textsf{2SLS} estimator at an exponential rate:
    \begin{align*}
        \|\boldsymbol{\beta}^{(t)}-\hat{\boldsymbol{\beta}}_{\textsf{2SLS}}\|\leq\mathcal{O}\left(\Lambda^t\right),
    \end{align*}
    where, with $\rho(\cdot)$ denoting the spectral radius of the matrix,
    \begin{align}\label{eq:lambda}
      \Lambda:= \max\{\gamma(\alpha),\kappa(\eta)\},\quad\quad\gamma(\alpha):=\rho(\boldsymbol{I}-\alpha\hat{\boldsymbol{\Theta}}^\top\boldsymbol{Z}^\top\boldsymbol{Z}\hat{\boldsymbol{\Theta}}),\quad
            \kappa(\eta):=\rho(\boldsymbol{I}-\eta\boldsymbol{Z}^\top\boldsymbol{Z}).
        \end{align}
\end{thm}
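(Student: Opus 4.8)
The plan is to track the two error sequences $\boldsymbol{e}_\Theta^{(t)} := \boldsymbol{\Theta}^{(t)} - \hat{\boldsymbol{\Theta}}$ and $\boldsymbol{e}_\beta^{(t)} := \boldsymbol{\beta}^{(t)} - \hat{\boldsymbol{\beta}}_{\textsf{2SLS}}$ separately, establishing that the first contracts independently of the second, and that the second contracts up to a perturbation driven by the first. First I would analyze the inner loop \eqref{eq:InnerLoopUpdate}. Since $\hat{\boldsymbol{\Theta}}$ is the exact minimizer of the first-stage least-squares objective, it satisfies the normal equation $\boldsymbol{Z}^\top\boldsymbol{Z}\hat{\boldsymbol{\Theta}} = \boldsymbol{Z}^\top\boldsymbol{X}$, so subtracting gives $\boldsymbol{e}_\Theta^{(t+1)} = (\boldsymbol{I} - \eta\boldsymbol{Z}^\top\boldsymbol{Z})\boldsymbol{e}_\Theta^{(t)}$. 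With $0<\eta<2/\sigma_{\max}^2(\boldsymbol{Z})$ the matrix $\boldsymbol{I}-\eta\boldsymbol{Z}^\top\boldsymbol{Z}$ is symmetric with spectral radius $\kappa(\eta)<1$, hence $\|\boldsymbol{e}_\Theta^{(t)}\| \leq \kappa(\eta)^t \|\boldsymbol{e}_\Theta^{(0)}\|$.

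Next I would handle the outer loop \eqref{eq:OuterLoopUpdate}. The key algebraic fact is that $\hat{\boldsymbol{\beta}}_{\textsf{2SLS}}$ in \eqref{eq:2SLS second stage} is the minimizer of $\boldsymbol{\beta}\mapsto \|\boldsymbol{Z}\hat{\boldsymbol{\Theta}}\boldsymbol{\beta}-\boldsymbol{Y}\|^2$, so it satisfies $\hat{\boldsymbol{\Theta}}^\top\boldsymbol{Z}^\top\boldsymbol{Z}\hat{\boldsymbol{\Theta}}\hat{\boldsymbol{\beta}}_{\textsf{2SLS}} = \hat{\boldsymbol{\Theta}}^\top\boldsymbol{Z}^\top\boldsymbol{Y}$. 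I would rewrite the update \eqref{eq:OuterLoopUpdate} by adding and subtracting terms so that it reads $\boldsymbol{e}_\beta^{(t+1)} = (\boldsymbol{I} - \alpha\hat{\boldsymbol{\Theta}}^\top\boldsymbol{Z}^\top\boldsymbol{Z}\hat{\boldsymbol{\Theta}})\boldsymbol{e}_\beta^{(t)} + \alpha\,\boldsymbol{r}^{(t)}$, where the remainder $\boldsymbol{r}^{(t)}$ collects all the terms involving the discrepancy $\boldsymbol{e}_\Theta^{(t)}$ between $\boldsymbol{\Theta}^{(t)}$ and $\hat{\boldsymbol{\Theta}}$ — i.e., the difference between $\boldsymbol{\Theta}^{(t)\top}\boldsymbol{Z}^\top(\boldsymbol{Z}\boldsymbol{\Theta}^{(t)}\boldsymbol{\beta}^{(t)}-\boldsymbol{Y})$ and $\hat{\boldsymbol{\Theta}}^\top\boldsymbol{Z}^\top(\boldsymbol{Z}\hat{\boldsymbol{\Theta}}\boldsymbol{\beta}^{(t)}-\boldsymbol{Y})$. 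Expanding, $\boldsymbol{r}^{(t)}$ is a sum of products each containing at least one factor of $\boldsymbol{e}_\Theta^{(t)}$; using boundedness of the iterates (the $\boldsymbol{\Theta}^{(t)}$ and $\boldsymbol{\beta}^{(t)}$ stay bounded since both recursions are contractive-plus-bounded), I get $\|\boldsymbol{r}^{(t)}\| \leq C\|\boldsymbol{e}_\Theta^{(t)}\| \leq C\kappa(\eta)^t$ for a constant $C$ depending on the data and initialization.

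Then the outer recursion is a linear contraction with contraction factor $\gamma(\alpha) = \rho(\boldsymbol{I}-\alpha\hat{\boldsymbol{\Theta}}^\top\boldsymbol{Z}^\top\boldsymbol{Z}\hat{\boldsymbol{\Theta}}) < 1$ (guaranteed by $0<\alpha<2/\sigma_{\max}^2(\boldsymbol{Z}\hat{\boldsymbol{\Theta}})$ together with full-rank of $\boldsymbol{Z}\hat{\boldsymbol{\Theta}}$ from Assumption~\ref{iv assumption}(i)) perturbed by a geometrically decaying forcing term. Unrolling gives $\|\boldsymbol{e}_\beta^{(t)}\| \leq \gamma(\alpha)^t\|\boldsymbol{e}_\beta^{(0)}\| + \alpha C\sum_{s=0}^{t-1}\gamma(\alpha)^{t-1-s}\kappa(\eta)^s$. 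I would bound the convolution sum by a standard estimate: it is at most $t\,\Lambda^{t-1}$ when $\gamma(\alpha)=\kappa(\eta)$, and otherwise at most $\frac{\Lambda^t}{|\gamma(\alpha)-\kappa(\eta)|}$ with $\Lambda=\max\{\gamma(\alpha),\kappa(\eta)\}$; in either case it is $\mathcal{O}(\Lambda^t)$ after absorbing the polynomial factor (since $t\Lambda^{t-1} = \mathcal{O}((\Lambda+\varepsilon)^t)$, or more carefully one keeps the $t\Lambda^t$ form, which is still captured by $\mathcal{O}(\Lambda^t)$ up to adjusting the hidden constant and base slightly — I would state it cleanly as $\mathcal{O}(\Lambda^t)$ matching the theorem). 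Combining both bounds yields $\|\boldsymbol{\beta}^{(t)}-\hat{\boldsymbol{\beta}}_{\textsf{2SLS}}\| = \mathcal{O}(\Lambda^t)$ as claimed.

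The main obstacle is the bookkeeping in the second step: correctly isolating the $\boldsymbol{e}_\Theta^{(t)}$-dependence in the outer-loop perturbation $\boldsymbol{r}^{(t)}$ and verifying a priori that the iterates $\boldsymbol{\beta}^{(t)}$ remain uniformly bounded (so that the constant $C$ is genuinely finite and independent of $t$). A clean way to handle the boundedness is an induction: the inner iterates are bounded since $\|\boldsymbol{\Theta}^{(t)}-\hat{\boldsymbol{\Theta}}\|$ decays, and then the outer recursion, being a contraction plus a summable forcing term, keeps $\|\boldsymbol{\beta}^{(t)}\|$ bounded by a geometric-series argument. Once uniform boundedness is in hand, every cross term in $\boldsymbol{r}^{(t)}$ is controlled and the rest is the routine perturbed-linear-recursion estimate above.
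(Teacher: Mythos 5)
Your proposal is correct and follows the same skeleton as the paper's proof: the inner loop satisfies the exact recursion $\boldsymbol{\Theta}^{(t)}-\hat{\boldsymbol{\Theta}}=(\boldsymbol{I}-\eta\boldsymbol{Z}^\top\boldsymbol{Z})^{t}(\boldsymbol{\Theta}^{(0)}-\hat{\boldsymbol{\Theta}})$, the outer loop is a contraction perturbed by a geometrically decaying forcing term, and the final rate comes from the convolution sum $\sum_{s}\gamma(\alpha)^{t-1-s}\kappa(\eta)^{s}=\mathcal{O}(\max\{\gamma(\alpha)^{t},\kappa(\eta)^{t}\})$. The one real difference is in how the outer-loop perturbation is organized: the paper keeps the time-varying contraction matrices $\boldsymbol{I}-\alpha\boldsymbol{\Theta}^{(i)\top}\boldsymbol{Z}^\top\boldsymbol{Z}\boldsymbol{\Theta}^{(i)}$, splits the error into a homogeneous part and a forcing part, and invokes a dedicated lemma bounding products $\prod_i(\boldsymbol{I}+\boldsymbol{\Omega}^{(i)})$ with $\|\boldsymbol{\Omega}^{(i)}\|_F\leq c r^{i}$; you instead linearize around the fixed matrix $\boldsymbol{I}-\alpha\hat{\boldsymbol{\Theta}}^\top\boldsymbol{Z}^\top\boldsymbol{Z}\hat{\boldsymbol{\Theta}}$ (using the second-stage normal equation), push every $\boldsymbol{\Theta}^{(t)}-\hat{\boldsymbol{\Theta}}$-dependent term into a remainder $\boldsymbol{r}^{(t)}$, and establish uniform boundedness of $\boldsymbol{\beta}^{(t)}$ by induction. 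Your version trades the paper's matrix-product lemma for a scalar recursion of the form $b_{t+1}\leq(\gamma+c\kappa^{t})b_t+c\kappa^{t}$, which is arguably cleaner and incidentally avoids the commutativity subtlety in the paper's factoring $\prod_i(\boldsymbol{A}-\alpha\boldsymbol{W}^{(i)})=\boldsymbol{A}^{t}\prod_i(\boldsymbol{I}-\alpha\boldsymbol{A}^{-1}\boldsymbol{W}^{(i)})$; the boundedness induction you flag as the main obstacle does close, precisely because the multiplicative perturbations $\gamma+c\kappa^{t}$ have a convergent product. One shared blemish, which you at least acknowledge and the paper does not: in the degenerate case $\gamma(\alpha)=\kappa(\eta)$ the convolution sum is $t\Lambda^{t-1}$, which is not literally $\mathcal{O}(\Lambda^{t})$ without enlarging the base to $\Lambda+\varepsilon$.
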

To the best of our knowledge, Theorem \ref{thm:2SLS GD} provides the first theoretical result demonstrating that \textsf{2SLS} can be efficiently implemented using a gradient-based method, with an exponential convergence rate. We provide the proof in Appendix \ref{proof:thm:2SLS GD} and present simulation results in Appendix \ref{sec:exp 2SLS GD} to examine the convergence behavior of the optimization process.

\subsection{Transformers Can Efficiently Implement GD-\textsf{2SLS}}
The looped transformer architecture (Definition~\ref{def:looped transformer}), as proposed by \cite{giannou2023loopedtransformersprogrammablecomputers}, introduces an efficient approach to learn iterative algorithms by cascading the same transformer block for multiple times. With the GD updates in \eqref{eq:2SLS GD Update}, we will show that there exists a looped transformer architecuture that can efficiently learn the \textsf{2SLS} estimator. We emphasize here that although we can implement \textsf{2SLS} by sequentially attaching two separate GD iterates (each handling \textsf{OLS} for one stage), the overall convergence depends heavily on the convergence of the first stage estimate $\hat{\boldsymbol{\Theta}}$. Hence, significantly more number of layers are needed to ensure convergence. In addition, the advantage of looped transformer architecture cannot be fully exploited with this approach.

\begin{thm}[Implement a step of GD-\textsf{2SLS} with a transformer block]\label{thm:transformer}
	Suppose the embedded  input matrix takes the form:
	
	\begin{align}\label{eq:transformed input}
		\boldsymbol{H}^{(2l)}=\begin{bmatrix}
			\boldsymbol{z}_1 & \cdots & \boldsymbol{z}_n &\boldsymbol{z}_{n+1}\\ \boldsymbol{x}_1 & \cdots &\boldsymbol{x}_n &\boldsymbol{x}_{n+1}\\ y_1 &\cdots & y_n & 0\\ \boldsymbol{\Theta}_{:,1}^{(l)} & \cdots& \boldsymbol{\Theta}_{:,1}^{(l)} & \boldsymbol{\Theta}_{:,1}^{(l)}\\\vdots & \vdots &\vdots&\vdots \\\boldsymbol{\Theta}_{:,p}^{(l)} & \cdots &\boldsymbol{\Theta}_{:,p}^{(l)} & \boldsymbol{\Theta}_{:,p}^{(l)} \\\boldsymbol{\beta}^{(l)} &\cdots&\boldsymbol{\beta}^{(l)}&\boldsymbol{\beta}^{(l)}\\ \hat{\boldsymbol{x}}_1^{(l)} & \cdots&\hat{\boldsymbol{x}}_n^{(l)}& \hat{\boldsymbol{x}}_{n+1}^{(l)}\\
			1&\cdots &1&1\\1&\cdots &1&0
		\end{bmatrix}  \in \mathbb{R}^{D\times (n+1)}.
	\end{align}
	Given $\boldsymbol{H}^{(2l)}$, there exists a double-layer attention-only transformer block with parameters $\boldsymbol{\theta}=\boldsymbol{\theta}_{\textsf{ATTN}}^{(2l+1:2l+2)} = \{(\boldsymbol{Q}_m^{(2l+1:2l+2)}, \boldsymbol{K}_m^{(2l+1:2l+2)}, \boldsymbol{V}_m^{(2l+1:2l+2)})\}_{m\in[M^{(2l+1:2l+2)}]}\subset \mathbb{R}^{D\times D}$, where the number of heads  $M^{(2l+1)}=2p$, $M^{(2l+2)}=2(p+1)$ and embedding dimension $D=qp+3p+q+3$, that implements a \textsf{2SLS} gradient update in \eqref{eq:2SLS GD Update} with any given learning rates $\alpha,\eta$:
	\begin{align*}
		\boldsymbol{H}^{2(l+1)}=\textsf{TF}_{\boldsymbol{\theta}_{\textsf{ATTN}}^{(2l+1:2l+2)}}(\boldsymbol{H}^{(2l)})=\begin{bmatrix}
			\boldsymbol{z}_1 & \cdots & \boldsymbol{z}_n &\boldsymbol{z}_{n+1}\\ \boldsymbol{x}_1 & \cdots &\boldsymbol{x}_n &\boldsymbol{x}_{n+1}\\ y_1 &\cdots & y_n & 0\\ \boldsymbol{\Theta}_{:,1}^{(l+1)} & \cdots& \boldsymbol{\Theta}_{:,1}^{(l+1)} & \boldsymbol{\Theta}_{:,1}^{(l+1)}\\\vdots & \vdots &\vdots&\vdots \\\boldsymbol{\Theta}_{:,p}^{(l+1)} & \cdots &\boldsymbol{\Theta}_{:,p}^{(l+1)} & \boldsymbol{\Theta}_{:,p}^{(l+1)} \\\boldsymbol{\beta}^{(l+1)} &\cdots&\boldsymbol{\beta}^{(l+1)}&\boldsymbol{\beta}^{(l+1)}\\ \hat{\boldsymbol{x}}_1^{(l+1)} & \cdots&\hat{\boldsymbol{x}}_n^{(l+1)}& \hat{\boldsymbol{x}}_{n+1}^{(l+1)}\\
			1&\cdots &1&1\\1&\cdots &1&0
		\end{bmatrix}  \in \mathbb{R}^{D\times (n+1)}.
	\end{align*}
	\end{thm}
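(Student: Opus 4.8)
The plan is to exhibit the two attention layers explicitly, relying on three elementary facts about the layer in \eqref{def:h}: (i) if $\boldsymbol Q_m\boldsymbol h_i$ only reads the constant ``$1$'' row, the score $\langle\boldsymbol Q_m\boldsymbol h_i,\boldsymbol K_m\boldsymbol h_j\rangle$ equals a fixed linear functional of $\boldsymbol h_j$, so $\tfrac1n\sum_j\sigma(\mathrm{score})\,\boldsymbol V_m\boldsymbol h_j$ realizes a weighted sum over samples; (ii) a bilinear cross-term $\langle \boldsymbol A\boldsymbol h_i,\boldsymbol B\boldsymbol h_j\rangle$ can be placed inside the score via the $\boldsymbol Q_m^\top\boldsymbol K_m$ form, while a \emph{signed} scalar linear in $\boldsymbol h_j$ can be carried in the value $\boldsymbol V_m\boldsymbol h_j$ (which is unconstrained); (iii) a signed scalar needed \emph{inside} the ReLU is produced by a pair of heads using $\sigma(c)-\sigma(-c)=c$, and the query column $n{+}1$ is deleted from any sample-sum by adding a large negative multiple $-R\,(1-\mathbbm{1}[\text{col }j\le n])$ of the last (mask) row of \eqref{eq:transformed input} to the score, with $R$ chosen to dominate the score it must kill. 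Throughout we maintain the invariant built into \eqref{eq:transformed input}: the $\boldsymbol\Theta^{(l)}$- and $\boldsymbol\beta^{(l)}$-blocks are constant across columns and the $\hat{\boldsymbol x}$-block stores $\hat{\boldsymbol x}^{(l)}_i=\boldsymbol\Theta^{(l)\top}\boldsymbol z_i$; this is exactly what turns the nonlinear quantities in \eqref{eq:2SLS GD Update} into functionals linear in the stored coordinates. The construction is in the spirit of \cite{statistician} and \cite{giannou2023loopedtransformersprogrammablecomputers}; the new ingredient is the coupled $(\boldsymbol\Theta,\boldsymbol\beta)$ iteration and the auxiliary prediction row needed to keep it implementable.

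Layer $2l{+}1$ ($2p$ heads) implements the inner step \eqref{eq:InnerLoopUpdate}. Since $[\boldsymbol Z^\top(\boldsymbol Z\boldsymbol\Theta^{(l)}-\boldsymbol X)]_{:,k}=\sum_{j\le n}(\hat x^{(l)}_{j,k}-x_{j,k})\,\boldsymbol z_j$, for each $k\in[p]$ I use two heads with scores $\pm(\hat x^{(l)}_{j,k}-x_{j,k})$ (linear in $\boldsymbol h_j$ by the invariant) plus the mask term, and values $\mp\eta n\,\boldsymbol z_j$ written only into the rows holding $\boldsymbol\Theta^{(l)}_{:,k}$; by (iii) the pair adds $-\eta\sum_{j\le n}(\hat x^{(l)}_{j,k}-x_{j,k})\boldsymbol z_j$, and since the score is $i$-independent this increment is identical in every column, so the $\boldsymbol\Theta$-block stays column-constant and becomes $\boldsymbol\Theta^{(l+1)}$. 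All other rows, in particular $\hat{\boldsymbol x}^{(l)}$ and $\boldsymbol\beta^{(l)}$, pass through the residual untouched.

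Layer $2l{+}2$ ($2(p{+}1)=2p{+}2$ heads) acts on the output of layer $2l{+}1$ and does two things at once. Two heads implement the outer step \eqref{eq:OuterLoopUpdate}: by the invariant $\boldsymbol\Theta^{(l)\top}\boldsymbol Z^\top(\boldsymbol Z\boldsymbol\Theta^{(l)}\boldsymbol\beta^{(l)}-\boldsymbol Y)=\sum_{j\le n}\hat{\boldsymbol x}^{(l)}_j\big((\hat{\boldsymbol x}^{(l)}_j)^\top\boldsymbol\beta^{(l)}-y_j\big)$, and the score $r_j:=(\hat{\boldsymbol x}^{(l)}_j)^\top\boldsymbol\beta^{(l)}-y_j$ is a bilinear cross-term ($\boldsymbol h_i$ carries $\boldsymbol\beta^{(l)}$, $\boldsymbol h_j$ carries $\hat{\boldsymbol x}^{(l)}_j$) plus a linear term, so with values $\mp\alpha n\,\hat{\boldsymbol x}^{(l)}_j$ in the $\boldsymbol\beta$-rows the pair adds $-\alpha\sum_{j\le n}r_j\hat{\boldsymbol x}^{(l)}_j$, yielding $\boldsymbol\beta^{(l+1)}$; crucially this reads $\hat{\boldsymbol x}^{(l)}$, which is still present because the refresh below happens in the same layer. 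The remaining $2p$ heads refresh the prediction block to restore the invariant at level $l{+}1$. Since attention only adds, I express $\hat{\boldsymbol x}^{(l+1)}_i=\boldsymbol\Theta^{(l+1)\top}\boldsymbol z_i=\hat{\boldsymbol x}^{(l)}_i-\eta\sum_{j\le n}\langle\boldsymbol z_i,\boldsymbol z_j\rangle(\hat{\boldsymbol x}^{(l)}_j-\boldsymbol x_j)$ incrementally: for each $k$, one head has score $\langle\boldsymbol z_i,\boldsymbol z_j\rangle+C$, made nonnegative by $C\ge B_z^2$ so $\sigma$ is the identity (cleaner than sign-splitting an inner product), and value $-\eta n(\hat x^{(l)}_{j,k}-x_{j,k})\boldsymbol e_k$, while a second head with constant score and value $+\eta C n(\hat x^{(l)}_{j,k}-x_{j,k})\boldsymbol e_k$ removes the spurious $\eta C\sum_{j\le n}(\hat x^{(l)}_{j,k}-x_{j,k})$ term (both heads also carry the mask term). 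Composing the two layers, and checking that the row-blocks written by each are disjoint and leave $\boldsymbol z_i,\boldsymbol x_i,y_i$ and the two constant rows fixed, yields $\boldsymbol H^{2(l+1)}$ in exactly the stated form, so the invariant holds for the next loop.

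The bulk of the work is bookkeeping: writing down each $\boldsymbol Q_m,\boldsymbol K_m,\boldsymbol V_m$ that reads and writes the correct coordinate blocks among the $D=qp+3p+q+3$ rows of \eqref{eq:transformed input} (which is exactly the row count, so no scratch space is needed), absorbing the $1/n$ factor into the values, and tracking signs. The one genuinely delicate point is the exactness of the ReLU masking in (iii) and of the identity-on-$\sigma$ step for the refresh head: both require the dominated / inside-ReLU scores to be bounded. This is immediate for a single fixed input as in the statement, and — if one wants a single block reused around the loop — follows from Assumption~\ref{regularity assumption} together with an induction (or an explicit clipping, as in Theorem~\ref{thm:consistency}) keeping $\boldsymbol\Theta^{(l)},\boldsymbol\beta^{(l)}$, hence $\hat{\boldsymbol x}^{(l)}_j-\boldsymbol x_j$ and $r_j$, uniformly bounded.
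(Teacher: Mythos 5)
Your construction is correct and uses the same primitives as the paper's (pairs of ReLU heads realizing $\sigma(c)-\sigma(-c)=c$, the $-R(1-t_j)$ masking of the query column, values writing into designated coordinate blocks, and scores read off the column-constant parameter rows), but the work is allocated to the two layers differently. The paper's first layer does \emph{not} touch $\boldsymbol{\Theta}$: it overwrites the prediction block with $\hat{\boldsymbol{x}}_i \leftarrow \boldsymbol{\Theta}^{(l)\top}\boldsymbol{z}_i$ using a $j$-independent score (query reads $(\boldsymbol{z}_i,\hat{x}_{ik})$, key reads the constant $(\boldsymbol{\Theta}^{(l)}_{:,k},-1)$), and the second layer then performs \emph{both} gradient updates \eqref{eq:InnerLoopUpdate} and \eqref{eq:OuterLoopUpdate} against that freshly computed $\hat{\boldsymbol{x}}$. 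You instead update $\boldsymbol{\Theta}$ in the first layer and defer the $\hat{\boldsymbol{x}}$ refresh to the second, which forces two genuine departures: (i) your block is only correct under the input invariant $\hat{\boldsymbol{x}}^{(l)}_i=\boldsymbol{\Theta}^{(l)\top}\boldsymbol{z}_i$, whereas the paper's block tolerates arbitrary initialization of $\hat{\boldsymbol{x}}^{(0)}$ because it recomputes it before using it — for the looped architecture this only matters at the first loop, where you must initialize consistently (e.g.\ $\boldsymbol{\Theta}^{(0)}=\boldsymbol{0}$, $\hat{\boldsymbol{x}}^{(0)}=\boldsymbol{0}$), so it is a constraint rather than a gap; (ii) your refresh must be incremental, $\hat{\boldsymbol{x}}^{(l+1)}_i=\hat{\boldsymbol{x}}^{(l)}_i-\eta\sum_{j\le n}\langle\boldsymbol{z}_i,\boldsymbol{z}_j\rangle(\hat{\boldsymbol{x}}^{(l)}_j-\boldsymbol{x}_j)$, which puts a sample-dependent inner product inside the ReLU and requires the constant-shift-plus-correction-head device (and hence the bound $\|\boldsymbol{z}\|\le B_z$ from Assumption~\ref{regularity assumption}), while the paper's refresh needs no such care. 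Both versions land on exactly the head counts $2p$ and $2(p+1)$ and both reproduce \eqref{eq:2SLS GD Update} verbatim (your $\boldsymbol{\beta}$ step correctly reads the stale $\hat{\boldsymbol{x}}^{(l)}=\boldsymbol{Z}\boldsymbol{\Theta}^{(l)}$, matching the use of $\boldsymbol{\Theta}^{(t)}$ rather than $\boldsymbol{\Theta}^{(t+1)}$ in \eqref{eq:OuterLoopUpdate}); the paper's ordering is simply the more economical of the two.
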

	Our existence proof specifies an attention structure such that one layer updates only the first-stage estimate $\hat{\boldsymbol{x}}_i^{(l)}$ for all samples, followed by another layer to update the parameters $\boldsymbol{\Theta}^{(l)}$ and $\boldsymbol{\beta}^{(l)}$. Furthermore, as noted in the proof of Theorem \ref{thm:transformer} (ref. Appendix \ref{proof:thm:transformer}), regardless of the initial values of $\boldsymbol{\Theta}^{(l)}, \boldsymbol{\beta}^{(l)}$ and $\hat{\boldsymbol{x}}^{(l)}$, the structures of the transformer blocks remain the same. This allows us to exploit the looped transformer architecture  to significantly reduce the number of parameters and improve learning efficiency \citep{yang2024loopedtransformersbetterlearning}. 

	By cascading the transformer block $\bar{L}$ times, with Theorem $\ref{thm:2SLS GD}$, one can show that transformers are able to mimic the \textsf{2SLS} estimator with exponential convergence rate, as described in the following corollary.

\begin{coro}[Implementing GD-\textsf{2SLS} with looped transformer]\label{coro:transformer}
    For any $0<\varepsilon<1$, given learning rates $\alpha,\eta,$ and $\Lambda\in(0,1)$, as defined in~\eqref{eq:lambda}, there exists a transformer formulated as $\textsf{TF}_{\boldsymbol{\theta}}(\cdot):=\textsf{TF}_{\boldsymbol{\theta}'}(\textsf{LTF}_{\bar{\boldsymbol{\theta}},\bar{L}}(\cdot))$, which consists of an $\bar{L}$-looped transformer $\textsf{LTF}_{\bar{\boldsymbol{\theta}},\bar{L}}$ with $\bar{\boldsymbol{\theta}}=\bar{\boldsymbol{\theta}}_{\textsf{ATTN}}^{(1:2)}= \{(\bar{\boldsymbol{Q}}_m^{(1:2)}, \bar{\boldsymbol{K}}_m^{(1:2)}, \bar{\boldsymbol{V}}_m^{(1:2)})\}_{m\in[\bar{M}^{(1:2)}]}\subset \mathbb{R}^{D\times D}$, $\bar{L}=\lceil\mathcal{O}(\log_{\Lambda}(\varepsilon))\rceil$, and a final attention layer\footnote{This layer updates the prediction $\hat{y}_{n+1}:=\boldsymbol{\beta}^{(\bar{L})\top}\boldsymbol{x}_{n+1}$, which can be constructed with 2 attention heads using the same architecture as \citet[Theorem 13]{statistician}} $\boldsymbol{\theta}'=\boldsymbol{\theta}_{\textsf{ATTN}}'= \{(\boldsymbol{Q}_m', \boldsymbol{K}_m', \boldsymbol{V}_m')\}_{m\in[M']}\subset \mathbb{R}^{D\times D}$, where $\bar{M}^{(1)}=2p, \bar{M}^{(2)}=2(p+1)$, $M'=2$, such that given embedded input $\boldsymbol{H}^{(0)}$ taking the format in \eqref{eq:transformed input}, the model output satisfies: 
    \begin{align*}
        |\textsf{read}_y(\textsf{TF}_{\boldsymbol{\theta}}(\boldsymbol{H}^{(0)}))-\hat{\boldsymbol{\beta}}_{\textsf{2SLS}}^\top\boldsymbol{x}_{n+1}|\leq B_x\varepsilon,
    \end{align*}
	where $\textsf{read}_y(\cdot)$ is a function that reads the prediction $\hat{y}_{n+1}$ from the output of the transformer.
\end{coro}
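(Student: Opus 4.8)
The plan is to chain together four ingredients that are already available: the per-step implementation result (Theorem~\ref{thm:transformer}), the looped-transformer construction, the exponential convergence of GD-\textsf{2SLS} (Theorem~\ref{thm:2SLS GD}), and a standard attention-based read-out layer. \textbf{Step 1 (iterate the block).} I will fix the embedding $\boldsymbol{H}^{(0)}$ in the format~\eqref{eq:transformed input} with an arbitrary fixed initialization of the iterate rows $\boldsymbol{\Theta}^{(0)},\boldsymbol{\beta}^{(0)},\hat{\boldsymbol{x}}^{(0)}$ (say all zeros). Theorem~\ref{thm:transformer} supplies a double-layer attention-only block with parameters $\bar{\boldsymbol{\theta}}=\bar{\boldsymbol{\theta}}_{\textsf{ATTN}}^{(1:2)}$, head counts $\bar M^{(1)}=2p$, $\bar M^{(2)}=2(p+1)$, and $D=qp+3p+q+3$, that sends an input in format~\eqref{eq:transformed input} carrying $(\boldsymbol{\Theta}^{(l)},\boldsymbol{\beta}^{(l)},\hat{\boldsymbol{x}}^{(l)})$ to an output in the same format carrying the next GD-\textsf{2SLS} iterate $(\boldsymbol{\Theta}^{(l+1)},\boldsymbol{\beta}^{(l+1)},\hat{\boldsymbol{x}}^{(l+1)})$ from~\eqref{eq:2SLS GD Update} with the prescribed $\alpha,\eta$. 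Since (as emphasized after Theorem~\ref{thm:transformer}) these block parameters do not depend on $l$, composing the block $\bar L$ times is exactly $\textsf{LTF}_{\bar{\boldsymbol{\theta}},\bar L}$, and its output $\boldsymbol{H}^{(2\bar L)}$ has embedding rows containing $\boldsymbol{\beta}^{(\bar L)}$ and $\boldsymbol{x}_{n+1}$.

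\textbf{Step 2 (convergence, choosing $\bar L$).} Theorem~\ref{thm:2SLS GD}, valid under $0<\alpha<2/\sigma_{\max}^2(\boldsymbol{Z}\hat{\boldsymbol{\Theta}})$ and $0<\eta<2/\sigma_{\max}^2(\boldsymbol{Z})$, gives $\|\boldsymbol{\beta}^{(\bar L)}-\hat{\boldsymbol{\beta}}_{\textsf{2SLS}}\|\le C_0\Lambda^{\bar L}$, where $C_0$ is a finite constant determined by the data $(\boldsymbol{Z},\boldsymbol{X},\boldsymbol{Y})$ and the fixed initialization. Taking $\bar L=\lceil\log_\Lambda(\varepsilon/C_0)\rceil=\lceil\mathcal{O}(\log_\Lambda(\varepsilon))\rceil$ (using $\Lambda\in(0,1)$, so $\log_\Lambda$ is decreasing and $\log_\Lambda(\varepsilon/C_0)=\log_\Lambda\varepsilon-\log_\Lambda C_0$) forces $\|\boldsymbol{\beta}^{(\bar L)}-\hat{\boldsymbol{\beta}}_{\textsf{2SLS}}\|\le\varepsilon$.

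\textbf{Step 3 (read-out and final bound).} I will append one more attention layer $\boldsymbol{\theta}'=\boldsymbol{\theta}_{\textsf{ATTN}}'$ with $M'=2$ heads that forms the inner product $\hat y_{n+1}:=\boldsymbol{\beta}^{(\bar L)\top}\boldsymbol{x}_{n+1}$ and writes it into the prediction coordinate of the last column; this is precisely the construction of~\citet[Theorem 13]{statistician}, which applies since both $\boldsymbol{\beta}^{(\bar L)}$ and $\boldsymbol{x}_{n+1}$ already reside in the embedding produced by Step~1. With $\textsf{read}_y$ extracting this coordinate, Cauchy--Schwarz together with Assumption~\ref{regularity assumption}(ii) yields
\begin{align*}
|\textsf{read}_y(\textsf{TF}_{\boldsymbol{\theta}}(\boldsymbol{H}^{(0)}))-\hat{\boldsymbol{\beta}}_{\textsf{2SLS}}^\top\boldsymbol{x}_{n+1}|
&=|(\boldsymbol{\beta}^{(\bar L)}-\hat{\boldsymbol{\beta}}_{\textsf{2SLS}})^\top\boldsymbol{x}_{n+1}|\\
&\le\|\boldsymbol{\beta}^{(\bar L)}-\hat{\boldsymbol{\beta}}_{\textsf{2SLS}}\|\,\|\boldsymbol{x}_{n+1}\|\le B_x\varepsilon,
\end{align*}
which is the claimed bound.

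\textbf{Main obstacle.} Essentially all of the conceptual content lives in Theorems~\ref{thm:transformer} and~\ref{thm:2SLS GD}, so the remainder is bookkeeping; the one delicate point is Step~2, where the convergence constant $C_0$ couples the inner-loop estimate $\hat{\boldsymbol{\Theta}}$ with the outer loop, so I must confirm it is finite for the chosen initialization and legitimately absorbable into the $\mathcal{O}(\cdot)$ hidden in $\bar L$. I would also verify that stacking the read-out layer on top of format~\eqref{eq:transformed input} does not corrupt the rows storing $\boldsymbol{\beta}^{(\bar L)}$ and $\boldsymbol{x}_{n+1}$ — which holds because that layer writes only to the otherwise-unused prediction slot of the $(n+1)$-th column — and that the head counts and embedding dimension asserted in the statement are exactly those carried through from Theorem~\ref{thm:transformer} together with the two extra read-out heads.
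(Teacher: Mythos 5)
Your proposal is correct and follows exactly the route the paper intends: iterate the block of Theorem~\ref{thm:transformer} $\bar L$ times (possible because the block parameters are loop-invariant), invoke the exponential rate of Theorem~\ref{thm:2SLS GD} to pick $\bar L=\lceil\log_\Lambda(\varepsilon/C_0)\rceil$, append the two-head read-out layer of \citet[Theorem 13]{statistician}, and finish with Cauchy--Schwarz and $\|\boldsymbol{x}_{n+1}\|\le B_x$. Your handling of the hidden constant $C_0$ (finite for any fixed initialization and absorbed into the $\mathcal{O}(\cdot)$ in $\bar L$) is the right resolution of the only delicate point.
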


We emphasize here that our construction differs from the implementation of \citet[Theorem 4]{statistician} for \textsf{OLS} in the following aspects:
\begin{itemize}
    \item [i.] We apply the square loss as defined in \eqref{eq:IV loss objective} to learn the \textsf{2SLS} estimator, which simplifies the loss function's sum-of-ReLU representation.
    \item [ii.] The dimension of the input embedding is $D=qp+3p+q+3$, where the extra dimensions store the vectorized parameters $\boldsymbol{\Theta}^{(l)}, \boldsymbol{\beta}^{(l)}$, and the first stage estimate $\hat{\boldsymbol{x}}^{(l)}$. 
    \item [iii.] We use a two-layer attention-only transformer block $\bar{\boldsymbol{\theta}}$ to implement a \textsf{2SLS} GD update \eqref{eq:2SLS GD Update}, with the first layer to update the current first-stage estimate $\hat{\boldsymbol{x}}^{(l)}$, and the second layer to update the parameters $\boldsymbol{\Theta}^{(l)}$ and $\boldsymbol{\beta}^{(l)}$.
    \item [iv.] For each transformer block, in the first layer, we equip $2$ heads to update each dimension of $\hat{\boldsymbol{x}}_i^{(l)}\in\mathbb{R}^p$ for all samples. In the second layer, we equip 2 heads to update each column of $\boldsymbol{\Theta}^{(l)}\in\mathbb{R}^{q\times p}$ and $\boldsymbol{\beta}^{(l)}\in\mathbb{R}^p$.
\end{itemize}

\subsection{Pretraining and Excess Loss Bound}\label{sec:pretraining}

With slightly abuse of notations, we denote the (formulated) training prompt as:
\begin{align*}
    \boldsymbol{H}_{k}=\begin{bmatrix}
        \boldsymbol{z}_{1,k} & \cdots & \boldsymbol{z}_{n,k} &\boldsymbol{z}_{n+1,k}\\ \boldsymbol{x}_{1,k} & \cdots &\boldsymbol{x}_{n,k} &\boldsymbol{x}_{n+1,k}\\ 
        y_{1,k} &\cdots & y_{n,k} & 0
    \end{bmatrix}  \in \mathbb{R}^{(p+q+1)\times (n+1)}, \quad k=1,\ldots,N.
\end{align*}
Note that we denote each training prompt by the subscript $k=1,\ldots,N$, where $N$ is the total number of prompts. Each training prompt consists of $n$ labeled training samples $\{(\boldsymbol{z}_i,\boldsymbol{x}_i,y_i)\}_{i=1}^{n}$, and one unlabeled query sample $(\boldsymbol{z}_{n+1}, \boldsymbol{x}_{n+1})$. Our goal is to predict $y_{n+1}$ given the context provided by the prompt.

We introduce the following ICL data generating scheme such that endogeneity occurs in the training samples, but does not extend to the query sample. Each training prompt is generated by the in-context distribution $\boldsymbol{\mathcal{P}}$, described by Algorithm \ref{alg:data generating process}.

\begin{algorithm}[h]
    \caption{In-Context Distribution $\boldsymbol{\mathcal{P}}$} \label{alg:data generating process}
    \begin{algorithmic}[1]
    \STATE \textbf{Parameters:} Sample size n, clipping thresholds $B_z, B_x, B_y$. Task parameters $\boldsymbol{\Theta}, \boldsymbol{\beta}, \boldsymbol{\Phi}, \boldsymbol{\phi}$, $\boldsymbol{\Sigma}_{z}, \boldsymbol{\Sigma}_{u}, \boldsymbol{\Sigma}_{\omega}$, $\sigma_{\epsilon}$  from meta distribution $\boldsymbol{\pi}$.
    \STATE \textbf{Output:} Training samples $\{(\boldsymbol{z}_{i},\boldsymbol{x}_{i},y_{i})\}_{i=1}^{n}$, query sample $(\boldsymbol{z}_{n+1}, \boldsymbol{x}_{n+1}, \boldsymbol{y}_{n+1})$.
    \FOR{$i = 1, \ldots, n$}
    \STATE \textbf{Generate:} $\boldsymbol{z}_{i} \sim \mathcal{N}(0, \boldsymbol{\Sigma}_{z})$, $\boldsymbol{u}_{i} \sim \mathcal{N}(0, \boldsymbol{\Sigma}_{u})$, $\boldsymbol{\omega}_{i} \sim \mathcal{N}(0, \boldsymbol{\Sigma}_{\omega})$, $\epsilon_{i} \sim \mathcal{N}(0, \sigma_{\epsilon}^2)$.
    \STATE \textbf{Compute:} $\boldsymbol{x}_{i} = \boldsymbol{\Theta}^\top \boldsymbol{z}_{i} + \boldsymbol{\Phi}^\top \boldsymbol{u}_{i} + \boldsymbol{\omega}_{i}$.
    \STATE \textbf{Compute:} $y_{i} = \boldsymbol{\beta}^\top \boldsymbol{x}_{i} + \boldsymbol{\phi}^\top \boldsymbol{u}_{i} + \epsilon_{i}$.
    \ENDFOR
    \STATE \textbf{Generate:} $\boldsymbol{z}_{{n+1}} \sim \mathcal{N}(0, \boldsymbol{\Sigma}_{z})$, $\boldsymbol{\omega}_{{n+1}} \sim \mathcal{N}(0, \boldsymbol{\Sigma}_{\omega})$, $\epsilon_{{n+1}} \sim \mathcal{N}(0, \sigma_{\epsilon}^2)$.
    \STATE \textbf{Compute:} $\boldsymbol{x}_{{n+1}} = \boldsymbol{\Theta}^\top \boldsymbol{z}_{{n+1}} + \boldsymbol{\omega}_{{n+1}}$.
    \STATE \textbf{Compute:} $y_{{n+1}} = \boldsymbol{\beta}^\top \boldsymbol{x}_{{n+1}} + \epsilon_{{n+1}}$.
    \STATE \textbf{Clip:} $\boldsymbol{z}_{i}=\textsf{clip}_{B_z}(\boldsymbol{z}_{i})$, $\boldsymbol{x}_{i}=\textsf{clip}_{B_x}(\boldsymbol{x}_{i})$, $y_{i} = \textsf{clip}_{B_y}(y_{i})$ for $i=1,\ldots,n+1$.
\end{algorithmic}
\end{algorithm}

In Algorithm \ref{alg:data generating process}, $\boldsymbol{u}\in\mathbb{R}^{p}$ is the source of endogenous error, $\boldsymbol{w}\in\mathbb{R}^p, {\epsilon}\in\mathbb{R}$ are the exogenous errors. Note that we have $\epsilon_{1,i}=\boldsymbol{\phi}^\top \boldsymbol{u}_{i} + \epsilon_{i}$ and $\boldsymbol{\epsilon}_{2,i}= \boldsymbol{\Phi}^\top \boldsymbol{u}_{i} + \boldsymbol{\omega}_{i}$, corresponding to the notations in \eqref{eq:endogeneity}. $\boldsymbol{\Theta}\in\mathbb{R}^{q\times p}, \boldsymbol{\beta}\in\mathbb{R}^{p}, \boldsymbol{\Phi}\in\mathbb{R}^{p\times p}, \boldsymbol{\phi}\in\mathbb{R}^{p}, \boldsymbol{\Sigma}_z\in\mathbb{R}^{q\times q}, \boldsymbol{\Sigma}_u\in\mathbb{R}^{p\times p}, \boldsymbol{\Sigma}_\omega\in\mathbb{R}^{p\times p}, \sigma_{\epsilon}\in\mathbb{R}$ are task-specific parameters following meta distribution $\boldsymbol{\pi}$. $\textsf{clip}_{B}(\cdot)$ is a clipping operator to bound the norm of input within radius $B$. We say that the in-context samples $\{(\boldsymbol{z}_i,\boldsymbol{x}_i,y_i)\}_{i=1}^{n+1}$ are drawn from the in-context distribution $\boldsymbol{\mathcal{P}}$, and $\boldsymbol{\mathcal{P}}\sim\boldsymbol{\pi}$ if the task parameters $(\boldsymbol{\Theta}, \boldsymbol{\beta}, \boldsymbol{\Phi}, \boldsymbol{\phi}, \boldsymbol{\Sigma}_z, \boldsymbol{\Sigma}_u, \boldsymbol{\Sigma}_\omega, \sigma_{\epsilon})$ are sampled from $\boldsymbol{\pi}$.
One can check that Assumption \ref{iv assumption} and Assumption \ref{regularity assumption}(ii)(iii) are directly satisfied with the data generated from the in-context distribution $\boldsymbol{\mathcal{P}}$.

Following the theoretical framework of \citep{statistician}, we define the population ICL loss\footnote{All the clipping operations are only for analytical purpose. In practice, the behavior of the trained transformer is consistent even without the clipping bounds.}:   
    \begin{align} \label{def:population icl loss}
        L_{\textsf{ICL}}(\boldsymbol{\theta})=\mathbb{E}_\pi\mathbb{E}_{\mathcal{P}}[y_{n+1}-\textsf{clip}_{B_y}(\textsf{read}_y(\textsf{TF}_{\boldsymbol{\theta}}^R(\boldsymbol{H}^{(0)})))]^2,
    \end{align}
where $\boldsymbol{H}^{(0)}$ is the embedded input as defined in \eqref{eq:transformed input}, $\textsf{TF}_{\boldsymbol{\theta}}^R$ is the transformer model with parameter $\boldsymbol{\theta}$ and clipping operation $\textsf{clip}_R(\cdot)$ applied to each layer output. For simplicity, we denote $\widetilde{\textsf{TF}}_{\theta}(\boldsymbol{H}):=\textsf{clip}_{B_y}(\textsf{read}_y(\textsf{TF}_{\boldsymbol{\theta}}^R(\boldsymbol{H}^{(0)})))$.

The transformer is trained to minimize the in-context loss in \eqref{def:population icl loss} with the following empirical loss:
\begin{align} \label{def:empirical icl loss}
    \hat{L}_{\textsf{ICL}}(\boldsymbol{\theta})=\frac{1}{N}\sum_{k=1}^N(y_{n+1,k}-\widetilde{\textsf{TF}}_{\boldsymbol{\theta}}(\boldsymbol{H}_{k}))^2.
\end{align}

We consider the following constrained optimization problem:
\begin{align}\label{def:trainedtf}
    \begin{split}
        \hat{\boldsymbol{\theta}}&:=\underset{\boldsymbol{\theta}\in\boldsymbol{\vartheta}_{L,M,D',B_{\theta}}}{\arg\min}\hat{L}_{\textsf{ICL}}(\boldsymbol{\theta}),\\
        \boldsymbol{\vartheta}_{L,M,D',B_\theta}:=\{\boldsymbol{\theta}=(\boldsymbol{\theta}_{\textsf{Attn}}^{(1:L)}, \boldsymbol{\theta}_{\textsf{MLP}}^{(1:L)}&):\underset{l\in[L]}{\max}\: M^{(l)}\leq M, \:\underset{l\in[L]}{\max}\:D^{(l)}\leq D',\: \left|\!\left|\!\left|\boldsymbol{\theta}\right|\!\right|\!\right|\leq B_\theta\},
     \end{split}
\end{align}
where $\left|\!\left|\!\left|\boldsymbol{\theta}\right|\!\right|\!\right|:=\underset{l\in[L]}{\max}\{\underset{m\in[M]}{\max}\{\|\boldsymbol{Q}_m^{(l)}\|, \|\boldsymbol{K}_m^{(l)}\|\}+\sum_{m=1}^M\|\boldsymbol{V}_m^{(l)}\|+\|\boldsymbol{W}_1^{(l)}\|+\|\boldsymbol{W}_2^{(l)}\|\}$.

We now establish excess loss bound for the trained transformer model.
\begin{thm}[Excess loss bound for in-context pretrained transformer]\label{thm:excessloss}
    Suppose condition (i) in Assumption~\ref{regularity assumption} holds and the meta distribution $\boldsymbol{\pi}$ satisfies the following conditions:
    \begin{align}\label{eq:meta distribution condition}
		\mathbb{E}_\pi\left[\boldsymbol{\phi}^\top\boldsymbol{\Sigma}_u\boldsymbol{\phi}+\sigma_\epsilon^2\right]\leq& \tilde{\sigma}^2 \text{ and } \mathbb{E}_\pi\left[\sigma_\epsilon^2\right]\leq\tilde{\sigma}_\epsilon^2.
    \end{align}
    Let the in-context distribution $\boldsymbol{\mathcal{P}}\sim\boldsymbol{\pi}$ such that the samples $(\boldsymbol{z}_i, \boldsymbol{x}_i, y_i)_{i=1}^{n+1}$ are drawn independently from $\boldsymbol{\mathcal{P}}$ (ref. Algorithm \ref{alg:data generating process}). With training prompts $\boldsymbol{H}_k, k=1,\ldots,N$, under ICL loss in \eqref{def:population icl loss}, the trained transformer in \eqref{def:trainedtf} with $L=2\bar{L}+1, M=2(p+1), D=qp+3p+q+3, D'=0$ (attention-only) achieves the following excess loss with probability at least $1-\zeta$:
    \begin{align*}
        L_{\textsf{ICL}}(\hat{\boldsymbol{\theta}}) &- \mathbb{E}_{\pi}\mathbb{E}_{\mathcal{P}}\left[(y_{n+1}-\langle\boldsymbol{\beta},\boldsymbol{x}_{n+1}\rangle)^2\right] \leq  \mathcal{O}\bigg((\Lambda^\star)^{\bar{L}}\left(B_x^2\sqrt{\frac{q}{n}\left(\frac{B_\beta^2}{K}+C^2(n)\tilde{\sigma}^2\right)}+B_x\tilde{\sigma}_\epsilon\right)\nonumber\\
        &+B_x^2\left(\frac{q}{n}\left(\frac{B_\beta^2}{K}+C^2(n)\tilde{\sigma}^2\right)+\mu_{\Lambda,2}^\star\right)+B_y^2\sqrt{\frac{L^2MD^2\log(2+\max\{B_\theta,R,B_y\})+\log(1/\zeta)}{N}}\bigg),
    \end{align*}
    where $\Lambda^\star:=\underset{\alpha,\eta}{\min}\;\mathbb{E}_\pi\mathbb{E}_{\mathcal{P}}[\Lambda|\boldsymbol{H},\alpha,\eta]<1$, and $\mu_{\Lambda,2}^\star:=\mathbb{E}_\pi\mathbb{E}_{\mathcal{P}}[\Lambda^{2\bar{L}}|\boldsymbol{H},\alpha^\star,\eta^\star]$ is close to $0$.
\end{thm}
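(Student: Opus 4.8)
The plan is to decompose the excess loss into three parts using a standard bias–variance argument: (i) the \emph{approximation error} controlling how far the best-in-class transformer output is from the oracle predictor $\langle\boldsymbol{\beta},\boldsymbol{x}_{n+1}\rangle$; (ii) a \emph{generalization gap} between population and empirical ICL loss over the constrained parameter class $\boldsymbol{\vartheta}_{L,M,D',B_\theta}$; and (iii) a nonnegative optimization error which vanishes since $\hat{\boldsymbol{\theta}}$ is the exact empirical minimizer. Formally, writing $\boldsymbol{\theta}^\star$ for the looped-transformer parameter constructed in Corollary~\ref{coro:transformer} with $\bar{L}$ loops and learning rates $(\alpha^\star,\eta^\star)$ achieving $\Lambda^\star$, I would bound
\begin{align*}
L_{\textsf{ICL}}(\hat{\boldsymbol{\theta}}) - L^{\mathrm{oracle}} \le \underbrace{L_{\textsf{ICL}}(\boldsymbol{\theta}^\star) - L^{\mathrm{oracle}}}_{\text{approximation}} + \underbrace{\big(L_{\textsf{ICL}}(\hat{\boldsymbol{\theta}}) - \hat{L}_{\textsf{ICL}}(\hat{\boldsymbol{\theta}})\big) + \big(\hat{L}_{\textsf{ICL}}(\boldsymbol{\theta}^\star) - L_{\textsf{ICL}}(\boldsymbol{\theta}^\star)\big)}_{\text{generalization}},
\end{align*}
where $L^{\mathrm{oracle}}:=\mathbb{E}_\pi\mathbb{E}_{\mathcal{P}}[(y_{n+1}-\langle\boldsymbol{\beta},\boldsymbol{x}_{n+1}\rangle)^2]$, and the optimization term $\hat{L}_{\textsf{ICL}}(\hat{\boldsymbol{\theta}})-\hat{L}_{\textsf{ICL}}(\boldsymbol{\theta}^\star)\le 0$ is dropped.

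For the approximation term, the key is to write $\widetilde{\textsf{TF}}_{\boldsymbol{\theta}^\star}(\boldsymbol{H})$ as the clipped prediction $\textsf{clip}_{B_y}(\boldsymbol{\beta}^{(\bar{L})\top}\boldsymbol{x}_{n+1})$ up to an $\varepsilon$-type error from Corollary~\ref{coro:transformer}, then expand
\begin{align*}
L_{\textsf{ICL}}(\boldsymbol{\theta}^\star) - L^{\mathrm{oracle}} = \mathbb{E}_\pi\mathbb{E}_{\mathcal{P}}\big[(\langle\boldsymbol{\beta},\boldsymbol{x}_{n+1}\rangle - \widetilde{\textsf{TF}}_{\boldsymbol{\theta}^\star}(\boldsymbol{H}))^2 + 2(y_{n+1}-\langle\boldsymbol{\beta},\boldsymbol{x}_{n+1}\rangle)(\langle\boldsymbol{\beta},\boldsymbol{x}_{n+1}\rangle - \widetilde{\textsf{TF}}_{\boldsymbol{\theta}^\star}(\boldsymbol{H}))\big].
\end{align*}
Because the query noise $\epsilon_{n+1}$ is independent of the training context and the covariates $\boldsymbol{x}_{n+1}$ by Algorithm~\ref{alg:data generating process}, and $\widetilde{\textsf{TF}}_{\boldsymbol{\theta}^\star}$ depends only on the context and $\boldsymbol{x}_{n+1}$, the cross term reduces to $2\,\mathbb{E}[\epsilon_{n+1}(\langle\boldsymbol{\beta},\boldsymbol{x}_{n+1}\rangle - \widetilde{\textsf{TF}}_{\boldsymbol{\theta}^\star}(\boldsymbol{H}))]$; this is handled by Cauchy–Schwarz giving the $B_x\tilde\sigma_\epsilon$-scaled square-root term with the variance proxy $\tilde\sigma_\epsilon^2$ from \eqref{eq:meta distribution condition}. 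The squared term is then split as $\|\boldsymbol{\beta}^{(\bar{L})} - \hat{\boldsymbol{\beta}}_{\textsf{2SLS}}\|^2\|\boldsymbol{x}_{n+1}\|^2$ plus $\|\hat{\boldsymbol{\beta}}_{\textsf{2SLS}} - \boldsymbol{\beta}\|^2\|\boldsymbol{x}_{n+1}\|^2$: the first is $\mathcal{O}((\Lambda^\star)^{2\bar{L}})$ in expectation via Theorem~\ref{thm:2SLS GD} (after taking $\mathbb{E}_\pi\mathbb{E}_{\mathcal{P}}$ and recognizing $\mu_{\Lambda,2}^\star$), and the second is controlled by Theorem~\ref{thm:consistency} with $\sigma_1^2$ replaced by its meta-expectation $\tilde\sigma^2$ from \eqref{eq:meta distribution condition}, yielding the $\frac{q}{n}(\frac{B_\beta^2}{K}+C^2(n)\tilde\sigma^2)$ terms. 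Combining the square-root of this with $\|\boldsymbol{x}_{n+1}\|^2\le B_x^2$ and the cross term with $(\Lambda^\star)^{\bar{L}}$ produces the first displayed line of the bound.

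For the generalization term I would invoke a uniform-convergence argument over $\boldsymbol{\vartheta}_{L,M,D',B_\theta}$: the clipped squared loss $(y_{n+1,k}-\widetilde{\textsf{TF}}_{\boldsymbol{\theta}}(\boldsymbol{H}_k))^2$ is bounded by $\mathcal{O}(B_y^2)$ and Lipschitz in $\boldsymbol{\theta}$ (with respect to the $|\!|\!|\cdot|\!|\!|$ norm) with a constant polynomial in the layer count $L$, head count $M$, dimension $D$, and $\max\{B_\theta,R,B_y\}$, following the covering-number bounds for transformers in \cite{statistician} (their Theorem 20 / generalization appendix). A standard chaining / Dudley bound then gives the $B_y^2\sqrt{(L^2 M D^2\log(2+\max\{B_\theta,R,B_y\})+\log(1/\zeta))/N}$ term with probability $1-\zeta$. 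The \textbf{main obstacle} I anticipate is the approximation step, specifically propagating the per-task guarantees of Theorems~\ref{thm:consistency} and \ref{thm:2SLS GD} — which hold on the event that the sample size $n$ is large enough and the random design matrices are well-conditioned — through the outer expectations $\mathbb{E}_\pi\mathbb{E}_{\mathcal{P}}$, since on the low-probability bad event the estimators need not converge; here the clipping operations $\textsf{clip}_{B_\beta}$ and $\textsf{clip}_{B_y}$ are essential, as they ensure the integrand is uniformly bounded so the bad event contributes only a lower-order term, and one must verify the conditions of Assumptions~\ref{iv assumption}–\ref{regularity assumption} are inherited $\boldsymbol{\pi}$-almost surely from Algorithm~\ref{alg:data generating process}. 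A secondary subtlety is that $\Lambda$ is itself a random variable depending on the drawn design, so moving from the pathwise rate $\|\boldsymbol{\beta}^{(t)}-\hat{\boldsymbol{\beta}}_{\textsf{2SLS}}\|\le\mathcal{O}(\Lambda^t)$ to the stated bound requires carefully defining $\Lambda^\star$ and $\mu_{\Lambda,2}^\star$ as the conditional-expectation objects appearing in the theorem statement and optimizing over $(\alpha,\eta)$ inside the expectation.
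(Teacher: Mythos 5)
Your proposal is correct and follows essentially the same route as the paper: both arguments combine Corollary~\ref{coro:transformer} (the constructed transformer tracks the clipped \textsf{2SLS} predictor up to $\mathcal{O}(B_x\Lambda^{\bar L})$), Theorem~\ref{thm:consistency} (the expected \textsf{2SLS} estimation error, with $\sigma_1^2$ replaced by its meta-expectation), and the pretraining generalization bound of \citet[Theorem 20]{statistician}, assembled through an approximation-plus-generalization decomposition; the only cosmetic difference is that the paper centers the expansion of $L_{\textsf{ICL}}$ at the clipped \textsf{2SLS} predictor rather than at the oracle $\langle\boldsymbol{\beta},\boldsymbol{x}_{n+1}\rangle$. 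One caveat: the cross term $\mathbb{E}\big[\epsilon_{n+1}\big(\langle\boldsymbol{\beta},\boldsymbol{x}_{n+1}\rangle-\widetilde{\textsf{TF}}_{\boldsymbol{\theta}^\star}(\boldsymbol{H})\big)\big]$ vanishes exactly by the independence and mean-zeroness of $\epsilon_{n+1}$ that you already invoke, so you should drop it rather than apply Cauchy--Schwarz, which would introduce a spurious $\tilde{\sigma}_\epsilon B_x\sqrt{\tfrac{q}{n}(\cdot)}$ contribution that is not dominated by the stated bound when $\bar L$ is large. The $(\Lambda^\star)^{\bar L}B_x\tilde{\sigma}_\epsilon$ term in the theorem instead arises, in the paper's decomposition, from the genuinely nonzero cross term between the transformer-versus-\textsf{2SLS} error and the \textsf{2SLS} prediction residual.
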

In practical training, the number of prompts $N$ is usually large enough such that the last term of the above bound is negligible. Thus, given a meta distribution $\boldsymbol{\pi}$, the excess loss is dominated by two factors: (i) number of attention layers, and (ii) number of in-context samples. The proof of Theorem \ref{thm:excessloss} is provided in Appendix \ref{proof:thm:excessloss}. 

\subsection{Extracting the regression coefficients}
The primary goal of IV regression is to estimate the causal effect, i.e. the coefficient $\boldsymbol{\beta}$ under the stated endogeneity in \eqref{eq:endogeneity}. For \textsf{2SLS}, the estimated causal effect is given by the coefficients of the endogenous variable in the second stage regression (\ref{eq:2SLS second stage}). For transformer models, we propose a straightforward method to extract these estimated coefficients by differentiating the output with respect to each dimension of the endogenous variable. The specific approach is summarized in Algorithm \ref{alg:extract}.
\begin{algorithm}[t]
\caption{Extracting the regression coefficients}
\label{alg:extract}
\begin{algorithmic}[1]
\STATE \textbf{Input:} Trained transformer model $\textsf{TF}_{\hat{\boldsymbol{\theta}}}$, input matrix $\boldsymbol{H}$, perturbation $\Delta$.
\STATE \textbf{Output:} Estimated coefficient $\hat{\boldsymbol{\beta}}$.
\STATE \textbf{Procedure:}
\STATE Compute the output of the transformer model: $\hat{\boldsymbol{Y}} = \widetilde{\textsf{TF}}_{\hat{\boldsymbol{\theta}}}(\boldsymbol{H})$.
\FOR{each dimension $k=1,\ldots,p$}
    \STATE Copy $\boldsymbol{H}_{\Delta(k)} = \boldsymbol{H}$. Set the $k$-th dimension of $\boldsymbol{x}_{n+1}$ to be $(\boldsymbol{x}_{n+1})_k+\Delta$ for $\boldsymbol{H}_{\Delta(k)}$.
    \STATE Compute the new output value: $\hat{\boldsymbol{Y}}_{\Delta(k)} = \widetilde{\textsf{TF}}_{\hat{\boldsymbol{\theta}}}(\boldsymbol{H}_{\Delta(k)})$.
    \STATE Compute the estimated coefficient: $\hat{\beta}_k = \frac{\hat{\boldsymbol{Y}}_{\Delta(k)} - \hat{\boldsymbol{Y}}}{\Delta}$.
\ENDFOR
\end{algorithmic}
\end{algorithm}
We observe that the choice of $\Delta$ within a reasonable range does not significantly affect the estimation of the coefficients. In practice, usually a slightly larger $\Delta$ (for example $\Delta=5$) can lead to a more stable estimation, which is possibly due to the elimination of rounding errors during computation.

\section{Experiments}\label{sec:simulation}
\subsection{Experiment Setup}
We conduct a simulation study to evaluate the performance of the ICL-pretrained transformer model in handling endogeneity. We set the maximum input sample size to 51 ($n=50$ training samples and one query sample), the dimension of endogenous variable $p=5$, and the dimension of instrument $q=10$. The training prompts are generated using Algorithm \ref{alg:data generating process}, with task parameters $\boldsymbol{\Theta}, \boldsymbol{\beta}, \boldsymbol{\Phi}, \boldsymbol{\phi}$ sampled from standard Gaussian distribution, and the covariance matrices $\boldsymbol{\Sigma}_z, \boldsymbol{\Sigma}_u, \boldsymbol{\Sigma}_\omega$ set to be identity matrices. The noise level $\sigma_\epsilon$ is set to 1. We ignore all the clipping bounds in the experiment ($B_\beta, B_\Theta, B_z, B_x, B_y, B_\theta, R$ set to infinity). 

The backbone of the transformer block is initialized using GPT-2 settings, with 12 attention heads $(M=12)$, 80-dimensional embedding space $(D=80)$ and 2 layers $(L_0=2)$, following the theoretical guidelines in Theorem \ref{thm:transformer}. We employ the looped transformer architecture, consisting of 10 identical cascading transformer blocks. The transformer model is trained under the ICL loss (\ref{def:empirical icl loss}) with a batch size of $N=64$, over a total of 300,000 training steps.

We evaluate the trained transformer model on test prompts that are not included during training. As benchmarks, we compare the transformer's performance against the \textsf{2SLS} and the \textsf{OLS} estimators, which are obtained by directly fitting the training samples $\{(\boldsymbol{z}_{i},\boldsymbol{x}_i,y_i)\}_{i=1}^n$ within the text prompts. In contrast, the same trained transformer model is used without any parameter adjustments for each task. We compare the performance of these models from two aspects: the in-context prediction error (ICPE) on the query sample $y_{n+1}$, and the mean squared error (MSE) on the coefficient $\boldsymbol{\beta}$. 

\subsection{Results}\label{sec:sim results}

We first investigate the performance of the trained transformer model over endogeneity tasks with varying training sample sizes from 20 to 50. The results are shown in Figure \ref{fig:mse_vs_N}. Under endogeneity, our transformer model achieves similar performance to that of the \textsf{2SLS} estimator, with only small gaps in ICPE and MSE, both outperforming the \textsf{OLS} estimator. 

Next, we examine the performance of the trained transformer model in handling varying levels of IV strength. The strength of an instrument is measured by the correlation between the IV and the endogenous variable. To vary the IV strength, we generate prompts with $\boldsymbol{z}_{i}$ and $\boldsymbol{x}_{i}$ following different correlation levels. Specifically, in Algorithm \ref{alg:data generating process}, we adjust the IV strength by multiplying $\boldsymbol{\Theta}$ by a factor $r\in(0,2)$ when generating test prompts. The results are shown in Figure \ref{fig:mse_vs_ivstrength}.

Interestingly, the trained transformer model outperforms the \textsf{2SLS} estimator in handling weaker IVs (when IV strength $< 0.5$). This suggests that, beyond merely mimicking \textsf{2SLS}, the ICL training process may equip the transformer model with a more advanced mechanism for handling endogeneity with weak IVs than the \textsf{2SLS} estimator. At the same time, when the IV is strong, the transformer model maintains performance comparable to that of the \textsf{2SLS} estimator.

\begin{figure}[h]
    \centering
    \begin{subfigure}[b]{0.48\textwidth}
        \includegraphics[width=\textwidth]{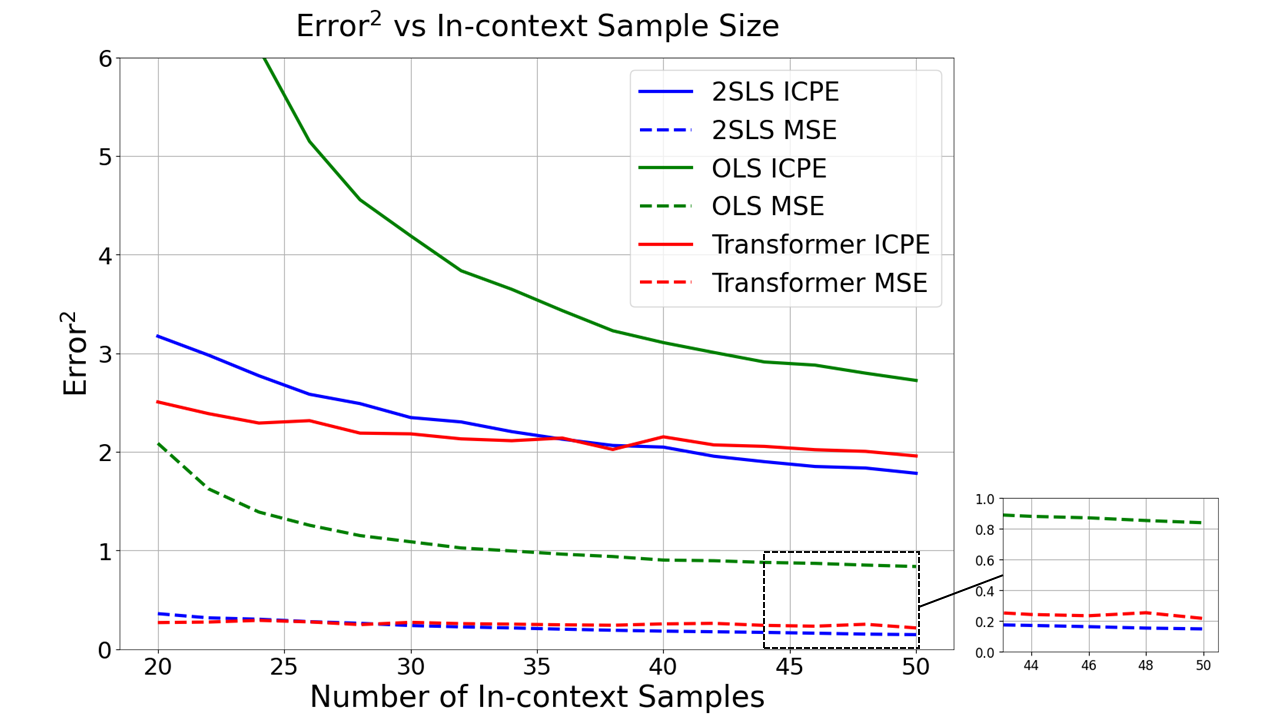}
        \caption{}
        \label{fig:mse_vs_N}
    \end{subfigure}
    \begin{subfigure}[b]{0.48\textwidth}
        \includegraphics[width=\textwidth]{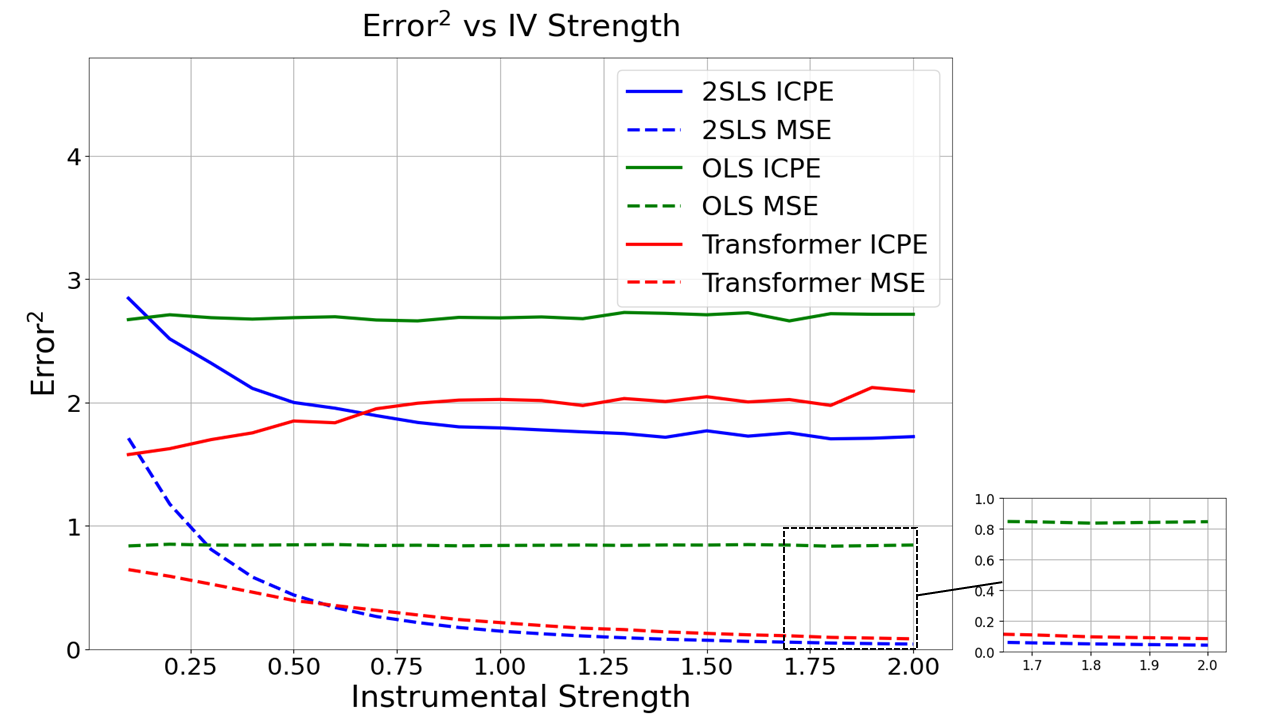}
        \caption{}
        \label{fig:mse_vs_ivstrength}
    \end{subfigure}
    \caption{The ICL performance of the trained transformer model in endogeneity tasks. We compare in-context prediction error (ICPE) and coefficient MSE versus (a) the number of in-context samples; (b) the IV strength. The curves are averaged over 500 simulations.}
\end{figure}

This finding motivates us to further examine the performance of the trained transformer model in non-standard endogeneity tasks. We consider two scenarios: (a) the IV has a quadratic effect on the endogenous variable, i.e. $\boldsymbol{x}_{i,k}=\boldsymbol{\Theta}_k^\top \boldsymbol{z}_{i,k}^2+\textsf{error}_{i,k}$ in Algorithm \ref{alg:data generating process}, and (b) the dimension of IV is not sufficient to identify the endogenous variable\footnote{For \textsf{2SLS} estimate, the actual computation uses pseudoinverse to handle rank deficiency.}, where we set $q=3$ (by zeroing out the remaining dimensions of $\boldsymbol{z}$ in test prompts) and $p=5$. 

We evaluate the same trained transformer model as before, with results presented in Figure \ref{fig:mse_vs_N_quadratic} and Figure \ref{fig:mse_vs_ivstrength_underidentify}, respectively. Once again, the trained transformer model consistently outperforms both \textsf{2SLS} and \textsf{OLS} estimators in handling these non-standard endogeneity tasks. All these results suggest that the trained transformer can be generalized effectively to a broader range of endogeneity tasks while still providing reliable in-context predictions and coefficient estimates. To further illustrate this capability, we also examine other cases including multicollinearity, complex non-linear IV, and varying endogeneity strengths, see Appendix \ref{sec:exp multicollinearity},\ref{sec:exp complex non-linear},\ref{sec:exp vary endogeneity}. We suspect that, in our pretraining scheme, although the \textsf{2SLS} estimator already achieves small excess loss, a gap remains between the \textsf{2SLS} estimator and the optimal predictor that the transformer model successfully bridges. Finally, we conclude that through ICL training, the transformer model performs at least as well as \textsf{2SLS} and appears to be a promising tool for handling endogeneity in difficult scenarios.

\begin{figure}[h]
    \centering
    \begin{subfigure}[b]{0.48\textwidth}
        \includegraphics[width=\textwidth]{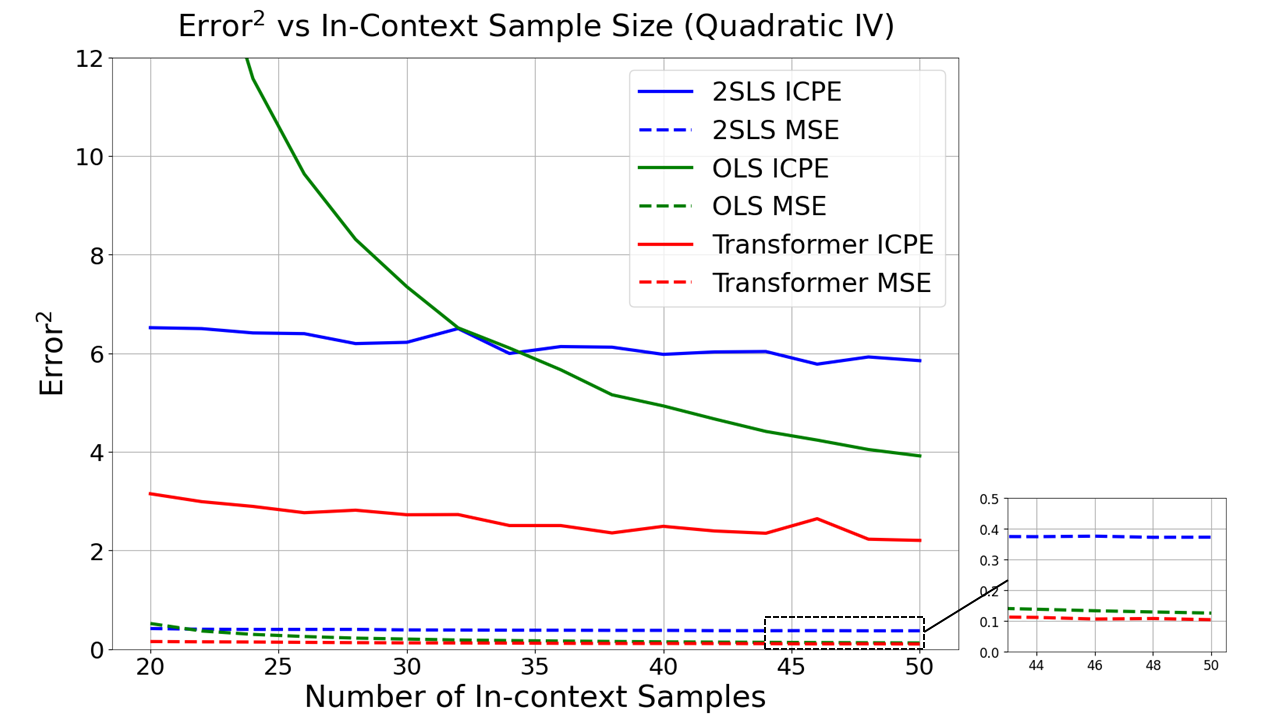}
        \caption{}
        \label{fig:mse_vs_N_quadratic}
    \end{subfigure}
    \begin{subfigure}[b]{0.48\textwidth}
        \includegraphics[width=\textwidth]{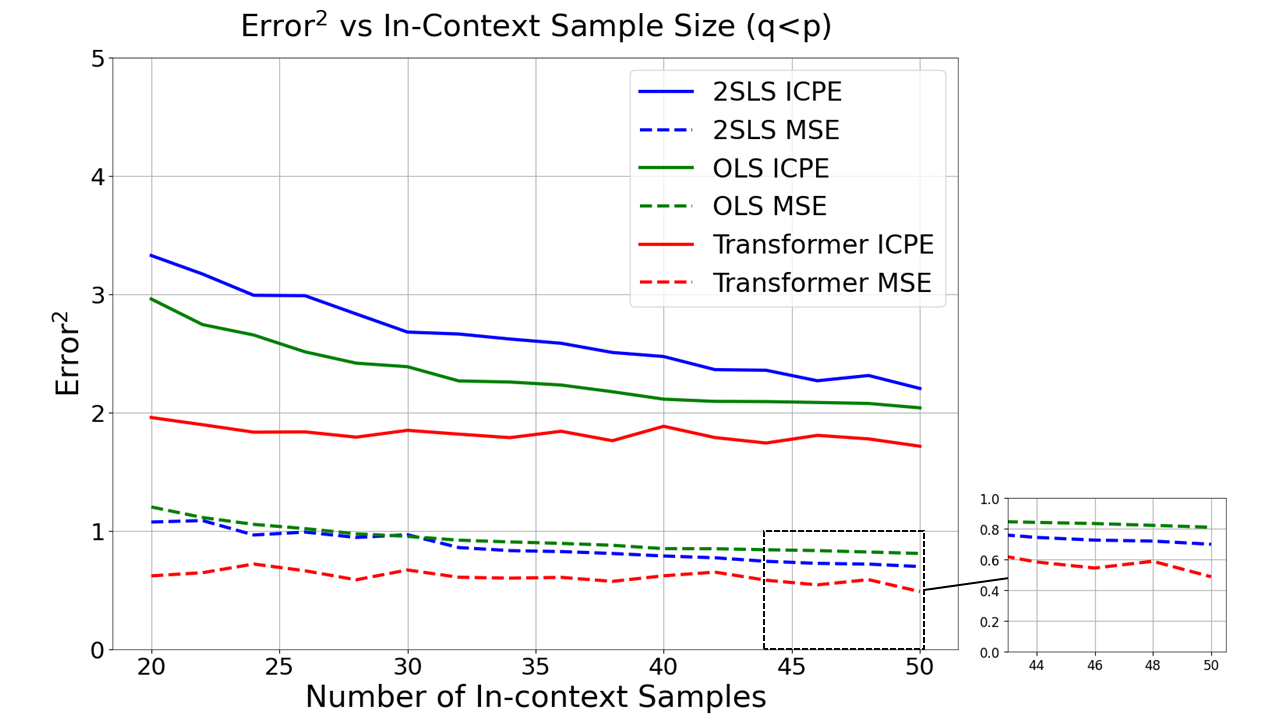}
        \caption{}
        \label{fig:mse_vs_ivstrength_underidentify}
    \end{subfigure}
    \caption{The ICL performance of the trained transformer model in non-standard endogeneity tasks: (a) The IV has quadratic effect on the endogenous variable; (b) The dimension of IV is not sufficient to identify the endogenous variable. The curves are averaged over 500 simulations.}
\end{figure}

\section{Conclusion}
This paper presents a novel perspective on the transformer model in its ability to handle endogeneity in in-context linear regression. We have theoretically shown that the transformer model exists an intrinsic structure that enables it to learn the \textsf{2SLS} algorithm through an efficient GD procedure. We have further provided a theoretical guarantee that the trained transformer model can achieve a small excess loss over the optimal loss, under our proposed ICL training scheme. Our simulation study demonstrates that the trained transformer model can achieve comparable performance to the \textsf{2SLS} estimator in handling standard endogeneity tasks. Furthermore, our investigation illustrates that it exhibits significantly better performances in handling complex scenarios such as weak instruments, non-linear IV, and underdetermined IV problems, compared to the \textsf{2SLS} estimator. These results suggest that the ICL pre-trained transformer model is a promising tool for making reliable in-context predictions and coefficient estimates under endogeneity, especially when dealing with non-standard IV problems.

\section*{Acknowledgements}
The work of H. Liang and L. Lai was supported by National Science Foundation (NSF) under grants CCF-2112504 and CCF-2232907. The work of K. Balasubramanian was supported by NSF under grants DMS-2413426 and DMS-2053918.

\bibliographystyle{unsrtnat}
\bibliography{references}  

\begin{thebibliography}{49}
\providecommand{\natexlab}[1]{#1}
\providecommand{\url}[1]{\texttt{#1}}
\expandafter\ifx\csname urlstyle\endcsname\relax
  \providecommand{\doi}[1]{doi: #1}\else
  \providecommand{\doi}{doi: \begingroup \urlstyle{rm}\Url}\fi

\bibitem[Vaswani et~al.(2017)Vaswani, Shazeer, Parmar, Uszkoreit, Jones, Gomez, Kaiser, and Polosukhin]{vaswani}
Ashish Vaswani, Noam Shazeer, Niki Parmar, Jakob Uszkoreit, Llion Jones, Aidan~N Gomez, \L~ukasz Kaiser, and Illia Polosukhin.
\newblock Attention is all you need.
\newblock In \emph{Proceedings of Neural Information Processing Systems}, Long Beach, CA, USA, December 2017.
\newblock URL \url{https://proceedings.neurips.cc/paper_files/paper/2017/file/3f5ee243547dee91fbd053c1c4a845aa-Paper.pdf}.

\bibitem[Devlin et~al.(2019)Devlin, Chang, Lee, and Toutanova]{Devlin2019BERTPO}
Jacob Devlin, Ming-Wei Chang, Kenton Lee, and Kristina Toutanova.
\newblock Bert: Pre-training of deep bidirectional transformers for language understanding.
\newblock In \emph{Proceedings of the North American Chapter of the Association for Computational Linguistics}, Minneapolis, MN, USA, June 2019.
\newblock URL \url{https://api.semanticscholar.org/CorpusID:52967399}.

\bibitem[Radford et~al.(2019)Radford, Wu, Child, Luan, Amodei, and Sutskever]{radford2019language}
Alec Radford, Jeff Wu, Rewon Child, David Luan, Dario Amodei, and Ilya Sutskever.
\newblock Language models are unsupervised multitask learners.
\newblock In \emph{OpenAI Technical Report}, 2019.
\newblock URL \url{https://api.semanticscholar.org/CorpusID:160025533}.

\bibitem[Brown et~al.(2020)Brown, Mann, Ryder, Subbiah, Kaplan, Dhariwal, Neelakantan, Shyam, Sastry, Askell, Agarwal, Herbert-Voss, Krueger, Henighan, Child, Ramesh, Ziegler, Wu, Winter, Hesse, Chen, Sigler, Litwin, Gray, Chess, Clark, Berner, McCandlish, Radford, Sutskever, and Amodei]{brown2020language}
Tom Brown, Benjamin Mann, Nick Ryder, Melanie Subbiah, Jared~D Kaplan, Prafulla Dhariwal, Arvind Neelakantan, Pranav Shyam, Girish Sastry, Amanda Askell, Sandhini Agarwal, Ariel Herbert-Voss, Gretchen Krueger, Tom Henighan, Rewon Child, Aditya Ramesh, Daniel Ziegler, Jeffrey Wu, Clemens Winter, Chris Hesse, Mark Chen, Eric Sigler, Mateusz Litwin, Scott Gray, Benjamin Chess, Jack Clark, Christopher Berner, Sam McCandlish, Alec Radford, Ilya Sutskever, and Dario Amodei.
\newblock Language models are few-shot learners.
\newblock In \emph{Proceedings of Neural Information Processing Systems}, Virtual, December 2020.
\newblock URL \url{https://arxiv.org/abs/2005.14165}.

\bibitem[Dosovitskiy et~al.(2021)Dosovitskiy, Beyer, Kolesnikov, Weissenborn, Zhai, Unterthiner, Dehghani, Minderer, Heigold, Gelly, Uszkoreit, and Houlsby]{dosovitskiy2021image}
Alexey Dosovitskiy, Lucas Beyer, Alexander Kolesnikov, Dirk Weissenborn, Xiaohua Zhai, Thomas Unterthiner, Mostafa Dehghani, Matthias Minderer, Georg Heigold, Sylvain Gelly, Jakob Uszkoreit, and Neil Houlsby.
\newblock An image is worth 16x16 words: Transformers for image recognition at scale.
\newblock In \emph{Proceedings of International Conference on Learning Representations}, Virtual, May 2021.
\newblock URL \url{https://openreview.net/forum?id=YicbFdNTTy}.

\bibitem[Carion et~al.(2020)Carion, Massa, Synnaeve, Usunier, Kirillov, and Zagoruyko]{carion2020endtoend}
Nicolas Carion, Francisco Massa, Gabriel Synnaeve, Nicolas Usunier, Alexander Kirillov, and Sergey Zagoruyko.
\newblock End-to-end object detection with transformers.
\newblock In \emph{Proceedings of European Conference on Computer Vision}, Virtual, August 2020.
\newblock URL \url{https://arxiv.org/abs/2005.12872}.

\bibitem[Lee et~al.(2022)Lee, Nachum, Yang, Lee, Freeman, Guadarrama, Fischer, Xu, Jang, Michalewski, and Mordatch]{lee2022multigame}
Kuang-Huei Lee, Ofir Nachum, Mengjiao~(Sherry) Yang, Lisa Lee, Daniel Freeman, Sergio Guadarrama, Ian Fischer, Winnie Xu, Eric Jang, Henryk Michalewski, and Igor Mordatch.
\newblock Multi-game decision transformers.
\newblock In \emph{Proceedings of Neural Information Processing Systems}, New Orleans, LA, USA, November 2022.
\newblock URL \url{https://arxiv.org/abs/2211.15196}.

\bibitem[Parisotto et~al.(2020)Parisotto, Song, Rae, Pascanu, Gulcehre, Jayakumar, Jaderberg, Kaufman, Clark, Noury, Botvinick, Heess, and Hadsell]{parisotto2019stabilizing}
Emilio Parisotto, H.~Francis Song, Jack~W. Rae, Razvan Pascanu, Caglar Gulcehre, Siddhant~M. Jayakumar, Max Jaderberg, Raphael~Lopez Kaufman, Aidan Clark, Seb Noury, Matthew~M. Botvinick, Nicolas Heess, and Raia Hadsell.
\newblock Stabilizing transformers for reinforcement learning.
\newblock In \emph{Proceedings of International Conference on Machine Learning}, Virtual, July 2020.
\newblock URL \url{https://proceedings.mlr.press/v119/parisotto20a.html}.

\bibitem[Garg et~al.(2022)Garg, Tsipras, Liang, and Valiant]{garg2023transformers}
Shivam Garg, Dimitris Tsipras, Percy~S Liang, and Gregory Valiant.
\newblock What can transformers learn in-context? a case study of simple function classes.
\newblock In \emph{Proceedings of Neural Information Processing Systems}, New Orleans, LA, USA, December 2022.
\newblock URL \url{https://arxiv.org/abs/2208.01066}.

\bibitem[Bai et~al.(2023)Bai, Chen, Wang, Xiong, and Mei]{statistician}
Yu~Bai, Fan Chen, Huan Wang, Caiming Xiong, and Song Mei.
\newblock Transformers as statisticians: Provable in-context learning with in-context algorithm selection.
\newblock In \emph{Proceedings of Neural Information Processing Systems}, New Orleans, LA, USA, December 2023.
\newblock URL \url{https://proceedings.neurips.cc/paper_files/paper/2023/file/b2e63e36c57e153b9015fece2352a9f9-Paper-Conference.pdf}.

\bibitem[Aky{\"u}rek et~al.(2023)Aky{\"u}rek, Schuurmans, Andreas, Ma, and Zhou]{akyurek2023what}
Ekin Aky{\"u}rek, Dale Schuurmans, Jacob Andreas, Tengyu Ma, and Denny Zhou.
\newblock What learning algorithm is in-context learning? investigations with linear models.
\newblock In \emph{Proceedings of International Conference on Learning Representations}, Kigali, Rwanda, May 2023.
\newblock URL \url{https://openreview.net/forum?id=0g0X4H8yN4I}.

\bibitem[Von~Oswald et~al.(2023)Von~Oswald, Niklasson, Randazzo, Sacramento, Mordvintsev, Zhmoginov, and Vladymyrov]{von2023transformers}
Johannes Von~Oswald, Eyvind Niklasson, Ettore Randazzo, Jo{\~a}o Sacramento, Alexander Mordvintsev, Andrey Zhmoginov, and Max Vladymyrov.
\newblock Transformers learn in-context by gradient descent.
\newblock In \emph{Proceedings of International Conference on Machine Learning}, Honolulu, HI, USA, July 2023.
\newblock URL \url{https://proceedings.mlr.press/v202/von-oswald23a.html}.

\bibitem[Li et~al.(2023)Li, Ildiz, Papailiopoulos, and Oymak]{li2023transformers}
Yingcong Li, Muhammed~Emrullah Ildiz, Dimitris Papailiopoulos, and Samet Oymak.
\newblock Transformers as algorithms: Generalization and stability in in-context learning.
\newblock In \emph{Proceedings of International Conference on Machine Learning}, Honolulu, HI, USA, July 2023.
\newblock URL \url{https://proceedings.mlr.press/v202/li23l.html}.

\bibitem[Fu et~al.(2023)Fu, Chen, Jia, and Sharan]{fu2023transformers}
Deqing Fu, Tian-Qi Chen, Robin Jia, and Vatsal Sharan.
\newblock Transformers learn higher-order optimization methods for in-context learning: A study with linear models.
\newblock \emph{arXiv preprint arXiv:2310.17086}, 2023.
\newblock URL \url{https://arxiv.org/abs/2310.17086}.

\bibitem[Ahn et~al.(2023)Ahn, Cheng, Daneshmand, and Sra]{ahn2024transformers}
Kwangjun Ahn, Xiang Cheng, Hadi Daneshmand, and Suvrit Sra.
\newblock Transformers learn to implement preconditioned gradient descent for in-context learning.
\newblock In \emph{Proceedings of Neural Information Processing Systems}, New Orleans, LA, USA, December 2023.
\newblock URL \url{https://arxiv.org/abs/2306.00297}.

\bibitem[Jin et~al.(2025)Jin, Balasubramanian, and Lai]{jin2025incontextlearningmixturelinear}
Yanhao Jin, Krishnakumar Balasubramanian, and Lifeng Lai.
\newblock In-context learning for mixture of linear regressions: Existence, generalization and training dynamics.
\newblock \emph{arXiv preprint arXiv:2410.14183}, 2025.
\newblock URL \url{https://arxiv.org/abs/2410.14183}.

\bibitem[Huang et~al.(2024)Huang, Cheng, and Liang]{Huang:ICML:24}
Yu~Huang, Yuan Cheng, and Yingbin Liang.
\newblock In-context convergence of transformers.
\newblock In \emph{Proceedings of International Conference on Machine Learning}, Honolulu, HI, USA, July 2024.
\newblock URL \url{https://proceedings.mlr.press/v235/huang24d.html}.

\bibitem[Zhang et~al.(2024{\natexlab{a}})Zhang, Frei, and Bartlett]{zhang2023trained}
Ruiqi Zhang, Spencer Frei, and Peter~L Bartlett.
\newblock Trained transformers learn linear models in-context.
\newblock \emph{Journal of Machine Learning Research}, 25\penalty0 (49):\penalty0 1--55, 2024{\natexlab{a}}.
\newblock URL \url{https://www.jmlr.org/papers/v25/23-1042.html}.

\bibitem[Zhang et~al.(2024{\natexlab{b}})Zhang, Wu, and Bartlett]{zhang2024incontext}
Ruiqi Zhang, Jingfeng Wu, and Peter~L Bartlett.
\newblock In-context learning of a linear transformer block: benefits of the mlp component and one-step gd initialization.
\newblock In \emph{Proceedings of Neural Information Processing Systems}, New Orleans, LA, USA, December 2024{\natexlab{b}}.
\newblock URL \url{https://arxiv.org/abs/2402.14951}.

\bibitem[Hausman(2001)]{hausman2001mismeasured}
Jerry Hausman.
\newblock Mismeasured variables in econometric analysis: problems from the right and problems from the left.
\newblock \emph{Journal of Economic perspectives}, 15\penalty0 (4):\penalty0 57--67, 2001.
\newblock URL \url{https://www.aeaweb.org/articles?id=10.1257/jep.15.4.57}.

\bibitem[Wooldridge(2015)]{wooldridge2015introductory}
J.M. Wooldridge.
\newblock \emph{Introductory Econometrics: A Modern Approach}.
\newblock Cengage Learning, 2015.
\newblock ISBN 9781473754393.
\newblock URL \url{https://books.google.com/books?id=HveHAQAACAAJ}.

\bibitem[Angrist and Pischke(2009)]{angrist2009mostly}
Joshua~D Angrist and J{\"o}rn-Steffen Pischke.
\newblock \emph{Mostly harmless econometrics: An empiricist's companion}.
\newblock Princeton University Press, 2009.
\newblock ISBN 9780691120355.
\newblock URL \url{https://press.princeton.edu/books/hardcover/9780691120355/mostly-harmless-econometrics}.

\bibitem[Greene(2018)]{greene2003econometric}
William~H. Greene.
\newblock \emph{Econometric Analysis}.
\newblock Pearson, 8th edition, 2018.
\newblock ISBN 978-0-13-446136-6.
\newblock URL \url{https://pages.stern.nyu.edu/~wgreene/Text/econometricanalysis.htm}.

\bibitem[Angrist and Krueger(2001)]{10.1257/jep.15.4.69}
Joshua~D. Angrist and Alan~B. Krueger.
\newblock Instrumental variables and the search for identification: From supply and demand to natural experiments.
\newblock \emph{Journal of Economic Perspectives}, 15\penalty0 (4):\penalty0 69–85, 2001.
\newblock URL \url{https://www.aeaweb.org/articles?id=10.1257/jep.15.4.69}.

\bibitem[Giannou et~al.(2023)Giannou, Rajput, Sohn, Lee, Lee, and Papailiopoulos]{giannou2023loopedtransformersprogrammablecomputers}
Angeliki Giannou, Shashank Rajput, Jy-Yong Sohn, Kangwook Lee, Jason~D. Lee, and Dimitris Papailiopoulos.
\newblock Looped transformers as programmable computers.
\newblock In \emph{Proceedings of International Conference on Machine Learning}, Honolulu, HI, USA, July 2023.
\newblock URL \url{https://proceedings.mlr.press/v202/giannou23a.html}.

\bibitem[Yang et~al.(2024)Yang, Lee, Nowak, and Papailiopoulos]{yang2024loopedtransformersbetterlearning}
Liu Yang, Kangwook Lee, Robert Nowak, and Dimitris Papailiopoulos.
\newblock Looped transformers are better at learning learning algorithms.
\newblock In \emph{Proceedings of International Conference on Learning Representations}, Vienna, Austria, May 2024.
\newblock URL \url{https://openreview.net/forum?id=HHbRxoDTxE}.

\bibitem[Gao et~al.(2024)Gao, Zheng, Xie, Shi, Hu, Li, Ng, Li, and Liu]{gao2024expressivepowervariantlooped}
Yihang Gao, Chuanyang Zheng, Enze Xie, Han Shi, Tianyang Hu, Yu~Li, Michael~K. Ng, Zhenguo Li, and Zhaoqiang Liu.
\newblock On the expressive power of a variant of the looped transformer.
\newblock \emph{arXiv preprint arXiv:2402.13572}, 2024.
\newblock URL \url{https://arxiv.org/abs/2402.13572}.

\bibitem[Giannou et~al.(2024)Giannou, Yang, Wang, Papailiopoulos, and Lee]{giannou2024transformersemulateincontextnewtons}
Angeliki Giannou, Liu Yang, Tianhao Wang, Dimitris Papailiopoulos, and Jason~D. Lee.
\newblock How well can transformers emulate in-context newton's method?
\newblock \emph{arXiv preprint arXiv:2403.03183}, 2024.
\newblock URL \url{https://arxiv.org/abs/2403.03183}.

\bibitem[Cheng et~al.(2024)Cheng, Chen, and Sra]{cheng2024transformersimplementfunctionalgradient}
Xiang Cheng, Yuxin Chen, and Suvrit Sra.
\newblock Transformers implement functional gradient descent to learn non-linear functions in context.
\newblock In \emph{Proceedings of International Conference on Machine Learning}, Honolulu, HI, USA, July 2024.
\newblock URL \url{https://proceedings.mlr.press/v235/cheng24a.html}.

\bibitem[Ye et~al.(2024)Ye, Yang, Siah, and Namkoong]{ye2024pre}
Naimeng Ye, Hanming Yang, Andrew Siah, and Hongseok Namkoong.
\newblock Pre-training and in-context learning is bayesian inference a la de finetti.
\newblock In \emph{ICLR 2024 Workshop on Mathematical and Empirical Understanding of Foundation Models}, Vienna, Austria, May 2024.
\newblock URL \url{https://openreview.net/forum?id=ttupfosvgx}.

\bibitem[Falck et~al.(2024)Falck, Wang, and Holmes]{falck2024context}
Fabian Falck, Ziyu Wang, and Chris Holmes.
\newblock Is in-context learning in large language models bayesian? a martingale perspective.
\newblock \emph{arXiv preprint arXiv:2406.00793}, 2024.
\newblock URL \url{https://arxiv.org/abs/2406.00793}.

\bibitem[Nichani et~al.(2024)Nichani, Damian, and Lee]{nichani2024transformerslearncausalstructure}
Eshaan Nichani, Alex Damian, and Jason~D. Lee.
\newblock How transformers learn causal structure with gradient descent.
\newblock In \emph{Proceedings of International Conference on Machine Learning}, Honolulu, HI, USA, July 2024.
\newblock URL \url{https://arxiv.org/abs/2402.14735}.

\bibitem[Goel and Bartlett(2024)]{goel2024can}
Gautam Goel and Peter Bartlett.
\newblock Can a transformer represent a kalman filter?
\newblock In \emph{Proceedings of Annual Learning for Dynamics \& Control Conference}, Oxford, UK, July 2024.
\newblock URL \url{https://proceedings.mlr.press/v242/goel24a.html}.

\bibitem[Makkuva et~al.(2024{\natexlab{a}})Makkuva, Bondaschi, Ekbote, Girish, Nagle, Kim, and Gastpar]{makkuva2024local}
Ashok~Vardhan Makkuva, Marco Bondaschi, Chanakya Ekbote, Adway Girish, Alliot Nagle, Hyeji Kim, and Michael Gastpar.
\newblock Local to global: Learning dynamics and effect of initialization for transformers.
\newblock In \emph{ICML 2024 Workshop on Theoretical Foundations of Foundation Models}, Vienna, Austria, July 2024{\natexlab{a}}.
\newblock URL \url{https://openreview.net/forum?id=OYoCJPwbfC}.

\bibitem[Makkuva et~al.(2024{\natexlab{b}})Makkuva, Bondaschi, Nagle, Girish, Kim, Jaggi, and Gastpar]{makkuva2024attention}
Ashok~Vardhan Makkuva, Marco Bondaschi, Alliot Nagle, Adway Girish, Hyeji Kim, Martin Jaggi, and Michael Gastpar.
\newblock Attention with markov: A curious case of single-layer transformers.
\newblock In \emph{ICML 2024 Workshop on Mechanistic Interpretability}, Vienna, Austria, July 2024{\natexlab{b}}.
\newblock URL \url{https://openreview.net/forum?id=xi6lie0SUr}.

\bibitem[Rajaraman et~al.(2024)Rajaraman, Bondaschi, Ramchandran, Gastpar, and Makkuva]{rajaraman2024transformers}
Nived Rajaraman, Marco Bondaschi, Kannan Ramchandran, Michael Gastpar, and Ashok~Vardhan Makkuva.
\newblock Transformers on markov data: Constant depth suffices.
\newblock \emph{arXiv preprint arXiv:2407.17686}, 2024.
\newblock URL \url{https://arxiv.org/abs/2407.17686}.

\bibitem[Edelman et~al.(2024)Edelman, Edelman, Goel, Malach, and Tsilivis]{edelman2024evolution}
Benjamin~L Edelman, Ezra Edelman, Surbhi Goel, Eran Malach, and Nikolaos Tsilivis.
\newblock The evolution of statistical induction heads: In-context learning markov chains.
\newblock In \emph{Proceedings of Neural Information Processing Systems}, Vancouver, Canada, December 2024.
\newblock URL \url{https://arxiv.org/abs/2402.11004}.

\bibitem[Singh et~al.(2019)Singh, Sahani, and Gretton]{singh2020kernelinstrumentalvariableregression}
Rahul Singh, Maneesh Sahani, and Arthur Gretton.
\newblock Kernel instrumental variable regression.
\newblock In \emph{Proceedings of Neural Information Processing Systems}, Vancouver, Canada, December 2019.
\newblock URL \url{https://papers.nips.cc/paper/8708-kernel-instrumental-variable-regression}.

\bibitem[Muandet et~al.(2020)Muandet, Mehrjou, Lee, and Raj]{muandet2020dualinstrumentalvariableregression}
Krikamol Muandet, Arash Mehrjou, Si~Kai Lee, and Anant Raj.
\newblock Dual instrumental variable regression.
\newblock In \emph{Proceedings of Neural Information Processing Systems}, Vancouver, Canada, December 2020.
\newblock URL \url{https://proceedings.neurips.cc/paper/2020/hash/1c383cd30b7c298ab50293adfecb7b18-Abstract.html}.

\bibitem[Della~Vecchia and Basu(2023)]{della2023stochastic}
Riccardo Della~Vecchia and Debabrota Basu.
\newblock Stochastic online instrumental variable regression: Regrets for endogeneity and bandit feedback.
\newblock \emph{arXiv preprint arXiv:2302.09357}, 2023.
\newblock URL \url{https://arxiv.org/abs/2302.09357}.

\bibitem[Chen et~al.(2024)Chen, Roy, Hu, and Balasubramanian]{chen2024stochasticoptimizationalgorithmsinstrumental}
Xuxing Chen, Abhishek Roy, Yifan Hu, and Krishnakumar Balasubramanian.
\newblock Stochastic optimization algorithms for instrumental variable regression with streaming data.
\newblock In \emph{Proceedings of Neural Information Processing Systems}, Vancouver, Canada, December 2024.
\newblock URL \url{https://arxiv.org/abs/2405.19463}.

\bibitem[Fonseca et~al.(2024)Fonseca, Peixoto, and Saporito]{peixoto2024nonparametric}
Yuri Fonseca, Caio Peixoto, and Yuri Saporito.
\newblock Nonparametric instrumental variable regression through stochastic approximate gradients.
\newblock In \emph{Proceedings of Neural Information Processing Systems}, Vancouver, Canada, December 2024.
\newblock URL \url{https://arxiv.org/abs/2402.05639}.

\bibitem[Stock and Watson(2011)]{stock2011introduction}
J.H. Stock and M.W. Watson.
\newblock \emph{Introduction to Econometrics}.
\newblock Addison-Wesley, 3rd edition, 2011.
\newblock ISBN 9780138009007.
\newblock URL \url{https://stock.scholars.harvard.edu/publications/introduction-econometrics-0}.

\bibitem[Tropp(2015)]{tropp2015introductionmatrixconcentrationinequalities}
Joel~A. Tropp.
\newblock \emph{An Introduction to Matrix Concentration Inequalities}.
\newblock Now Publishers Inc., 2015.
\newblock ISBN 978-1-60198-838-6.
\newblock URL \url{https://doi.org/10.1561/2200000048}.

\bibitem[Jin et~al.(2024)Jin, Balasubramanian, and Paul]{jin2024metalearning}
Yanhao Jin, Krishnakumar Balasubramanian, and Debashis Paul.
\newblock Meta-learning with generalized ridge regression: High-dimensional asymptotics, optimality and hyper-covariance estimation.
\newblock \emph{arXiv preprint arXiv:2403.19720}, 2024.
\newblock URL \url{https://arxiv.org/abs/2403.19720}.

\bibitem[Hsu et~al.(2014)Hsu, Kakade, and Zhang]{hsu2014randomdesignanalysisridge}
Daniel Hsu, Sham~M. Kakade, and Tong Zhang.
\newblock Random design analysis of ridge regression.
\newblock \emph{Foundations of Computational Mathematics}, 14\penalty0 (3):\penalty0 569--600, 2014.
\newblock URL \url{https://link.springer.com/article/10.1007/s10208-014-9192-1}.

\bibitem[Vershynin(2018)]{Vershynin_2018}
Roman Vershynin.
\newblock \emph{High-Dimensional Probability: An Introduction with Applications in Data Science}.
\newblock Cambridge University Press, 2018.
\newblock ISBN 978-1-108-41519-4.
\newblock URL \url{https://www.cambridge.org/core/books/highdimensional-probability/797C466DA29743D2C8213493BD2D2102}.

\bibitem[Angrist and Evans(1998)]{laborsupply}
Joshua~D. Angrist and William~N. Evans.
\newblock Children and their parents' labor supply: Evidence from exogenous variation in family size.
\newblock \emph{The American Economic Review}, 88\penalty0 (3):\penalty0 450--477, 1998.
\newblock ISSN 00028282.
\newblock URL \url{http://www.jstor.org/stable/116844}.

\bibitem[Westoff and Parke(1972)]{Westoff1972}
Charles~F. Westoff and Robert Parke.
\newblock \emph{Demographic and social aspects of population growth}.
\newblock Commission on Population Growth and the American Future, 1972.
\newblock URL \url{https://catalog.hathitrust.org/Record/000008850}.

\end{thebibliography}

\appendix
\section{Proofs For Section \ref{subsec:IV Regression}} 
\subsection{Proof of Theorem \ref{thm:consistency}} \label{proof:thm:consistency}
We first introduce the following lemmas that are used in the proof of Theorem \ref{thm:consistency}.
\begin{lemma}[Bernstein Inequality, from Theorem 6.1.1 in \cite{tropp2015introductionmatrixconcentrationinequalities}]\label{lemma:bernstein}
    Let $\boldsymbol{S}_1,\ldots,\boldsymbol{S}_n$ be independent, centered random matrices with common dimension $d_1\times d_2$, and assume that each one is almost surely bounded:
    \begin{align*}
        \mathbb{E}[\boldsymbol{S}_i]=\boldsymbol{0}, \mathbb{P}(\left\|\boldsymbol{S}_{i}\right\|\leq b)=1, \quad\forall i=1,\ldots,n.
    \end{align*}
    With the sum:
    \begin{align*}
        \boldsymbol{\Omega}=\sum_{i=1}^n\boldsymbol{S}_i,
    \end{align*}
    and the matrix variance statistic of the sum:
    \begin{align*}
        \nu(\boldsymbol{\Omega}):=\max\left\{\left\|\mathbb{E}(\boldsymbol{\Omega}\boldsymbol{\Omega}^\top)\right\|,\left\|\mathbb{E}(\boldsymbol{\Omega}^\top\boldsymbol{\Omega})\right\|\right\},
    \end{align*}
    then the following inequality holds:
    \begin{align*}
        \mathbb{P}\{\left\|\boldsymbol{\Omega}\right\| \geq \varepsilon\} \leq (d_1 + d_2) \cdot \exp \left( \frac{-\varepsilon^2 / 2}{\nu(\boldsymbol{\Omega}) + b\varepsilon / 3} \right) \text{ for any } \varepsilon \geq 0.
    \end{align*}
\end{lemma}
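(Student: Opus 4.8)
The plan is to prove this matrix Bernstein inequality by the matrix Laplace-transform (Chernoff) method: reduce to the Hermitian case, control the trace of the matrix moment generating function (MGF) of the sum via Lieb's concavity theorem, bound each summand's MGF with a scalar Bernstein estimate, and optimize over the Laplace parameter. First I would pass from the rectangular matrices $\boldsymbol{S}_i$ to Hermitian ones through the self-adjoint dilation
\[
\mathcal{H}(\boldsymbol{S}) = \begin{bmatrix} \boldsymbol{0} & \boldsymbol{S} \\ \boldsymbol{S}^\top & \boldsymbol{0}\end{bmatrix} \in \mathbb{R}^{(d_1+d_2)\times(d_1+d_2)}.
\]
The dilation satisfies $\lambda_{\max}(\mathcal{H}(\boldsymbol{S})) = \|\boldsymbol{S}\|$ and $\mathcal{H}(\boldsymbol{S})^2 = \mathrm{diag}(\boldsymbol{S}\boldsymbol{S}^\top,\,\boldsymbol{S}^\top\boldsymbol{S})$, so that $\mathcal{H}(\boldsymbol{\Omega}) = \sum_i \mathcal{H}(\boldsymbol{S}_i)$ is a sum of independent, centered Hermitian matrices with $\|\mathcal{H}(\boldsymbol{S}_i)\|\le b$ and $\|\mathbb{E}[\mathcal{H}(\boldsymbol{\Omega})^2]\| = \nu(\boldsymbol{\Omega})$. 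Since $\mathbb{P}\{\|\boldsymbol{\Omega}\|\ge\varepsilon\} = \mathbb{P}\{\lambda_{\max}(\mathcal{H}(\boldsymbol{\Omega}))\ge\varepsilon\}$, it suffices to prove the bound for $\lambda_{\max}$ of a centered Hermitian sum $\boldsymbol{Y}=\sum_i\boldsymbol{X}_i$ with $\boldsymbol{X}_i=\mathcal{H}(\boldsymbol{S}_i)$.

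Next I would invoke the matrix Laplace transform bound: for any $\theta>0$, Markov's inequality applied to $e^{\theta\lambda_{\max}(\boldsymbol{Y})}=\lambda_{\max}(e^{\theta\boldsymbol{Y}})\le\mathrm{tr}\,e^{\theta\boldsymbol{Y}}$ gives $\mathbb{P}\{\lambda_{\max}(\boldsymbol{Y})\ge\varepsilon\}\le e^{-\theta\varepsilon}\,\mathbb{E}[\mathrm{tr}\,e^{\theta\boldsymbol{Y}}]$. The heart of the argument — and the main obstacle — is that the $\boldsymbol{X}_i$ do not commute, so $e^{\theta\sum_i\boldsymbol{X}_i}\ne\prod_i e^{\theta\boldsymbol{X}_i}$ and the expectation of the trace MGF does not factor over the summands. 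The tool that resolves this is Lieb's concavity theorem, which yields subadditivity of the matrix cumulant generating function: $\mathbb{E}[\mathrm{tr}\,e^{\sum_i\theta\boldsymbol{X}_i}]\le\mathrm{tr}\exp\bigl(\sum_i\log\mathbb{E}[e^{\theta\boldsymbol{X}_i}]\bigr)$. This deep inequality replaces the (false in general) product structure with a sum of individual log-MGFs; everything downstream is scalar calculus.

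I would then bound each summand's MGF. Using that $x\mapsto(e^{\theta x}-\theta x-1)/x^2$ is increasing on $[-b,b]$, hence maximized at $x=b$, together with $e^{\theta b}-\theta b-1=\sum_{k\ge 2}(\theta b)^k/k!\le\frac{(\theta b)^2/2}{1-\theta b/3}$ for $0<\theta<3/b$ (from $k!\ge 2\cdot 3^{k-2}$), the transfer rule for matrix functions and centering of $\boldsymbol{X}_i$ give $\mathbb{E}[e^{\theta\boldsymbol{X}_i}]\preceq\exp\bigl(g(\theta)\,\mathbb{E}[\boldsymbol{X}_i^2]\bigr)$ with $g(\theta)=\tfrac{\theta^2/2}{1-\theta b/3}$. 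Operator monotonicity of $\log$ and monotonicity of $\mathrm{tr}\exp$ then yield $\mathrm{tr}\exp\bigl(\sum_i\log\mathbb{E}[e^{\theta\boldsymbol{X}_i}]\bigr)\le\mathrm{tr}\exp\bigl(g(\theta)\sum_i\mathbb{E}[\boldsymbol{X}_i^2]\bigr)\le(d_1+d_2)\,e^{g(\theta)\nu}$, using $\mathrm{tr}\exp(\boldsymbol{A})\le(d_1+d_2)\,e^{\lambda_{\max}(\boldsymbol{A})}$ and $\bigl\|\sum_i\mathbb{E}[\boldsymbol{X}_i^2]\bigr\|=\nu$. Combining with the Chernoff step gives $\mathbb{P}\{\lambda_{\max}(\boldsymbol{Y})\ge\varepsilon\}\le(d_1+d_2)\exp(-\theta\varepsilon+g(\theta)\nu)$, and the choice $\theta=\varepsilon/(\nu+b\varepsilon/3)\in(0,3/b)$ produces the exponent $-\tfrac{\varepsilon^2/2}{\nu+b\varepsilon/3}$. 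Transferring back through $\lambda_{\max}(\mathcal{H}(\boldsymbol{\Omega}))=\|\boldsymbol{\Omega}\|$ recovers the stated rectangular bound. I expect the only genuinely hard input to be Lieb's concavity theorem (equivalently, the subadditivity of the matrix CGF); the dilation, the scalar Bernstein estimate, and the final optimization are routine.
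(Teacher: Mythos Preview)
The paper does not prove this lemma at all: it is stated as a direct citation of Theorem~6.1.1 in Tropp's monograph and used as a black box in the proof of Theorem~\ref{thm:consistency}. Your proposal correctly outlines the standard proof from that reference (Hermitian dilation, matrix Laplace transform, subadditivity of the matrix CGF via Lieb's theorem, the scalar Bernstein MGF bound, and optimization in $\theta$), so it is both correct and more detailed than what the paper provides.
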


\begin{lemma}[Inverse Convergence, adapted from Lemma 2.1 in \cite{jin2024metalearning}]\label{lemma:jin}
	Suppose we have a random invertible matrix $\boldsymbol{\Omega}$ and invertible matrix sequence $\{\hat{\boldsymbol{\Omega}}^{(n)}\}$ such that $\hat{\boldsymbol{\Omega}}^{(n)}\convp \boldsymbol{\Omega}$. If there exists a constant  $\tilde{\lambda}>0$  such that $\sigma_{\min}(\hat{\boldsymbol{\Omega}})\geq \tilde{\lambda}$ almost surely, then it holds that:
	\begin{align*}
		(\hat{\boldsymbol{\Omega}}^{(n)})^{-1}\convp \boldsymbol{\Omega}^{-1}.
	\end{align*}
	Further, given convergence rate
	\begin{align*}
		\mathbb{P}\left\{\left\|\hat{\boldsymbol{\Omega}}^{(n)} - \boldsymbol{\Omega}\right\|\geq\varepsilon\right\} \leq\xi(n,\varepsilon),
	\end{align*} then:
	\begin{align*}
		\mathbb{P}\left\{\left\|(\hat{\boldsymbol{\Omega}}^{(n)})^{-1} - \boldsymbol{\Omega}^{-1}\right\|\geq\varepsilon\right\} \leq \xi(n,\tilde{\lambda}^2\varepsilon).
	\end{align*}
\end{lemma}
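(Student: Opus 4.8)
The plan is to reduce the whole statement to a single almost-sure deterministic inequality,
\[
\|(\hat{\boldsymbol{\Omega}}^{(n)})^{-1} - \boldsymbol{\Omega}^{-1}\| \leq \tfrac{1}{\tilde{\lambda}^2}\,\|\hat{\boldsymbol{\Omega}}^{(n)} - \boldsymbol{\Omega}\|,
\]
from which both the convergence-in-probability claim and the explicit rate follow with essentially no extra work. The two ingredients I need are uniform operator-norm control of the two inverses and the resolvent identity.

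First I would establish the operator-norm bounds $\|(\hat{\boldsymbol{\Omega}}^{(n)})^{-1}\|\leq 1/\tilde{\lambda}$ and $\|\boldsymbol{\Omega}^{-1}\|\leq 1/\tilde{\lambda}$. The first is immediate from the hypothesis, since $\|(\hat{\boldsymbol{\Omega}}^{(n)})^{-1}\| = 1/\sigma_{\min}(\hat{\boldsymbol{\Omega}}^{(n)}) \leq 1/\tilde{\lambda}$ almost surely. The second requires transferring the uniform lower bound on $\sigma_{\min}(\hat{\boldsymbol{\Omega}}^{(n)})$ to the limiting matrix $\boldsymbol{\Omega}$, and this is the one genuinely delicate point of the argument. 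I would use that singular values are $1$-Lipschitz in the spectral norm (Weyl's inequality), so that $\sigma_{\min}(\boldsymbol{\Omega}) \geq \sigma_{\min}(\hat{\boldsymbol{\Omega}}^{(n)}) - \|\hat{\boldsymbol{\Omega}}^{(n)} - \boldsymbol{\Omega}\| \geq \tilde{\lambda} - \|\hat{\boldsymbol{\Omega}}^{(n)} - \boldsymbol{\Omega}\|$ almost surely. For any fixed $\delta>0$ the event $\{\sigma_{\min}(\boldsymbol{\Omega}) < \tilde{\lambda}-\delta\}$ is contained in $\{\|\hat{\boldsymbol{\Omega}}^{(n)} - \boldsymbol{\Omega}\| > \delta\}$, whose probability tends to $0$ by $\hat{\boldsymbol{\Omega}}^{(n)}\convp\boldsymbol{\Omega}$; since $\sigma_{\min}(\boldsymbol{\Omega})$ does not depend on $n$, its probability must be exactly $0$ for every $\delta>0$, giving $\sigma_{\min}(\boldsymbol{\Omega}) \geq \tilde{\lambda}$ almost surely and hence $\|\boldsymbol{\Omega}^{-1}\|\leq 1/\tilde{\lambda}$.

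With both bounds in hand I would apply the resolvent identity $\boldsymbol{A}^{-1} - \boldsymbol{B}^{-1} = \boldsymbol{A}^{-1}(\boldsymbol{B}-\boldsymbol{A})\boldsymbol{B}^{-1}$ with $\boldsymbol{A}=\hat{\boldsymbol{\Omega}}^{(n)}$ and $\boldsymbol{B}=\boldsymbol{\Omega}$, then take spectral norms and use submultiplicativity together with the two operator-norm bounds to obtain the displayed deterministic inequality. The convergence-in-probability claim is then immediate: since $\|\hat{\boldsymbol{\Omega}}^{(n)} - \boldsymbol{\Omega}\|\convp 0$ and the left side is dominated by $\tilde{\lambda}^{-2}$ times this quantity, we get $\|(\hat{\boldsymbol{\Omega}}^{(n)})^{-1} - \boldsymbol{\Omega}^{-1}\|\convp 0$, i.e. $(\hat{\boldsymbol{\Omega}}^{(n)})^{-1}\convp\boldsymbol{\Omega}^{-1}$. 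For the rate, I would simply chain the deterministic inequality with the tail hypothesis,
\[
\mathbb{P}\left\{\|(\hat{\boldsymbol{\Omega}}^{(n)})^{-1} - \boldsymbol{\Omega}^{-1}\|\geq\varepsilon\right\} \leq \mathbb{P}\left\{\|\hat{\boldsymbol{\Omega}}^{(n)} - \boldsymbol{\Omega}\|\geq\tilde{\lambda}^2\varepsilon\right\} \leq \xi(n,\tilde{\lambda}^2\varepsilon),
\]
which is exactly the asserted bound. Apart from the limit-transfer step for $\sigma_{\min}(\boldsymbol{\Omega})$, every step is a routine matrix-norm manipulation, so I expect no serious obstacle.
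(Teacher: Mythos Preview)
Your proposal is correct and follows essentially the same approach as the paper: the resolvent identity $(\hat{\boldsymbol{\Omega}}^{(n)})^{-1} - \boldsymbol{\Omega}^{-1} = (\hat{\boldsymbol{\Omega}}^{(n)})^{-1}(\boldsymbol{\Omega} - \hat{\boldsymbol{\Omega}}^{(n)})\boldsymbol{\Omega}^{-1}$, submultiplicativity, and the two $1/\tilde{\lambda}$ bounds on the inverses to arrive at the deterministic Lipschitz inequality and then the tail bound. You are in fact more careful than the paper, which uses $\|\boldsymbol{\Omega}^{-1}\|\leq 1/\tilde{\lambda}$ without justification; your Weyl-plus-limit argument supplying $\sigma_{\min}(\boldsymbol{\Omega})\geq\tilde{\lambda}$ fills that gap cleanly.
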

\begin{proof}
    We have the following decomposition:
    \begin{align*}
        (\hat{\boldsymbol{\Omega}}^{(n)})^{-1} - \boldsymbol{\Omega}^{-1} = (\hat{\boldsymbol{\Omega}}^{(n)})^{-1}(\boldsymbol{\Omega} - \hat{\boldsymbol{\Omega}}^{(n)})\boldsymbol{\Omega}^{-1}.
    \end{align*}
    It follows that:
    \begin{align*}
        \left\|(\hat{\boldsymbol{\Omega}}^{(n)})^{-1}- \boldsymbol{\Omega}^{-1}\right\| &\leq \left\|(\hat{\boldsymbol{\Omega}}^{(n)})^{-1}\right\|\left\|\boldsymbol{\Omega} - \hat{\boldsymbol{\Omega}}^{(n)}\right\|\left\|\boldsymbol{\Omega}^{-1}\right\|\\
        &\leq \frac{1}{\tilde{\lambda}^2}\left\|\boldsymbol{\Omega} - \hat{\boldsymbol{\Omega}}^{(n)}\right\|.
    \end{align*}
    Then
    \begin{align*}
        \begin{split}
            \mathbb{P}\left\{\left\|(\hat{\boldsymbol{\Omega}}^{(n)})^{-1} - \boldsymbol{\Omega}^{-1}\right\|\geq\varepsilon\right\} &\leq \mathbb{P}\left\{\frac{1}{\tilde{\lambda}^2}\left\|\boldsymbol{\Omega} - \hat{\boldsymbol{\Omega}}^{(n)}\right\|\geq\varepsilon\right\}\\
            &\leq \xi(n,\tilde{\lambda}^2\varepsilon).
        \end{split}
    \end{align*}
\end{proof}

\begin{lemma}[Product Convergence]\label{lemma:productconvergence}
    Let $\{\hat{\boldsymbol{\Omega}}_1^{(n)}\}, \{\hat{\boldsymbol{\Omega}}_2^{(n)}\},\ldots,\{\hat{\boldsymbol{\Omega}}_K^{(n)}\}$ be $K$ sequences of matrices such that $\hat{\boldsymbol{\Omega}}_1^{(n)} \convp \boldsymbol{\Omega}_1$, $\hat{\boldsymbol{\Omega}}_2^{(n)} \convp \boldsymbol{\Omega}_2, \ldots, \hat{\boldsymbol{\Omega}}_K^{(n)}\convp\boldsymbol{\Omega}_K$, where each $\|\hat{\boldsymbol{\Omega}}_k^{(n)}\|$ is almost surely bounded for every $k=1,\ldots,K$. If the dimensions match, then it holds that:
    \begin{align*}
        \hat{\boldsymbol{\Omega}}_1^{(n)}\hat{\boldsymbol{\Omega}}_2^{(n)}\cdots\hat{\boldsymbol{\Omega}}_K^{(n)} \convp \boldsymbol{\Omega}_1\boldsymbol{\Omega}_2\cdots\boldsymbol{\Omega}_K.
    \end{align*} 
    
    Further, given convergence rates:
    \begin{align*}
        \begin{split}
            \mathbb{P}&\left\{\left\|\hat{\boldsymbol{\Omega}}_1^{(n)} - \boldsymbol{\Omega}_1\right\| \geq \varepsilon\right\} \leq \xi_1(n, \varepsilon), \\
            \mathbb{P}&\left\{\left\|\hat{\boldsymbol{\Omega}}_2^{(n)} - \boldsymbol{\Omega}_2\right\| \geq \varepsilon\right\} \leq \xi_2(n, \varepsilon), \\
            &\quad\quad\quad\quad\quad\quad\quad\quad\vdots\\
            \mathbb{P}&\left\{\left\|\hat{\boldsymbol{\Omega}}_K^{(n)} - \boldsymbol{\Omega}_K\right\| \geq \varepsilon\right\} \leq \xi_K(n, \varepsilon),
        \end{split}
    \end{align*}
    then it holds that:
    \begin{align}\label{eq:productconvergence}
        \mathbb{P}\left\{\left\|\hat{\boldsymbol{\Omega}}_1^{(n)}\hat{\boldsymbol{\Omega}}_2^{(n)}\cdots\hat{\boldsymbol{\Omega}}_K^{(n)} - \boldsymbol{\Omega}_1\boldsymbol{\Omega}_2\cdots\boldsymbol{\Omega}_K\right\|\geq \varepsilon\right\} &\leq \sum_{i=1}^K \xi_i\left(n, \frac{\varepsilon}{K\prod_{k\neq i}^K M_k}\right),
    \end{align}
    where $M_k$ is an upper bound such that $\|\hat{\boldsymbol{\Omega}}_k^{(n)}\|\leq M_k$ almost surely, $\forall k=1,\ldots, K$.
\end{lemma}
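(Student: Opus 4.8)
The plan is to prove the product convergence lemma by induction on $K$, using a telescoping decomposition at each step. The base case $K=1$ is immediate. For the inductive step, write $\boldsymbol{A}^{(n)} := \hat{\boldsymbol{\Omega}}_1^{(n)}\cdots\hat{\boldsymbol{\Omega}}_{K-1}^{(n)}$ and $\boldsymbol{A} := \boldsymbol{\Omega}_1\cdots\boldsymbol{\Omega}_{K-1}$, so that the target product is $\boldsymbol{A}^{(n)}\hat{\boldsymbol{\Omega}}_K^{(n)}$ and the limit is $\boldsymbol{A}\boldsymbol{\Omega}_K$. I would use the telescoping identity
\begin{align*}
    \boldsymbol{A}^{(n)}\hat{\boldsymbol{\Omega}}_K^{(n)} - \boldsymbol{A}\boldsymbol{\Omega}_K = (\boldsymbol{A}^{(n)} - \boldsymbol{A})\hat{\boldsymbol{\Omega}}_K^{(n)} + \boldsymbol{A}(\hat{\boldsymbol{\Omega}}_K^{(n)} - \boldsymbol{\Omega}_K),
\end{align*}
and bound the spectral norm by the triangle inequality and submultiplicativity:
\begin{align*}
    \|\boldsymbol{A}^{(n)}\hat{\boldsymbol{\Omega}}_K^{(n)} - \boldsymbol{A}\boldsymbol{\Omega}_K\| \leq \|\boldsymbol{A}^{(n)} - \boldsymbol{A}\|\,\|\hat{\boldsymbol{\Omega}}_K^{(n)}\| + \|\boldsymbol{A}\|\,\|\hat{\boldsymbol{\Omega}}_K^{(n)} - \boldsymbol{\Omega}_K\|.
\end{align*}
Since $\|\hat{\boldsymbol{\Omega}}_K^{(n)}\| \leq M_K$ and $\|\boldsymbol{A}\| \leq \prod_{k=1}^{K-1} M_k$ almost surely (the limit inherits the bound), convergence in probability of each factor then gives $\boldsymbol{A}^{(n)}\hat{\boldsymbol{\Omega}}_K^{(n)} \convp \boldsymbol{A}\boldsymbol{\Omega}_K$, which with the inductive hypothesis closes the qualitative claim.

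For the quantitative rate \eqref{eq:productconvergence}, I would again proceed by induction, carefully tracking the constants. Let $\tilde{\xi}_{K-1}(n,\varepsilon)$ denote the rate bound already established for $\|\boldsymbol{A}^{(n)} - \boldsymbol{A}\|$ by the inductive hypothesis, namely $\tilde{\xi}_{K-1}(n,\varepsilon) = \sum_{i=1}^{K-1}\xi_i\big(n, \frac{\varepsilon}{(K-1)\prod_{k\neq i, k\leq K-1} M_k}\big)$. Using the displayed bound above, the event $\{\|\boldsymbol{A}^{(n)}\hat{\boldsymbol{\Omega}}_K^{(n)} - \boldsymbol{A}\boldsymbol{\Omega}_K\| \geq \varepsilon\}$ is contained in the union $\{\|\boldsymbol{A}^{(n)} - \boldsymbol{A}\| \geq \frac{\varepsilon}{2M_K}\} \cup \{\|\hat{\boldsymbol{\Omega}}_K^{(n)} - \boldsymbol{\Omega}_K\| \geq \frac{\varepsilon}{2\prod_{k\leq K-1}M_k}\}$, so a union bound gives a rate of $\tilde{\xi}_{K-1}(n, \frac{\varepsilon}{2M_K}) + \xi_K(n, \frac{\varepsilon}{2\prod_{k\leq K-1}M_k})$. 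The remaining work is purely bookkeeping: one checks that splitting $\varepsilon$ as $\varepsilon/2$ versus $\varepsilon/2$ at each of the $K-1$ inductive stages, and absorbing the accumulated factors of $2$ and the $M_k$'s, yields something no larger than the clean symmetric bound $\sum_{i=1}^K \xi_i\big(n, \frac{\varepsilon}{K\prod_{k\neq i} M_k}\big)$ stated in the lemma — here one uses that $K \geq 2^{?}$-type slack is generous enough, or more simply one can redo the split as $\varepsilon \mapsto (\varepsilon/K, \varepsilon(K-1)/K)$-style allocations so the denominators come out exactly as claimed. Since $\xi_i$ is a tail bound (decreasing in its second argument), any such crude allocation that respects the stated denominators is valid.

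I do not anticipate a genuine obstacle here; the lemma is a routine perturbation estimate. The only mildly delicate point is ensuring the constant in the final bound matches the stated form \eqref{eq:productconvergence} rather than merely being $\mathcal{O}$ of it — this requires choosing the $\varepsilon$-allocation in the union bounds so that each $\xi_i$ ends up evaluated at exactly $\varepsilon/(K\prod_{k\neq i}M_k)$. One clean way: at the top level, split as $\varepsilon = \frac{\varepsilon}{K}\cdot(K-1) \,\text{[for the }\boldsymbol{A}^{(n)}\text{ part]} + \frac{\varepsilon}{K}\,\text{[for the }\hat{\boldsymbol{\Omega}}_K^{(n)}\text{ part]}$, apply submultiplicativity with the $M_k$ bounds, and then invoke the inductive hypothesis on the $(K-1)$-fold product with budget $\frac{(K-1)\varepsilon}{K\,M_K}$; unwinding the recursion shows the $i$-th term is exactly $\xi_i(n, \varepsilon/(K\prod_{k\neq i}M_k))$. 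Also worth stating explicitly at the outset: the limit matrices satisfy $\|\boldsymbol{\Omega}_k\| \leq M_k$, which follows because $\|\cdot\|$ is continuous and convergence in probability implies existence of an a.s.-convergent subsequence along which the uniform bound passes to the limit.
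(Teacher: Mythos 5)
Your proposal is correct and follows essentially the same route as the paper: induction on $K$ with a telescoping decomposition, submultiplicativity with the a.s.\ bounds $M_k$, and a union bound using the $\bigl(\tfrac{K-1}{K}\varepsilon,\tfrac{1}{K}\varepsilon\bigr)$ allocation, which is exactly the split the paper uses to make the denominators come out as $K\prod_{k\neq i}M_k$. Note that the first option you float --- repeated $\varepsilon/2$ splits --- would accumulate factors of $2^{K-1}>K$ and would \emph{not} be dominated by the stated bound, but your final allocation correctly discards that route, and your explicit remark that the limit matrices inherit the bound $\|\boldsymbol{\Omega}_k\|\leq M_k$ is a point the paper uses implicitly.
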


\begin{proof}
    We begin by showing the case of $K=2$. By the triangle inequality, we have:
    \begin{align*}
        \begin{aligned}
			\left\|\hat{\boldsymbol{\Omega}}_1^{(n)}\hat{\boldsymbol{\Omega}}_2^{(n)} - \boldsymbol{\Omega}_1\boldsymbol{\Omega}_2\right\| &\leq \left\|\hat{\boldsymbol{\Omega}}_1^{(n)}\hat{\boldsymbol{\Omega}}_2^{(n)} - \boldsymbol{\Omega}_1\hat{\boldsymbol{\Omega}}_2^{(n)}\right\| + \left\|\boldsymbol{\Omega}_1\hat{\boldsymbol{\Omega}}_2^{(n)} - \boldsymbol{\Omega}_1\boldsymbol{\Omega}_2\right\|\\
            &\leq \left\|\hat{\boldsymbol{\Omega}}_2^{(n)}\right\|\left\|\hat{\boldsymbol{\Omega}}_1^{(n)} - \boldsymbol{\Omega}_1\right\| + \left\|\hat{\boldsymbol{\Omega}}_2^{(n)} - \boldsymbol{\Omega}_2\right\|\left\|\boldsymbol{\Omega}_1\right\|\\
            &\leq M_2\left\|\hat{\boldsymbol{\Omega}}_1^{(n)} - \boldsymbol{\Omega}_1\right\| + M_1\left\|\hat{\boldsymbol{\Omega}}_2^{(n)} - \boldsymbol{\Omega}_2\right\|.
        \end{aligned}
    \end{align*}
	Using the union bound, we have:
    \begin{align*}
        \begin{split}
            &\mathbb{P}\left\{\left\|\hat{\boldsymbol{\Omega}}_1^{(n)}\hat{\boldsymbol{\Omega}}_2^{(n)} - \boldsymbol{\Omega}_1\boldsymbol{\Omega}_2\right\|\geq \varepsilon\right\}\\
            \leq & \mathbb{P}\left\{M_2\left\|\hat{\boldsymbol{\Omega}}_1^{(n)} - \boldsymbol{\Omega}_1\right\| + M_1\left\|\hat{\boldsymbol{\Omega}}_2^{(n)} - \boldsymbol{\Omega}_2\right\|\geq\varepsilon\right\}\\
            \leq& \mathbb{P}\left\{M_2\left\|\hat{\boldsymbol{\Omega}}_1^{(n)} - \boldsymbol{\Omega}_1\right\|\geq\varepsilon/2\right\}+\mathbb{P}\left\{M_1\left\|\hat{\boldsymbol{\Omega}}_2^{(n)} - \boldsymbol{\Omega}_2\right\|\geq\varepsilon/2\right\} \\
            \leq &\xi_1\left(n, \frac{\varepsilon}{2M_2}\right) + \xi_2\left(n, \frac{\varepsilon}{2M_1}\right).
        \end{split}
    \end{align*}
    For any $K>2$, suppose the statement (\ref{eq:productconvergence}) holds for $k=2,\ldots,K-1$. Observe that:
    \begin{align}\label{eq:productconvergence proof} 
        \begin{split}
            &\left\|\hat{\boldsymbol{\Omega}}_1^{(n)}\hat{\boldsymbol{\Omega}}_2^{(n)}\cdots\hat{\boldsymbol{\Omega}}_K^{(n)} - \boldsymbol{\Omega}_1\boldsymbol{\Omega}_2\cdots\boldsymbol{\Omega}_K\right\|\\
            &\leq \left\|\hat{\boldsymbol{\Omega}}_1^{(n)}\hat{\boldsymbol{\Omega}}_2^{(n)}\cdots\hat{\boldsymbol{\Omega}}_K^{(n)} - \boldsymbol{\Omega}_1\boldsymbol{\Omega}_2\cdots\boldsymbol{\Omega}_{K-1}\hat{\boldsymbol{\Omega}}_K^{(n)}\right\|+\left\|\boldsymbol{\Omega}_1\boldsymbol{\Omega}_2\cdots\boldsymbol{\Omega}_{K-1}\hat{\boldsymbol{\Omega}}_K^{(n)} - \boldsymbol{\Omega}_1\boldsymbol{\Omega}_2\cdots\boldsymbol{\Omega}_K\right\|\\
            &\leq M_K\left\|\hat{\boldsymbol{\Omega}}_1^{(n)}\hat{\boldsymbol{\Omega}}_2^{(n)}\cdots\hat{\boldsymbol{\Omega}}_{K-1}^{(n)} - \boldsymbol{\Omega}_1\boldsymbol{\Omega}_2\cdots\boldsymbol{\Omega}_{K-1}\right\| + \prod_{k=1}^{K-1}M_k\left\|\hat{\boldsymbol{\Omega}}_K^{(n)} - \boldsymbol{\Omega}_K\right\|.\\
        \end{split}
    \end{align}
    Then it follows that:
    \begin{align*}
        \begin{split}
            &\mathbb{P}\left\{\left\|\hat{\boldsymbol{\Omega}}_1^{(n)}\hat{\boldsymbol{\Omega}}_2^{(n)}\cdots\hat{\boldsymbol{\Omega}}_K^{(n)} - \boldsymbol{\Omega}_1\boldsymbol{\Omega}_2\cdots\boldsymbol{\Omega}_K\right\|\geq \varepsilon\right\} \\
            &\leq \mathbb{P}\left\{ M_K\left\|\hat{\boldsymbol{\Omega}}_1^{(n)}\hat{\boldsymbol{\Omega}}_2^{(n)}\cdots\hat{\boldsymbol{\Omega}}_{K-1}^{(n)} - \boldsymbol{\Omega}_1\boldsymbol{\Omega}_2\cdots\boldsymbol{\Omega}_{K-1}\right\| + \prod_{k=1}^{K-1}M_k\left\|\hat{\boldsymbol{\Omega}}_K^{(n)} - \boldsymbol{\Omega}_K\right\|\geq\varepsilon\right\}\\
            &\leq \mathbb{P}\left\{M_K\left\|\hat{\boldsymbol{\Omega}}_1^{(n)}\hat{\boldsymbol{\Omega}}_2^{(n)}\cdots\hat{\boldsymbol{\Omega}}_{K-1}^{(n)} - \boldsymbol{\Omega}_1\boldsymbol{\Omega}_2\cdots\boldsymbol{\Omega}_{K-1}\right\|\geq \frac{K-1}{K}\varepsilon\right\} \\
            &\qquad+ \mathbb{P}\left\{\prod_{k=1}^{K-1}M_k\left\|\hat{\boldsymbol{\Omega}}_K^{(n)} - \boldsymbol{\Omega}_K\right\|\geq \frac{1}{K}\varepsilon\right\}\\
            &\leq\sum_{i=1}^{K-1} \xi_i\left(n, \frac{\varepsilon}{KM_K\prod_{k\neq i}^{K-1} M_k}\right)+\xi_K\left(n, \frac{\varepsilon}{K\prod_{k=1}^{K-1} M_k}\right)\\
            &=\sum_{i=1}^K \xi_i\left(n, \frac{\varepsilon}{K\prod_{k\neq i}^K M_k}\right).
        \end{split}
    \end{align*}
	Thus, by induction, the proof is complete.
\end{proof}

\begin{rmk}\label{rmk:productconvergence}
    In Lemma \ref{lemma:productconvergence}, consider the special case where $\boldsymbol{\Omega}_{1}=\boldsymbol{0}$. Then the inequality (\ref{eq:productconvergence proof}) can be simplified as follows:
    \begin{align*}
        \begin{split}
            \left\|\hat{\boldsymbol{\Omega}}_1^{(n)}\hat{\boldsymbol{\Omega}}_2^{(n)}\cdots\hat{\boldsymbol{\Omega}}_K^{(n)}-\boldsymbol{0}\right\| &\leq \prod_{k=2}^KM_k\left\|\hat{\boldsymbol{\Omega}}_1^{(n)}\right\|.
        \end{split}
    \end{align*}
    And we have the following simplified form:
    \begin{align*}
        \begin{split}
            \mathbb{P}\left\{\left\|\hat{\boldsymbol{\Omega}}_1^{(n)}\hat{\boldsymbol{\Omega}}_2^{(n)}\cdots\hat{\boldsymbol{\Omega}}_K^{(n)}-\boldsymbol{0}\right\|\geq \varepsilon\right\} &\leq \mathbb{P}\left\{\prod_{k=2}^KM_k\left\|\hat{\boldsymbol{\Omega}}_1^{(n)}\right\|\geq\varepsilon\right\}\\
            &\leq \xi_1\left(n, \frac{\varepsilon}{\prod_{k=2}^KM_k}\right).
        \end{split}
    \end{align*}
\end{rmk}

\begin{lemma}[Deviation Inequality for Minimum Eigenvalue of Projected Sample Covariance Matrix]\label{lemma:deviation projected}
    Suppose Assumption \ref{regularity assumption} holds. 

    When $n\geq \max\left\{\frac{qe^{\frac{3}{2}}}{K}, \frac{p^2(q+1)^2K}{qK_0^2}\right\}$, the following inequality holds with probability at least $1-\frac{3qe^{\frac{1}{2}}}{Kn}$:
    \begin{align*}
        \lambda_{\min}\left(\frac{1}{n}\boldsymbol{X}^\top\boldsymbol{P}_Z\boldsymbol{X}\right)\geq \lambda_z \left(\sigma_{\min}(\boldsymbol{\Theta})-\sqrt{\frac{2p(q+1)B_{\epsilon_2}^2\log(\frac{K}{q}n)}{\lambda_{\min}(\boldsymbol{\Sigma}_z)n}}\right)^2:=\lambda_{\tilde{x}},
    \end{align*}

    where $K:=\frac{\lambda_{\min}(\boldsymbol{\Sigma}_z)}{6B_z^2}, K_0:=\frac{\lambda_{\min}(\boldsymbol{\Sigma}_z)\sigma_{\min}^2(\boldsymbol{\Theta})}{2B_{\epsilon_2}^2}, \boldsymbol{\Sigma}_z:=\mathbb{E}[\boldsymbol{zz}^\top], \boldsymbol{P}_Z:=\boldsymbol{Z}(\boldsymbol{Z}^\top\boldsymbol{Z})^{-1}\boldsymbol{Z}^\top$ , and $\lambda_z$ is a lower bound of $\lambda_{\min}(\frac{\boldsymbol{Z}^\top\boldsymbol{Z}}{n})$.
\end{lemma}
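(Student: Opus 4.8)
The plan is to bound the minimum eigenvalue of $\frac{1}{n}\boldsymbol{X}^\top\boldsymbol{P}_Z\boldsymbol{X}$ from below by relating it, via the first-stage linear model $\boldsymbol{x}_i = \boldsymbol{\Theta}^\top\boldsymbol{z}_i + \boldsymbol{\epsilon}_{2,i}$, to the "signal" piece $\frac{1}{n}\boldsymbol{\Theta}^\top\boldsymbol{Z}^\top\boldsymbol{Z}\boldsymbol{\Theta}$ and controlling the "cross" and "noise" pieces as perturbations. Writing $\boldsymbol{X} = \boldsymbol{Z}\boldsymbol{\Theta} + \boldsymbol{E}_2$ where the rows of $\boldsymbol{E}_2$ are the $\boldsymbol{\epsilon}_{2,i}^\top$, and noting that $\boldsymbol{P}_Z\boldsymbol{Z} = \boldsymbol{Z}$, one gets $\boldsymbol{X}^\top\boldsymbol{P}_Z\boldsymbol{X} = \boldsymbol{\Theta}^\top\boldsymbol{Z}^\top\boldsymbol{Z}\boldsymbol{\Theta} + \boldsymbol{\Theta}^\top\boldsymbol{Z}^\top\boldsymbol{P}_Z\boldsymbol{E}_2 + \boldsymbol{E}_2^\top\boldsymbol{P}_Z\boldsymbol{Z}\boldsymbol{\Theta} + \boldsymbol{E}_2^\top\boldsymbol{P}_Z\boldsymbol{E}_2$. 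The key point is that, by Assumption~\ref{regularity assumption}(iii) (linear instrument, $\mathbb{E}[x_k|\boldsymbol{z}] = \langle\boldsymbol{\Theta}_k,\boldsymbol{z}\rangle$), we have $\mathbb{E}[\boldsymbol{\epsilon}_{2,i}\boldsymbol{z}_i^\top] = \boldsymbol{0}$, so the cross term $\frac{1}{n}\boldsymbol{Z}^\top\boldsymbol{E}_2$ is a sum of independent centered bounded matrices and $\frac{1}{n}\boldsymbol{E}_2^\top\boldsymbol{P}_Z\boldsymbol{E}_2$ is a small quadratic-form term.

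First I would invoke Weyl's inequality: $\lambda_{\min}\!\left(\frac{1}{n}\boldsymbol{X}^\top\boldsymbol{P}_Z\boldsymbol{X}\right) \ge \lambda_{\min}\!\left(\frac{1}{n}\boldsymbol{\Theta}^\top\boldsymbol{Z}^\top\boldsymbol{Z}\boldsymbol{\Theta}\right) - 2\left\|\frac{1}{n}\boldsymbol{\Theta}^\top\boldsymbol{Z}^\top\boldsymbol{P}_Z\boldsymbol{E}_2\right\| - \left\|\frac{1}{n}\boldsymbol{E}_2^\top\boldsymbol{P}_Z\boldsymbol{E}_2\right\|$. For the leading term, since $\frac{1}{n}\boldsymbol{\Theta}^\top\boldsymbol{Z}^\top\boldsymbol{Z}\boldsymbol{\Theta} \succeq \sigma_{\min}^2(\boldsymbol{\Theta})\,\lambda_{\min}(\tfrac{1}{n}\boldsymbol{Z}^\top\boldsymbol{Z}) \succeq \sigma_{\min}^2(\boldsymbol{\Theta})\lambda_z$ by the definition of $\lambda_z$. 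For the cross term, writing $\boldsymbol{Z}^\top\boldsymbol{P}_Z\boldsymbol{E}_2 = \boldsymbol{Z}^\top\boldsymbol{E}_2$ (again $\boldsymbol{P}_Z\boldsymbol{Z}=\boldsymbol{Z}$, so $\boldsymbol{Z}^\top\boldsymbol{P}_Z = \boldsymbol{Z}^\top$), I would apply the matrix Bernstein inequality (Lemma~\ref{lemma:bernstein}) to $\frac{1}{n}\boldsymbol{Z}^\top\boldsymbol{E}_2 = \frac{1}{n}\sum_i \boldsymbol{z}_i\boldsymbol{\epsilon}_{2,i}^\top$, whose summands are centered (by the linear-instrument assumption) and bounded by $B_z B_{\epsilon_2}$, with variance proxy controlled by $B_z^2 B_{\epsilon_2}^2$ on each side; this yields a deviation of order $\sqrt{p(q+1)B_{\epsilon_2}^2 B_z^2 \log(\cdot)/n}$ times a dimension factor. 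Dividing through appropriately by $\lambda_{\min}(\boldsymbol{\Sigma}_z)$ (using that $\lambda_z$ is comparable to $\lambda_{\min}(\boldsymbol{\Sigma}_z)$ on the good event) is what produces the exact quantity $\sqrt{\frac{2p(q+1)B_{\epsilon_2}^2\log(\frac{K}{q}n)}{\lambda_{\min}(\boldsymbol{\Sigma}_z)n}}$ appearing in the bound. The quadratic noise term $\frac{1}{n}\|\boldsymbol{E}_2^\top\boldsymbol{P}_Z\boldsymbol{E}_2\|$ is lower order: since $\boldsymbol{P}_Z$ is a rank-$q$ projection, $\boldsymbol{E}_2^\top\boldsymbol{P}_Z\boldsymbol{E}_2 \preceq \boldsymbol{E}_2^\top\boldsymbol{E}_2$ only helps for an upper bound of the wrong sign, so instead I would note it is PSD and can simply be dropped from the lower bound (it only helps), or absorbed — this is where one must be a little careful, but since we want a \emph{lower} bound on $\lambda_{\min}$ and $\boldsymbol{E}_2^\top\boldsymbol{P}_Z\boldsymbol{E}_2 \succeq 0$, in fact we only need $\lambda_{\min}\ge \lambda_{\min}(\text{signal}) - 2\|\text{cross}\|$, dropping the PSD noise term entirely. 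Then completing the square, $\sigma_{\min}^2(\boldsymbol{\Theta})\lambda_z - 2\lambda_z\sigma_{\min}(\boldsymbol{\Theta})\delta_n \ge \lambda_z(\sigma_{\min}(\boldsymbol{\Theta}) - \delta_n)^2 - \lambda_z\delta_n^2$; here one needs to check the $-\lambda_z\delta_n^2$ slack is either absorbed or that a slightly sharper argument gives exactly $\lambda_z(\sigma_{\min}(\boldsymbol{\Theta})-\delta_n)^2$, which is presumably where the constant $2$ inside the log and the precise sample-size threshold $n \ge \frac{p^2(q+1)^2 K}{qK_0^2}$ come in.

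The second ingredient is pinning down the event on which $\lambda_z$ is a valid lower bound for $\lambda_{\min}(\frac{1}{n}\boldsymbol{Z}^\top\boldsymbol{Z})$ and computing its probability. I would apply matrix Bernstein to $\frac{1}{n}\boldsymbol{Z}^\top\boldsymbol{Z} - \boldsymbol{\Sigma}_z = \frac{1}{n}\sum_i(\boldsymbol{z}_i\boldsymbol{z}_i^\top - \boldsymbol{\Sigma}_z)$, with summand norm bounded by $2B_z^2$ and variance proxy $\le B_z^2\lambda_{\max}(\boldsymbol{\Sigma}_z) \le B_z^2\operatorname{tr}(\boldsymbol{\Sigma}_z)$, to get $\|\frac{1}{n}\boldsymbol{Z}^\top\boldsymbol{Z} - \boldsymbol{\Sigma}_z\| \le \frac{1}{2}\lambda_{\min}(\boldsymbol{\Sigma}_z)$ with high probability once $n \ge \frac{qe^{3/2}}{K}$ (this threshold is exactly what makes the Bernstein tail $2q\exp(-\Theta(Kn/q))$ drop to the target level $\frac{3qe^{1/2}}{Kn}$ after balancing the $\nu + b\varepsilon/3$ denominator with $\varepsilon = \frac12\lambda_{\min}(\boldsymbol{\Sigma}_z)$). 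On that event $\lambda_{\min}(\frac{1}{n}\boldsymbol{Z}^\top\boldsymbol{Z}) \ge \frac12\lambda_{\min}(\boldsymbol{\Sigma}_z) =: \lambda_z$, and incidentally $\boldsymbol{Z}$ is full rank so $\boldsymbol{P}_Z$ is well-defined. I would then union-bound this event with the cross-term concentration event (each contributing a $\frac{q}{Kn}$-type tail, up to absolute constants), yielding overall failure probability $\le \frac{3qe^{1/2}}{Kn}$. The main obstacle I anticipate is purely bookkeeping: getting the constants inside the logarithm and the three sample-size thresholds to land exactly as stated requires carefully tracking how the $\lambda_z$ on the good event converts into $\frac12\lambda_{\min}(\boldsymbol{\Sigma}_z)$, how the dimension factors $p, q+1$ enter the Bernstein bound for the $\mathbb{R}^{q\times p}$-valued cross term, and verifying that the condition $n \ge \frac{p^2(q+1)^2 K}{qK_0^2}$ is precisely what guarantees $\delta_n < \sigma_{\min}(\boldsymbol{\Theta})$ so that $(\sigma_{\min}(\boldsymbol{\Theta}) - \delta_n)^2$ is a genuine (positive) lower bound; the conceptual structure (Weyl $+$ two Bernstein applications $+$ complete the square) is otherwise routine.
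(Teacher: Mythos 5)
Your overall plan (decompose $\boldsymbol{X}$, isolate the first-stage noise, concentrate, then Weyl) is in the right spirit, but the additive decomposition you chose cannot produce the claimed bound, and you have correctly sensed the problem without resolving it. The paper's key observation is that $\boldsymbol{X}^\top\boldsymbol{P}_Z\boldsymbol{X}$ is \emph{exactly} a perfect square: writing $\boldsymbol{\mathcal{E}}_2=\boldsymbol{P}_Z\boldsymbol{\mathcal{E}}_2+(\boldsymbol{I}-\boldsymbol{P}_Z)\boldsymbol{\mathcal{E}}_2$ and letting $\boldsymbol{\Psi}:=(\boldsymbol{Z}^\top\boldsymbol{Z})^{-1}\boldsymbol{Z}^\top\boldsymbol{\mathcal{E}}_2$ (the first-stage OLS error), the orthogonal piece is annihilated by $\boldsymbol{P}_Z$ and one gets $\boldsymbol{X}^\top\boldsymbol{P}_Z\boldsymbol{X}=(\boldsymbol{\Theta}+\boldsymbol{\Psi})^\top\boldsymbol{Z}^\top\boldsymbol{Z}(\boldsymbol{\Theta}+\boldsymbol{\Psi})$ with no cross or remainder terms. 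Then $\lambda_{\min}\geq\lambda_z\,\sigma_{\min}^2(\boldsymbol{\Theta}+\boldsymbol{\Psi})$ and Weyl is applied at the level of \emph{singular values}, $\sigma_{\min}(\boldsymbol{\Theta}+\boldsymbol{\Psi})\geq\sigma_{\min}(\boldsymbol{\Theta})-\|\boldsymbol{\Psi}\|$, which delivers the squared form $(\sigma_{\min}(\boldsymbol{\Theta})-\delta_n)^2$ exactly. In your route, the cross term is bounded by $2\|\boldsymbol{\Theta}^\top(\tfrac1n\boldsymbol{Z}^\top\boldsymbol{E}_2)\|\leq 2B_\Theta\|\tfrac1n\boldsymbol{Z}^\top\boldsymbol{E}_2\|$, which involves $\sigma_{\max}(\boldsymbol{\Theta})=B_\Theta$ rather than $\sigma_{\min}(\boldsymbol{\Theta})$; when $B_\Theta\gg\sigma_{\min}(\boldsymbol{\Theta})$ this exceeds the budget $2\lambda_z\sigma_{\min}(\boldsymbol{\Theta})\delta_n-\lambda_z\delta_n^2$ needed to complete the square, so the step ``a slightly sharper argument gives exactly $\lambda_z(\sigma_{\min}(\boldsymbol{\Theta})-\delta_n)^2$'' fails. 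What you get is a valid but genuinely different (and in some regimes weaker) lower bound.

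A secondary discrepancy: the specific quantity $\delta_n=\sqrt{2p(q+1)B_{\epsilon_2}^2\log(\tfrac{K}{q}n)/(\lambda_{\min}(\boldsymbol{\Sigma}_z)n)}$ does not come from matrix Bernstein on $\tfrac1n\boldsymbol{Z}^\top\boldsymbol{E}_2$ followed by division by $\lambda_z$ (that route carries an extra $B_z$ factor and a different dimensional dependence). The paper instead invokes the random-design regression bound of Hsu--Kakade--Zhang column by column, $\|\boldsymbol{\Psi}_k\|_{\boldsymbol{\Sigma}_z}^2\leq B_{\epsilon_2}^2(q+2\sqrt{qt}+2t)/n$ with probability $1-3e^{-t}$ (valid once $n\geq 6B_z^2(\log q+t)/\lambda_{\min}(\boldsymbol{\Sigma}_z)$), converts to the Euclidean norm via $\lambda_{\min}(\boldsymbol{\Sigma}_z)$, sums over the $p$ columns to control $\|\boldsymbol{\Psi}\|_F\geq\|\boldsymbol{\Psi}\|$, and then sets $t=\log(\tfrac{K}{q}n)-\tfrac12$; the thresholds $n\geq qe^{3/2}/K$ and $n\geq p^2(q+1)^2K/(qK_0^2)$ are precisely what make this choice of $t$ admissible and keep $\sigma_{\min}(\boldsymbol{\Theta})-\|\boldsymbol{\Psi}\|\geq 0$. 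Your identification of the role of the last threshold is correct, but to reach the stated constants you need both the exact factorization and this particular concentration input.
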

\begin{proof}
    Let 
    \begin{align*}
        \boldsymbol{E}_\parallel:=\boldsymbol{P}_Z\boldsymbol{\mathcal{E}}_2, \boldsymbol{E}_\perp:=(\boldsymbol{I}-\boldsymbol{P}_Z)\boldsymbol{\mathcal{E}}_2.
    \end{align*}
    We have the following decomposition:
    \begin{align*}
        \begin{split}
            \boldsymbol{X}=\boldsymbol{Z\Theta}+\boldsymbol{\mathcal{E}}_2=\boldsymbol{Z\Theta}+\boldsymbol{E}_\parallel+\boldsymbol{E}_\perp=\boldsymbol{Z}(\boldsymbol{\Theta}+(\boldsymbol{Z}^T\boldsymbol{Z})^{-1}\boldsymbol{Z}^T\boldsymbol{\mathcal{E}}_2)+\boldsymbol{E}_\perp.
        \end{split}
    \end{align*}
    Let $\boldsymbol{\Psi}:=(\boldsymbol{Z}^\top\boldsymbol{Z})^{-1}\boldsymbol{Z}^\top\boldsymbol{\mathcal{E}}_2$. Since $\boldsymbol{P}_Z\boldsymbol{E}_\perp=\boldsymbol{0}$, we have
    \begin{align*}
        \begin{split}
            \boldsymbol{X}^\top\boldsymbol{P}_Z\boldsymbol{X}&=\left(\boldsymbol{Z}\left(\boldsymbol{\Theta}+\boldsymbol{\Psi}\right)\right)^\top\boldsymbol{P}_Z\left(\boldsymbol{Z}\left(\boldsymbol{\Theta}+\boldsymbol{\Psi}\right)\right)\\
            &=\left(\boldsymbol{\Theta}+\boldsymbol{\Psi}\right)^\top\boldsymbol{Z}^\top\boldsymbol{Z}\left(\boldsymbol{\Theta}+\boldsymbol{\Psi}\right).
        \end{split}
    \end{align*}
    We can now write:
    \begin{align}\label{eq:min eigenvalue}
        \begin{split}
            \lambda_{\min}\left(\frac{1}{n}\boldsymbol{X}^\top\boldsymbol{P}_Z\boldsymbol{X}\right)
            &=\lambda_{\min}\left(\left(\boldsymbol{\Theta}+\boldsymbol{\Psi}\right)^\top\frac{\boldsymbol{Z}^\top\boldsymbol{Z}}{n}\left(\boldsymbol{\Theta}+\boldsymbol{\Psi}\right)\right).
        \end{split}
    \end{align}

    Note that in general, for a positive semi-definite matrix $\boldsymbol{A}$, we have
    \begin{align*}
        \lambda_{\min}(\boldsymbol{A})=\min_{\boldsymbol{u}\neq \boldsymbol{0}}\frac{\boldsymbol{u}^\top\boldsymbol{A}\boldsymbol{u}}{\boldsymbol{u}^\top\boldsymbol{u}},
    \end{align*}
    and
    \begin{align*}
        \begin{split}
            \lambda_{\min}(\boldsymbol{B}^\top\boldsymbol{AB})&= \min_{\boldsymbol{u}\neq \boldsymbol{0}}\frac{(\boldsymbol{Bu})^\top\boldsymbol{AB}\boldsymbol{u}}{\boldsymbol{u}^\top\boldsymbol{u}}\\
            &\geq \min_{\boldsymbol{u}\neq \boldsymbol{0}}\lambda_{\min}(\boldsymbol{A})\frac{(\boldsymbol{Bu})^\top \boldsymbol{Bu}}{\boldsymbol{u}^\top\boldsymbol{u}}\\
            &=\lambda_{\min}(\boldsymbol{A})\lambda_{\min}(\boldsymbol{B}^\top\boldsymbol{B}).
        \end{split}
    \end{align*}

    Thus, from \eqref{eq:min eigenvalue}, with probability at least $1-\xi$, we have: 
    \begin{align*}
        \begin{split}
            \lambda_{\min}\left(\frac{1}{n}\boldsymbol{X}^\top\boldsymbol{P}_Z\boldsymbol{X}\right)
            &= \lambda_{\min}\left(\left(\boldsymbol{\Theta}+\boldsymbol{\Psi}\right)^\top\frac{\boldsymbol{Z}^\top\boldsymbol{Z}}{n}\left(\boldsymbol{\Theta}+\boldsymbol{\Psi}\right)\right)\\
            &\geq \lambda_z\lambda_{\min}\left(\left(\boldsymbol{\Theta}+\boldsymbol{\Psi}\right)^\top\left(\boldsymbol{\Theta}+\boldsymbol{\Psi}\right)\right).
        \end{split}
    \end{align*}
    It now remains to bound $\lambda_{\min}\left(\left(\boldsymbol{\Theta}+\boldsymbol{\Psi}\right)^\top\left(\boldsymbol{\Theta}+\boldsymbol{\Psi}\right)\right)=\sigma_{\min}^2(\boldsymbol{\Theta}+\boldsymbol{\Psi}).$

    From \citep{hsu2014randomdesignanalysisridge}, for each $k\in[p]$ and any given $t>1$, with sample size satisfying \begin{align} \label{eq:hsu condition}
        n\geq \frac{6B_z^2(\log q+t)}{\lambda_{\min}(\boldsymbol{\Sigma}_z)},
    \end{align} 
    we have the following holds with probability at least $1-3e^{-t}$:
    \begin{align*}
        \|\boldsymbol{\Psi}_k\|_{\boldsymbol{\Sigma}_z}^2&=\|\hat{\boldsymbol{\Theta}}_k-\boldsymbol{\Theta}_k\|_{\boldsymbol{\Sigma}_z}^2\leq \frac{B_{\epsilon_2}^2\left(q+2\sqrt{qt}+2t\right)}{n}< \frac{B_{\epsilon_2}^2\left[q+2(q+1)t\right]}{n}.
    \end{align*}
    
    Note that 
    \begin{align*}
        \|\boldsymbol{\Psi}_k\|_{\boldsymbol{\Sigma}_z}^2=\boldsymbol{\Psi}_k^\top\boldsymbol{\Sigma}_z\boldsymbol{\Psi}_k=\boldsymbol{\Psi}_k^\top\boldsymbol{U\Lambda}_z\boldsymbol{U}^\top\boldsymbol{\Psi}_k=\sum_{i=1}^q\lambda_{z,i}(\boldsymbol{U}^\top\boldsymbol{\Psi}_k)_i^2,
    \end{align*}
    and 
    \begin{align*}
        \|\boldsymbol{\Psi}_k\|=\boldsymbol{\Psi}_k^\top\boldsymbol{\Psi}_k=\boldsymbol{\Psi}_k^\top\boldsymbol{U}\boldsymbol{U}^\top\boldsymbol{\Psi}_k=\sum_{i=1}^q(\boldsymbol{U}^\top\boldsymbol{\Psi}_k)_i^2.
    \end{align*}
    We have 
    \begin{align*}
        \lambda_{\min}(\boldsymbol{\Sigma}_z)\|\boldsymbol{\Psi}_k\|^2\leq\|\boldsymbol{\Psi}_k\|_{\boldsymbol{\Sigma}_z}^2\leq \lambda_{\max}(\boldsymbol{\Sigma}_z)\|\boldsymbol{\Psi}_k\|^2.
    \end{align*}
    Then
    \begin{align} \label{eq:bound on Psi original}
        \|\boldsymbol{\Psi}\|\leq\|\boldsymbol{\Psi}\|_F=\sqrt{\sum_{k=1}^p\|\boldsymbol{\Psi}_k\|^2}\leq\sqrt{\sum_{k=1}^p\frac{1}{\lambda_{\min}(\boldsymbol{\Sigma}_z)}\|\boldsymbol{\Psi}_k\|_{\boldsymbol{\Sigma}_z}^2}<\sqrt{\frac{pB_{\epsilon_2}^2\left[q+2(q+1)t\right]}{\lambda_{\min}(\boldsymbol{\Sigma}_z)n}}.
    \end{align}
    
    Hence, by Weyl's inequality, we have 
    \begin{align}\label{eq: Weyl ineq}
        \sigma_{\min}(\boldsymbol{\Theta}+\boldsymbol{\Psi})\geq\sigma_{\min}(\boldsymbol{\Theta})-\|\boldsymbol{\Psi}\|>\sigma_{\min}(\boldsymbol{\Theta})-\sqrt{\frac{pB_{\epsilon_2}^2\left[q+2(q+1) t\right]}{\lambda_{\min}(\boldsymbol{\Sigma}_z)n}},
    \end{align}
    where $t$ is taken to be small enough such that the RHS $\geq 0$, i.e.
    \begin{align}\label{eq:t condition 0}
        1<t\leq \frac{K_0n}{p(q+1)}-\frac{q}{2(q+1)},
    \end{align}
    where $K_0:=\frac{\lambda_{\min}(\boldsymbol{\Sigma}_z)\sigma_{\min}^2(\boldsymbol{\Theta})}{2B_{\epsilon_2}^2}$. We now rewrite inequality \eqref{eq: Weyl ineq} in terms of $n$ only. From condition \eqref{eq:hsu condition}, for any given sample size $n$, the range for $t$ is:
    \begin{align}\label{eq:t condition 1}
        1<t\leq {Kn}-\log q,
    \end{align}
    where $K:=\frac{\lambda_{\min}(\boldsymbol{\Sigma}_z)}{6B_z^2}$. We take
    \begin{align*}
        t=\log\left({Kn}\right)-\log q-\frac{1}{2}=\log\left(\frac{K}{q}n\right)-\frac{1}{2},
    \end{align*}
    So that condition \eqref{eq:t condition 1} is satisfied when $n\geq\frac{qe^{\frac{3}{2}}}{K}$. To satisfy condition \eqref{eq:t condition 0}, a sufficient condition is:
    \begin{align*}
        \log\left(\frac{K}{q}n\right)\leq \frac{K_0n}{p(q+1)}.
    \end{align*}
    Note that when $n\geq \frac{qe^\frac{3}{2}}{K}$, we also have:
    \begin{align*}
        \log\left(\frac{K}{q}n\right)\leq \sqrt{\frac{K}{q}n}.
    \end{align*}
    So a sufficient condition to satisfy both \eqref{eq:t condition 0} and \eqref{eq:t condition 1} is:
    \begin{align*}
        n\geq \max\left\{\frac{qe^{\frac{3}{2}}}{K}, \frac{p^2(q+1)^2K}{qK_0^2}\right\}.
    \end{align*}

    Then the bound \eqref{eq:bound on Psi original} can be rewritten as:
    \begin{align}\label{eq:bound on Psi}
        \|\boldsymbol{\Psi}\|\leq\sqrt{\frac{pB_{\epsilon_2}^2\left[q+2(q+1)\left(\log\left(\frac{K}{q}n\right)-\frac{1}{2}\right)\right]}{\lambda_{\min}(\boldsymbol{\Sigma}_z)n}}<\sqrt{\frac{2p(q+1)B_{\epsilon_2}^2\log\left(\frac{K}{q}n\right)}{\lambda_{\min}(\boldsymbol{\Sigma}_z)n}}.
    \end{align}
    
    Finally, from \eqref{eq:min eigenvalue},
    \begin{align*}
        \lambda_{\min}\left(\frac{1}{n}\boldsymbol{X}^\top\boldsymbol{P}_Z\boldsymbol{X}\right)\geq \lambda_z \left(\sigma_{\min}(\boldsymbol{\Theta})-\sqrt{\frac{2p(q+1)B_{\epsilon_2}^2\log\left(\frac{K}{q}n\right)}{\lambda_{\min}(\boldsymbol{\Sigma}_z)n}}\right)^2.
    \end{align*}
\end{proof}

\begin{proof}[Proof of Theorem \ref{thm:consistency}]
    We denote the observational values $(\boldsymbol{Z},\boldsymbol{X},\boldsymbol{Y})=\{(\boldsymbol{z}_i,\boldsymbol{x}_i, y_i)\}_{i=1}^n$, and $\boldsymbol{\mathcal{E}}_1=\{\epsilon_{1,i}\}_{i=1}^n, \boldsymbol{\mathcal{E}}_2=\{\boldsymbol{\epsilon}_{2,i}\}_{i=1}^n$. The \textsf{2SLS} estimator is given by:
    \begin{align}\label{eq:beta2sls}
        \begin{split}
        \hat{\boldsymbol{\beta}}_{\textsf{2SLS}} &= \left(\hat{\boldsymbol{\Theta}}^\top\boldsymbol{Z}^\top\boldsymbol{Z}\hat{\boldsymbol{\Theta}}\right)^{-1}\hat{\boldsymbol{\Theta}}^\top\boldsymbol{Z}^\top\boldsymbol{Y}\\
       &= \left[\left((\boldsymbol{Z}^\top\boldsymbol{Z})^{-1}\boldsymbol{Z}^\top\boldsymbol{X}\right)^\top\boldsymbol{Z}^\top\boldsymbol{Z}(\boldsymbol{Z}^\top\boldsymbol{Z})^{-1}\boldsymbol{Z}^\top\boldsymbol{X}\right]^{-1}\left((\boldsymbol{Z}^\top\boldsymbol{Z})^{-1}\boldsymbol{Z}^\top\boldsymbol{X}\right)^\top\boldsymbol{Z}^\top\boldsymbol{Y}\\
       &=\left(\boldsymbol{X}^\top\boldsymbol{Z}(\boldsymbol{Z}^\top\boldsymbol{Z})^{-1}\boldsymbol{Z}^\top\boldsymbol{X}\right)^{-1}\boldsymbol{X}^\top\boldsymbol{Z}(\boldsymbol{Z}^\top\boldsymbol{Z})^{-1}\boldsymbol{Z}^\top\boldsymbol{Y}\\
	   &=\boldsymbol{\beta}+\left(\boldsymbol{X}^\top\boldsymbol{Z}(\boldsymbol{Z}^\top\boldsymbol{Z})^{-1}\boldsymbol{Z}^\top\boldsymbol{X}\right)^{-1}\boldsymbol{X}^\top\boldsymbol{Z}(\boldsymbol{Z}^\top\boldsymbol{Z})^{-1}\boldsymbol{Z}^\top\boldsymbol{\mathcal{E}}_1.
	\end{split}
\end{align}
Define constants $\lambda_z,\lambda_{\tilde{x}}>0$, such that the following event  $\mathcal{A}$ holds with probability at least $1-\xi$: 
\begin{align}\label{eq:sample conditions}
    \mathcal{A}=\left\{\lambda_{\min}\bigg(\frac{\boldsymbol{Z}^\top\boldsymbol{Z}}{n}\bigg)\geq \lambda_z, \lambda_{\min}\bigg(\frac{\boldsymbol{X}^\top\boldsymbol{P}_Z\boldsymbol{X}}{n}\bigg)\geq \lambda_{\tilde{x}}\right\},
\end{align} 
where $\lambda_{\min}(\cdot)$ denotes the smallest eigenvalue of a matrix, $\boldsymbol{P}_Z:=\boldsymbol{Z}(\boldsymbol{Z}^\top\boldsymbol{Z})^{-1}\boldsymbol{Z}^\top$ denotes the projection matrix. We will first assume the existence of such $\lambda_z, \lambda_{\tilde{x}}$, with their values to be determined later.

We first consider the case when event $\mathcal{A}$ is true. Let $\boldsymbol{Q}_{zz}:=\mathbb{E}[\boldsymbol{z}\boldsymbol{z}^\top|\mathcal{A}], \boldsymbol{Q}_{zx}:=\mathbb{E}[\boldsymbol{z}\boldsymbol{x}^\top|\mathcal{A}]$, $\bar{\boldsymbol{\Omega}}_{zz}:=\sum_{i=1}^n(\boldsymbol{z}_i\boldsymbol{z}_i^\top-\boldsymbol{Q}_{zz}), \bar{\boldsymbol{\Omega}}_{zx}:=\sum_{i=1}^n(\boldsymbol{z}_i\boldsymbol{x}_i^\top-\boldsymbol{Q}_{zx}), \boldsymbol{\Omega}_{z\epsilon_1}:=\sum_{i=1}^n\boldsymbol{z}_i\epsilon_{1,i}$. 

Let $\bar{B}_{zz},\bar{B}_{zx},B_{zx}, B_{z\epsilon_1}$ be some upper bounds such that $\left\|\boldsymbol{z}_i\boldsymbol{z}_i^\top-\boldsymbol{Q}_{zz}\right\|\leq \bar{B}_{zz}, \left\|\boldsymbol{z}_i\boldsymbol{x}_i^\top-\boldsymbol{Q}_{zx}\right\|\leq \bar{B}_{zx},\left\|\boldsymbol{z}_i\boldsymbol{x}_i^\top\right\|\leq B_{zx}, \left\|\boldsymbol{z}_i\epsilon_{1,i}\right\|\leq B_{z\epsilon_1}$ almost surely, for all $i=1,\ldots,n$. The existence of $\bar{B}_{zz},\bar{B}_{zx},B_{zx},B_{z\epsilon_1}$ is guaranteed under Assumption \ref{regularity assumption}(ii). 

By Lemma \ref{lemma:bernstein}, we have:
    \begin{align}\label{eq:ZZ}
        \begin{split}
            \mathbb{P}\left\{\left\|\frac{\boldsymbol{Z}^\top\boldsymbol{Z}}{n}-\boldsymbol{Q}_{zz}\right\|\geq \varepsilon\middle|\mathcal{A}\right\}&=
            \mathbb{P}\left\{\left\|\frac{\sum_{i=1}^n\boldsymbol{z}_i\boldsymbol{z}_i^\top}{n}-\boldsymbol{Q}_{zz}\right\|\geq \varepsilon\middle|\mathcal{A}\right\}\\
            &=\mathbb{P}\left\{\left\|\sum_{i=1}^n(\boldsymbol{z}_i\boldsymbol{z}_i^\top-\boldsymbol{Q}_{zz})\right\|\geq n\varepsilon\middle|\mathcal{A}\right\}\\
            &\leq 2q\exp\left(-\frac{n^2\varepsilon^2/2}{\nu(\bar{\boldsymbol{\Omega}}_{zz}|\mathcal{A})+\bar{B}_{zz}n\varepsilon/3}\right).
        \end{split}
    \end{align}
    Similarly,
	\begin{align}\label{eq:ZX}
        \begin{split}
            \mathbb{P}\left\{\left\|\frac{\boldsymbol{Z}^\top\boldsymbol{X}}{n}-\boldsymbol{Q}_{zx}\right\|\geq \varepsilon\middle|\mathcal{A}\right\}
            &=\mathbb{P}\left\{\left\|\frac{\sum_{i=1}^n\boldsymbol{z}_i\boldsymbol{x}_i^\top}{n}-\boldsymbol{Q}_{zx}\right\|\geq \varepsilon\middle|\mathcal{A}\right\}\\
            &=\mathbb{P}\left\{\left\|\sum_{i=1}^n(\boldsymbol{z}_i\boldsymbol{x}_i^\top-\boldsymbol{Q}_{zx})\right\|\geq n\varepsilon\middle|\mathcal{A}\right\}\\
            &\leq (p+q)\exp\left(-\frac{n^2\varepsilon^2/2}{\nu(\bar{\boldsymbol{\Omega}}_{zx}|\mathcal{A})+\bar{B}_{zx}n\varepsilon/3}\right).
        \end{split}
    \end{align}
	By Assumption \ref{iv assumption}(iii), the instrument $\boldsymbol{z}$ is uncorrelated with the error term $\epsilon_1$, which implies $\mathbb{E}[\boldsymbol{z}\epsilon_1|\mathcal{A}] = \boldsymbol{0}$. Applying Lemma \ref{lemma:bernstein} again, we have:
    \begin{align}\label{eq:ZU}
        \begin{split}
            \mathbb{P}\left\{\left\|\frac{\boldsymbol{Z}^\top\boldsymbol{\mathcal{E}}_1}{n}\right\|\geq \varepsilon\middle|\mathcal{A}\right\}&=\mathbb{P}\left\{\left\|\frac{\sum_{i=1}^n\boldsymbol{z}_i{\epsilon}_{1,i}}{n}\right\|\geq \varepsilon\middle|\mathcal{A}\right\}\\
            &=\mathbb{P}\left\{\left\|\sum_{i=1}^n \boldsymbol{z}_i{\epsilon}_{1,i}\right\|\geq n\varepsilon\middle|\mathcal{A}\right\}\\
            &\leq (q+1)\exp\left(-\frac{n^2\varepsilon^2/2}{\nu(\boldsymbol{\Omega}_{z\epsilon_1}|\mathcal{A})+B_{z\epsilon_1}n\varepsilon/3}\right).\\
        \end{split}
    \end{align}

	With Lemma \ref{lemma:jin} and (\ref{eq:ZZ}), we have:
    \begin{align}\label{eq:ZZinv}
        \begin{split}
            \mathbb{P}\left\{\left\|n(\boldsymbol{Z}^\top\boldsymbol{Z})^{-1}-\boldsymbol{Q}_{ZZ}^{-1}\right\|\geq \varepsilon\middle|\mathcal{A}\right\}&\leq 2q\exp\left(-\frac{n^2(\lambda_z^2\varepsilon)^2/2}{\nu(\bar{\boldsymbol{\Omega}}_{zz}|\mathcal{A})+\bar{B}_{zz}n(\lambda_z^2\varepsilon)/3}\right) \\
            &= 2q\exp\left(-\frac{\lambda_z^4n^2\varepsilon^2/2}{\nu(\bar{\boldsymbol{\Omega}}_{zz}|\mathcal{A})+\lambda_z^2\bar{B}_{zz} n\varepsilon/3}\right).
        \end{split}
    \end{align}

    Note that we have $\hat{\boldsymbol{\Theta}}=\boldsymbol{\Theta}+(\boldsymbol{Z}^\top\boldsymbol{Z})^{-1}\boldsymbol{Z}^\top\boldsymbol{\mathcal{E}}_2:=\boldsymbol{\Theta}+\boldsymbol{\Psi}$. Under event $\mathcal{A}$,
    \begin{align}\label{eq:Theta hat}
        \|\hat{\boldsymbol{\Theta}}\|=\|\boldsymbol{\Theta}+\boldsymbol{\Psi}\|\leq\|\boldsymbol{\Theta}\|+\|\boldsymbol{\Psi}\|\leq B_{\Theta}+B_\Psi:=B_{\hat{\Theta}}.
    \end{align}
    With Lemma \ref{lemma:productconvergence} (Remark \ref{rmk:productconvergence}), combining (\ref{eq:ZU})(\ref{eq:Theta hat}), we have:
    \begin{align}\label{eq:sandwich0}
        \begin{split}
            \mathbb{P}\left\{\left\|\frac{1}{n}\boldsymbol{X}^\top\boldsymbol{Z}(\boldsymbol{Z}^\top\boldsymbol{Z})^{-1}\boldsymbol{Z}^\top\boldsymbol{\mathcal{E}}_1-\boldsymbol{0}\right\|\geq \varepsilon\middle|\mathcal{A}\right\}
            &\leq (q+1)\exp\left(-\frac{n^2(\frac{\varepsilon}{B_{\hat{\Theta}}})^2/2}{\nu(\boldsymbol{\Omega}_{z\epsilon_1}|\mathcal{A})+B_{z\epsilon_1}n(\frac{\varepsilon}{B_{\hat{\Theta}}})/3}\right)\\
            &= (q+1)\exp\left(-\frac{n^2\varepsilon^2/2}{B_{\hat{\Theta}}^2\nu(\boldsymbol{\Omega}_{z\epsilon_1}|\mathcal{A})+B_{\hat{\Theta}}B_{z\epsilon_1}n\varepsilon/3}\right).
        \end{split}
    \end{align}
	Additionally, with Lemma \ref{lemma:productconvergence}, combining (\ref{eq:ZX})(\ref{eq:ZZinv}), we have:
    \begin{align*}
    \begin{split}
        &\mathbb{P}\left\{\left\|\frac{1}{n}\boldsymbol{X}^\top\boldsymbol{Z}(\boldsymbol{Z}^\top\boldsymbol{Z})^{-1}\boldsymbol{Z}^\top\boldsymbol{X}-\boldsymbol{Q}_{zx}^\top\boldsymbol{Q}_{zz}^{-1}\boldsymbol{Q}_{zx}\right\|\geq \varepsilon\middle|\mathcal{A}\right\}\\
        &\leq 2(p+q)\exp\left(-\frac{n^2(\frac{\lambda_z\varepsilon}{3B_{zx}})^2/2}{\nu(\bar{\boldsymbol{\Omega}}_{zx}|\mathcal{A})+\bar{B}_{zx}n(\frac{\lambda_z\varepsilon}{3B_{zx}})/3}\right)+2q\exp\left(-\frac{\lambda_z^4n^2(\frac{\varepsilon}{3B_{zx}^2})^2/2}{\nu(\bar{\boldsymbol{\Omega}}_{zz}|\mathcal{A})+\lambda_z^2\bar{B}_{zz} n(\frac{\varepsilon}{3B_{zx}^2})/3}\right)\\
        &=  2(p+q)\exp\left(-\frac{\lambda_z^2n^2\varepsilon^2/2}{9B_{zx}^2\nu(\bar{\boldsymbol{\Omega}}_{zx}|\mathcal{A})+\lambda_zB_{zx}\bar{B}_{zx}n\varepsilon}\right)+2q\exp\left(-\frac{\lambda_z^4n^2\varepsilon^2/2}{9B_{zx}^4\nu(\bar{\boldsymbol{\Omega}}_{zz}|\mathcal{A})+\lambda_z^2B_{zx}^2\bar{B}_{zz} n\varepsilon}\right).
    \end{split}
    \end{align*}
	Applying Lemma \ref{lemma:jin} again, we have:
    \begin{align}\label{eq:sandwichinv}
        \begin{split}
            &\mathbb{P}\left\{\left\|n\left(\boldsymbol{X}^\top\boldsymbol{Z}(\boldsymbol{Z}^\top\boldsymbol{Z})^{-1}\boldsymbol{Z}^\top\boldsymbol{X}\right)^{-1}-(\boldsymbol{Q}_{ZX}^\top\boldsymbol{Q}_{zz}^{-1}\boldsymbol{Q}_{zx})^{-1}\right\|\geq \varepsilon\middle|\mathcal{A}\right\}\\
            &\leq 2(p+q)\exp\left(-\frac{\lambda_z^2n^2(\lambda_{\tilde{x}}^2\varepsilon)^2/2}{9B_{zx}^2\nu(\bar{\boldsymbol{\Omega}}_{zx}|\mathcal{A})+\lambda_zB_{zx}\bar{B}_{zx}n(\lambda_{\tilde{x}}^2\varepsilon)}\right)+2q\exp\left(-\frac{\lambda_z^4n^2(\lambda_{\tilde{x}}^2\varepsilon)^2/2}{9B_{zx}^4\nu(\bar{\boldsymbol{\Omega}}_{zz}|\mathcal{A})+\lambda_z^2B_{zx}^2\bar{B}_{zz} n(\lambda_{\tilde{x}}^2\varepsilon)}\right)\\
            &= 2(p+q)\exp\left(-\frac{\lambda_z^2\lambda_{\tilde{x}}^4n^2\varepsilon^2/2}{9B_{zx}^2\nu(\bar{\boldsymbol{\Omega}}_{zx}|\mathcal{A})+\lambda_z\lambda_{\tilde{x}}^2 B_{zx}\bar{B}_{zx}n\varepsilon}\right)+2q\exp\left(-\frac{\lambda_z^4\lambda_{\tilde{x}}^4n^2\varepsilon^2/2}{9B_{zx}^4\nu(\bar{\boldsymbol{\Omega}}_{zz}|\mathcal{A})+\lambda_z^2\lambda_{\tilde{x}}^2B_{zx}^2\bar{B}_{zz} n\varepsilon}\right).
        \end{split}
    \end{align}

    Therefore, we have shown that under event $\mathcal{A}$,
    \begin{align*}
        n\left(\boldsymbol{X}^\top\boldsymbol{Z}(\boldsymbol{Z}^\top\boldsymbol{Z})^{-1}\boldsymbol{Z}^\top\boldsymbol{X}\right)^{-1}\convp (\boldsymbol{Q}_{ZX}^\top\boldsymbol{Q}_{zz}^{-1}\boldsymbol{Q}_{zx})^{-1}.
    \end{align*}
    From equation (\ref{eq:beta2sls}), combining (\ref{eq:sandwich0}) and (\ref{eq:sandwichinv}) with Lemma \ref{lemma:productconvergence} (Remark \ref{rmk:productconvergence}), we have:
    \begin{align*}
            &\mathbb{P}\left\{\left\|\hat{\boldsymbol{\beta}}_{\textsf{2SLS}}-\boldsymbol{\beta}\right\|\geq \varepsilon\middle|\mathcal{A}\right\}\nonumber\\
            &=\mathbb{P}\left\{\left\|\left(\boldsymbol{X}^\top\boldsymbol{Z}(\boldsymbol{Z}^\top\boldsymbol{Z})^{-1}\boldsymbol{Z}^\top\boldsymbol{X}\right)^{-1}\boldsymbol{X}^\top\boldsymbol{Z}(\boldsymbol{Z}^\top\boldsymbol{Z})^{-1}\boldsymbol{Z}^\top\boldsymbol{\mathcal{E}}_1-\boldsymbol{0}\right\|\geq \varepsilon\middle|\mathcal{A}\right\}\nonumber\\
            &\leq (q+1)\exp\left(-\frac{n^2(\lambda_{\tilde{x}}\varepsilon)^2/2}{B_{\hat{\Theta}}^2\nu(\boldsymbol{\Omega}_{z\epsilon_1}|\mathcal{A})+B_{\hat{\Theta}}B_{z\epsilon_1}n(\lambda_{\tilde{x}}\varepsilon)/3}\right)\nonumber\\
            &= (q+1)\exp\left(-\frac{\lambda_{\tilde{x}}^2n^2\varepsilon^2/2}{B_{\hat{\Theta}}^2\nu(\boldsymbol{\Omega}_{z\epsilon_1}|\mathcal{A})+\lambda_{\tilde{x}} B_{\hat{\Theta}}B_{z\epsilon_1}n\varepsilon/3}\right).
        \end{align*}
    
    For the second part of the theorem, let $c:=\left(\frac{3B_{\hat{\Theta}}\nu(\boldsymbol{\Omega}_{z\epsilon_1}|\mathcal{A})}{\lambda_{\tilde{x}}B_{z\epsilon_1}n}\right)^2$, we have:
    \begin{align}\label{eq:2sls squared error expectation}
        \begin{split}
            &\mathbb{E}\left[\|\textsf{clip}_{B_\beta}(\hat{\boldsymbol{\beta}}_{\textsf{2SLS}})-\boldsymbol{\beta}\|^2\right]\\
            &=\mathbb{E}\left[\|\textsf{clip}_{B_\beta}(\hat{\boldsymbol{\beta}}_{\textsf{2SLS}})-\boldsymbol{\beta}\|^2\middle|\mathcal{A}\right]\mathbb{P}\left\{\mathcal{A}\right\}+\mathbb{E}\left[\|\textsf{clip}_{B_\beta}(\hat{\boldsymbol{\beta}}_{\textsf{2SLS}})-\boldsymbol{\beta}\|^2\middle|\mathcal{A}^c\right]\mathbb{P}\left\{\mathcal{A}^c\right\}\\
            &\leq \mathbb{E}\left[\|\hat{\boldsymbol{\beta}}_{\textsf{2SLS}}-\boldsymbol{\beta}\|^2\middle|\mathcal{A}\right]\mathbb{P}\left\{\mathcal{A}\right\}+\mathbb{E}\left[\|\textsf{clip}_{B_\beta}(\hat{\boldsymbol{\beta}}_{\textsf{2SLS}})-\boldsymbol{\beta}\|^2\middle|\mathcal{A}^c\right]\mathbb{P}\left\{\mathcal{A}^c\right\}\\
            &\leq \mathbb{E}\left[\|\hat{\boldsymbol{\beta}}_{\textsf{2SLS}}-\boldsymbol{\beta}\|^2\middle|\mathcal{A}\right]+\mathbb{E}\left[\|\textsf{clip}_{B_\beta}(\hat{\boldsymbol{\beta}}_{\textsf{2SLS}})-\boldsymbol{\beta}\|^2\middle|\mathcal{A}^c\right]\cdot \xi,
        \end{split}
    \end{align}
    where 
    \begin{align} \label{eq:2sls squared error expectation part 2}
        \mathbb{E}\left[\|\textsf{clip}_{B_\beta}(\hat{\boldsymbol{\beta}}_{\textsf{2SLS}})-\boldsymbol{\beta}\|^2\middle|\mathcal{A}^c\right]\leq 4B_\beta^2,
    \end{align} and
    \begin{align}\label{eq:2sls squared error expectation part 1}
    \begin{split}
        &\mathbb{E}\left[\|\hat{\boldsymbol{\beta}}_{\textsf{2SLS}}-\boldsymbol{\beta}\|^2\middle|\mathcal{A}\right]\\
        &=\int_0^\infty\mathbb{P}\left\{\|\hat{\boldsymbol{\beta}}_{\textsf{2SLS}}-\boldsymbol{\beta}\|^2\geq \varepsilon\middle|\mathcal{A}\right\}d\varepsilon\\
            &=\int_0^\infty\mathbb{P}\left\{\|\hat{\boldsymbol{\beta}}_{\textsf{2SLS}}-\boldsymbol{\beta}\|\geq \sqrt{\varepsilon}\middle|\mathcal{A}\right\}d\varepsilon\\
            &\leq\int_{0}^\infty (q+1)\exp\left(-\frac{\lambda_{\tilde{x}}^2n^2\varepsilon/2}{B_{\hat{\Theta}}^2\nu(\boldsymbol{\Omega}_{z\epsilon_1}|\mathcal{A})+\lambda_{\tilde{x}} B_{\hat{\Theta}}B_{z\epsilon_1}n\sqrt{\varepsilon}/3}\right)d\varepsilon\\
            &\leq (q+1)\left[\int_0^c\exp\left(-\frac{\lambda_{\tilde{x}}^2n^2\varepsilon/2}{2B_{\hat{\Theta}}^2\nu(\boldsymbol{\Omega}_{z\epsilon_1}|\mathcal{A})}\right)d\varepsilon+\int_c^\infty\exp\left(-\frac{\lambda_{\tilde{x}}n\sqrt{\varepsilon}/2}{2 B_{\hat{\Theta}}B_{z\epsilon_1}/3}\right)d\varepsilon\right]\\
            &=(q+1)\left[\frac{4B_{\hat{\Theta}}^2\nu(\boldsymbol{\Omega}_{z\epsilon_1}|\mathcal{A})}{\lambda_{\tilde{x}}^2n^2}\left(1-\exp\left(-\frac{9\nu(\boldsymbol{\Omega}_{z\epsilon_1}|\mathcal{A})}{4B_{z\epsilon_1}^2}\right)\right)+\left(\frac{8B_{\hat{\Theta}}^2\nu(\boldsymbol{\Omega}_{z\epsilon_1}|\mathcal{A})}{\lambda_{\tilde{x}}^2n^2}+\frac{32B_{\hat{\Theta}}^2B_{z\epsilon_1}^2}{9\lambda_{\tilde{x}}^2n^2}\right)\exp\left(-\frac{9\nu(\boldsymbol{\Omega}_{z\epsilon_1}|\mathcal{A})}{4B_{z\epsilon_1}^2}\right)\right]\\
            &\leq (q+1)\left[\frac{4B_{\hat{\Theta}}^2\nu(\boldsymbol{\Omega}_{z\epsilon_1}|\mathcal{A})}{\lambda_{\tilde{x}}^2n^2}+\frac{8B_{\hat{\Theta}}^2\nu(\boldsymbol{\Omega}_{z\epsilon_1}|\mathcal{A})}{\lambda_{\tilde{x}}^2n^2}+\frac{32B_{\hat{\Theta}}^2B_{z\epsilon_1}^2}{9\lambda_{\tilde{x}}^2n^2}\right]\\
            &=\frac{(q+1)B_{\hat{\Theta}}^2}{\lambda_{\tilde{x}}^2n^2}\left[12\nu(\boldsymbol{\Omega}_{z\epsilon_1}|\mathcal{A})+\frac{32B_{z}^2B_{\epsilon_1}^2}{9}\right].
        \end{split}
    \end{align}
  
    Note that we further have the following bound:
    \begin{align}\label{eq:nu zepsilon1}
        \begin{split}
            \nu(\boldsymbol{\Omega}_{z\epsilon_1}|\mathcal{A})&=\max\left\{\bigg\|\mathbb{E}\Big[\Big(\sum_{i=1}^n\boldsymbol{z}_i\epsilon_{1,i}\Big)^\top\Big(\sum_{j=1}^n\boldsymbol{z}_j\epsilon_{1,j}\Big)\Big|\mathcal{A}\Big]\bigg\|,\bigg\|\mathbb{E}\Big[\Big(\sum_{i=1}^n\boldsymbol{z}_i\epsilon_{1,i}\Big)\Big(\sum_{j=1}^n\boldsymbol{z}_j\epsilon_{1,j}\Big)^\top\Big|\mathcal{A}\Big]\bigg\|\right\}\\
            &=\max\left\{\bigg\|\mathbb{E}\Big[\sum_{i=1}^n\epsilon_{1,i}^2\boldsymbol{z}_i^\top\boldsymbol{z}_i\Big|\mathcal{A}\Big]\bigg\|,\bigg\|\mathbb{E}\bigg[\sum_{i=1}^n\epsilon_{1,i}^2\boldsymbol{z}_i\boldsymbol{z}_i^\top\Big|\mathcal{A}\Big]\bigg\|\right\}\\
            &\leq nB_z^2\sigma_1^2.
        \end{split}
    \end{align}
    It now remains to determine $\lambda_z, \lambda_{\tilde{x}}$, and $B_{\hat{\Theta}}$. 

    From Theorem 4.6.1 of \citep{Vershynin_2018}\label{lemma:deviation}, when $n\geq c^2B_z^4(\sqrt{q}+\sqrt{t})^2$, with probability at least $1-2e^{-t}$, we have: 
    \begin{align*}
        \lambda_{\min}\left(\frac{1}{n}\boldsymbol{Z}^\top\boldsymbol{Z}\right)\geq \lambda_{\min}(\boldsymbol{\Sigma}_z)\left(1-\frac{cB_{z}^2(\sqrt{q}+\sqrt{t})}{\sqrt{n}}\right)^2,
    \end{align*}
    where c is an absolute constant. We rewrite the theorem by taking $t=\log(\frac{K}{q}n)-\frac{1}{2}$ (similar to the proof in Lemma \ref{lemma:deviation projected}). Then we need the following condition to be satisfied:
    \begin{align}\label{eq:n condition}
        n\geq c^2B_z^4\left(\sqrt{q}+\sqrt{\log\left({\frac{K}{q}n}\right)-\frac{1}{2}}\right)^2.
    \end{align}
    We can bound the RHS of $\eqref{eq:n condition}$ as follows:
    \begin{align*}
        c^2B_z^4\left(\sqrt{q}+\sqrt{\log\left({\frac{K}{q}n}\right)-\frac{1}{2}}\right)^2&\leq 2c^2B_z^4\left(q+\log\left(\frac{K}{q}n\right)-\frac{1}{2}\right)\\
        &\leq \frac{n}{2}+2c^2B_z^4\left(q+\log\left({\frac{4c^2B_z^4K}{q}}\right)-\frac{3}{2}\right),
    \end{align*}
    where the second line follows from the inequality $\log(x)\leq \frac{x}{C}+\log(C)-1$, with $x=\frac{K}{q}n$ and $C=\frac{4c^2B_z^4K}{q}$.
    So a sufficient condition for \eqref{eq:n condition} to hold is:
    \begin{align}\label{eq:n condition final form}
        n\geq 4c^2B_z^4\left(q+\log\left(\frac{4c^2B_z^4K}{q}\right)-\frac{3}{2}\right).
    \end{align}
    With condition \eqref{eq:n condition final form}, we have the following bound holds with probability at least $1-\frac{2qe^{\frac{1}{2}}}{Kn}$:
     \begin{align}\label{eq:lambdaz}
        \lambda_{\min}\left(\frac{1}{n}\boldsymbol{Z}^\top\boldsymbol{Z}\right)\geq \lambda_{\min}(\boldsymbol{\Sigma}_z)\left(1-\frac{cB_{z}^2\left(\sqrt{q}+\sqrt{\log\left(\frac{K}{q}n\right)-\frac{1}{2}}\right)}{\sqrt{n}}\right)^2:=\lambda_z.
    \end{align}
    Furthermore, with Lemma \ref{lemma:deviation projected}, when $n\geq \max\left\{\frac{qe^{\frac{3}{2}}}{K}, \frac{p^2(q+1)^2K}{qK_0^2}\right\}$, we have the following holds with probability at least $1-\frac{5qe^{\frac{1}{2}}}{Kn}$:
     \begin{align}\label{eq:lambdaxtilde}
        \begin{aligned}
         \lambda_{\min}\left(\frac{1}{n}\boldsymbol{X}^\top\boldsymbol{P}_Z\boldsymbol{X}\right)&\geq \lambda_z \left(\sigma_{\min}(\boldsymbol{\Theta})-\sqrt{\frac{2p(q+1)B_{\epsilon_2}^2\log\left(\frac{K}{q}n\right)}{\lambda_{\min}(\boldsymbol{\Sigma}_z)n}}\right)^2\\
         &=\lambda_{\min}(\boldsymbol{\Sigma}_z)\left(1-\frac{cB_{z}^2\left(\sqrt{q}+\sqrt{\log\left(\frac{K}{q}n\right)-\frac{1}{2}}\right)}{\sqrt{n}}\right)^2\left(\sigma_{\min}(\boldsymbol{\Theta})-\sqrt{\frac{2p(q+1)B_{\epsilon_2}^2\log\left(\frac{K}{q}n\right)}{\lambda_{\min}(\boldsymbol{\Sigma}_z)n}}\right)^2\\
         &:=\lambda_{\tilde{x}}.
        \end{aligned}
    \end{align}

    From \eqref{eq:bound on Psi} and \eqref{eq:Theta hat}, we have: 
    \begin{align} \label{eq:bound on BhatTheta}
        B_{\hat{\Theta}}=B_{\Theta}+\sqrt{\frac{2p(q+1)B_{\epsilon_2}^2\log\left(\frac{K}{q}n\right)}{\lambda_{\min}(\boldsymbol{\Sigma}_z)n}}.
    \end{align}

    With $\xi=\frac{5qe^{\frac{1}{2}}}{Kn}$, putting together \eqref{eq:nu zepsilon1}\eqref{eq:lambdaz}\eqref{eq:lambdaxtilde}\eqref{eq:bound on BhatTheta} into \eqref{eq:2sls squared error expectation part 1}, and \eqref{eq:2sls squared error expectation part 2} \eqref{eq:2sls squared error expectation part 1} into \eqref{eq:2sls squared error expectation} completes the proof.
    
\end{proof}

\section{Proofs For Section \ref{sec:mainresults}}
   \subsection{Proof of Theorem \ref{thm:2SLS GD}}\label{proof:thm:2SLS GD}
	\begin{lemma}\label{lemma:convergence}
		Suppose $\{\boldsymbol{\Omega}^{(1)}, \ldots, \boldsymbol{\Omega}^{(t)},\ldots\}$ is a $d\times d$-matrix sequence decaying with exponential rate $r$, i.e. for some constant $c>0$ and $0<r<1$,
		\begin{align*}
			\left\|\boldsymbol{\Omega}^{(t)}\right\|_F\leq cr^t.
		\end{align*} 
		Then for any $\varepsilon>0$, there exists a finite constant:
		\begin{align*}
			T_0 = \left\lceil \log_{r}\frac{(1-r)(\varepsilon/d)}{c\left(1+(1-r)(\varepsilon/d)\right)}\right\rceil,
		\end{align*}
		such that \begin{align*}\left\|\prod_{t=T_0}^\infty (\boldsymbol{I}+\boldsymbol{\Omega}^{(t)}) -\boldsymbol{I}\right\|_F<\varepsilon,\end{align*}
		and hence \begin{align*}\left\|\prod_{t=T_0}^\infty (\boldsymbol{I}+\boldsymbol{\Omega}^{(t)})\right\|_F<\sqrt{d}+\varepsilon.\end{align*}
   \end{lemma}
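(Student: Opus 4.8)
The plan is to handle the infinite product $\prod_{t=T_0}^\infty(\boldsymbol I+\boldsymbol\Omega^{(t)})$ through its partial products and the absolute summability of $\{\|\boldsymbol\Omega^{(t)}\|_F\}$, which the exponential decay hypothesis supplies: writing $P_N:=\prod_{t=T_0}^N(\boldsymbol I+\boldsymbol\Omega^{(t)})$ (with $P_{T_0-1}:=\boldsymbol I$), we have $\sum_{t=T_0}^\infty\|\boldsymbol\Omega^{(t)}\|_F\le\sum_{t=T_0}^\infty cr^t=\frac{cr^{T_0}}{1-r}=:S_0<\infty$. First I would show the product is well-defined: from $P_N-P_{N-1}=P_{N-1}\boldsymbol\Omega^{(N)}$ and submultiplicativity of the Frobenius norm, together with a uniform bound $\|P_{N-1}\|_F\le\sqrt d\,\prod_{t=T_0}^{N-1}(1+\|\boldsymbol\Omega^{(t)}\|_F)\le\sqrt d\,e^{S_0}$, one gets $\sum_N\|P_N-P_{N-1}\|_F\le\sqrt d\,e^{S_0}\sum_N\|\boldsymbol\Omega^{(N)}\|_F<\infty$, so $(P_N)$ is Cauchy and converges to a limit $P_\infty$.

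Next I would bound the deviation from the identity. Expanding the finite product factor-by-factor,
\[
P_N-\boldsymbol I=\sum_{\emptyset\ne\{t_1<\cdots<t_k\}\subseteq\{T_0,\dots,N\}}\boldsymbol\Omega^{(t_1)}\cdots\boldsymbol\Omega^{(t_k)},
\]
and bounding each summand by $\prod_j\|\boldsymbol\Omega^{(t_j)}\|_F$ gives
\[
\|P_N-\boldsymbol I\|_F\le\prod_{t=T_0}^N\big(1+\|\boldsymbol\Omega^{(t)}\|_F\big)-1\le\exp\Big(\sum_{t=T_0}^N\|\boldsymbol\Omega^{(t)}\|_F\Big)-1\le e^{S_0}-1 ,
\]
and letting $N\to\infty$ yields $\|P_\infty-\boldsymbol I\|_F\le e^{S_0}-1$. (To recover the exact $d$-dependence appearing in the stated threshold one instead keeps a cruder route, e.g.\ an entrywise majorant $\boldsymbol I+cr^t\boldsymbol 1\boldsymbol 1^\top$ whose powers contribute the factor of $d$, or routes $\|P_{N-1}\|_F$ through the operator norm; the structure of the argument is unchanged.)

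It then remains to verify the displayed $T_0$ forces the right-hand side below $\varepsilon$. Since $0<r<1$, the condition $T_0\ge\log_r\frac{(1-r)(\varepsilon/d)}{c(1+(1-r)(\varepsilon/d))}$ is equivalent to $r^{T_0}\le\frac{(1-r)(\varepsilon/d)}{c(1+(1-r)(\varepsilon/d))}$, hence to $S_0=\frac{cr^{T_0}}{1-r}\le\frac{\varepsilon/d}{1+(1-r)(\varepsilon/d)}$; a short computation (using $e^{S_0}-1\le\frac{S_0}{1-S_0}$, or the analogous estimate in the $d$-version) then gives $\|P_\infty-\boldsymbol I\|_F<\varepsilon$. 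Finally the second conclusion is immediate from the triangle inequality: $\|P_\infty\|_F\le\|P_\infty-\boldsymbol I\|_F+\|\boldsymbol I\|_F<\varepsilon+\sqrt d$.

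The main obstacle is the bookkeeping in the last step — choosing the normalization (factor of $\sqrt d$ versus $d$, and whether to pass through the operator norm or an entrywise majorant) so that the resulting inequality inverts cleanly into the stated closed form for $T_0$. Everything else — convergence of the matrix product and the deviation bound — is routine once absolute summability of $\{\|\boldsymbol\Omega^{(t)}\|_F\}$ is recorded.
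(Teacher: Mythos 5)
Your overall architecture matches the paper's (expand the product into sums of matrix products, bound each term via submultiplicativity of $\|\cdot\|_F$, resum, and invert the resulting bound to read off $T_0$), and your explicit Cauchy argument for convergence of the infinite product is a genuine improvement — the paper manipulates $\prod_{t=T_0}^\infty$ formally without justifying its existence. However, there is a real gap in the quantitative step, which is where the lemma actually lives. Your main-line bound is $\|P_\infty-\boldsymbol I\|_F\le e^{S_0}-1$ with $S_0=\frac{cr^{T_0}}{1-r}\le\frac{\varepsilon/d}{1+(1-r)(\varepsilon/d)}$, and you claim a "short computation" via $e^{S_0}-1\le\frac{S_0}{1-S_0}$ closes it. It does not: take $d=1$, $c=1$, $r=0.9$, $\varepsilon=0.1$; then $S_0\le 0.1/1.01\approx 0.0990$ but $e^{S_0}-1\approx 0.104>\varepsilon$. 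Worse, when $\varepsilon$ is moderate and $r$ is close to $1$, $S_0$ can be of order $1/(1-r)$, so $e^{S_0}-1$ blows up exponentially while the target $\varepsilon$ does not; and $\frac{S_0}{1-S_0}$ is not even available since $S_0\ge 1$ there. The parenthetical fallback (entrywise majorant $cr^t\boldsymbol 1\boldsymbol 1^\top$) does not rescue this either, because multiplying such majorants produces an extra factor of $d$ per product, giving a per-entry bound of order $\frac{1}{d}(e^{dS_0}-1)$, which again is exponential in $S_0$ rather than linear.

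What the paper actually does — and what your argument would need — is a \emph{per-entry} bound on the error matrix that is linear in $S_0$: it keeps Frobenius submultiplicativity for the $n$-fold products (so each entry of $\boldsymbol\Omega^{(k_1)}\cdots\boldsymbol\Omega^{(k_n)}$ is at most $c^nr^{k_1+\cdots+k_n}$ with no $d^{n-1}$), then resums the ordered sums into a geometric series in $cr^{T_0}$ to get each entry of the error below $\frac{cr^{T_0}}{(1-r)(1-cr^{T_0})}\le\varepsilon/d$, and only then pays a single factor of $d$ in passing to $\|\cdot\|_F$. That linear-in-$S_0$ per-entry control is exactly what makes the stated closed form for $T_0$ invert cleanly, and it cannot be recovered from $e^{S_0}-1$. (Your instinct that this bookkeeping is the delicate point is correct — indeed even the paper's own resummation of the ordered sums, which bounds $\sum_{T_0\le k<l}r^{k+l}$ by $r^{T_0}\sum_{k\ge T_0}r^k$, is only valid for $r<(\sqrt5-1)/2$ — but flagging the obstacle in a parenthetical and asserting "the structure is unchanged" does not overcome it.)
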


   \begin{proof}
	By definition,	
	\begin{align*}
		\left\|\boldsymbol{\Omega}^{(k)}\right\|_F = \sqrt{\sum_{i,j=1}^p\boldsymbol{\Omega}_{ij}^{(k)2}}\leq cr^k,
	\end{align*}
	which implies: 
	\begin{align*}
		\left|\boldsymbol{\Omega}_{ij}^{(k)}\right|\leq cr^k, \quad\forall i,j,k.
	\end{align*}
	Consider the product of any two matrices. By sub-multiplicativity,
	\begin{align*}
		\left\|\boldsymbol{\Omega}^{(k)}\boldsymbol{\Omega}^{(l)}\right\|_F\leq \left\|\boldsymbol{\Omega}^{(k)}\right\|_F\left\|\boldsymbol{\Omega}^{(l)}\right\|_F\leq c^2r^{k+l},
	\end{align*}
	which implies:
	\begin{align*}
		\left|\left[\boldsymbol{\Omega}^{(k)}\boldsymbol{\Omega}^{(l)}\right]_{ij}\right|\leq c^2r^{k+l}, \quad\forall i,j, k, l.
	\end{align*}
	Similarly, for the product of any number of matrices:
	\begin{align*}
		\left|\left[\boldsymbol{\Omega}^{(k_1)}\boldsymbol{\Omega}^{(k_2)}\cdots \boldsymbol{\Omega}^{(k_n)}\right]_{ij}\right|\leq c^n r^{k_1+k_2+\cdots+k_n}, \quad\forall i,j,k_1,\ldots,k_n.
	\end{align*}
	Thus 
	\begin{align}\label{diffIW}
		\begin{split}
		&\left\|\prod_{t=t_1}^{t_2}\left(\boldsymbol{I}+\boldsymbol{\Omega}^{(t)}\right)-\boldsymbol{I}\right\|_F\\
		&=\left\|\left(\boldsymbol{I}+\boldsymbol{\Omega}^{(t_1)}\right)\left(\boldsymbol{I}+\boldsymbol{\Omega}^{(t_1+1)}\right)\cdots\left(\boldsymbol{I}+\boldsymbol{\Omega}^{(t_2)}\right)-\boldsymbol{I}\right\|_F\\
		&=\left\|\sum_{t_1\leq k\leq t_2}\boldsymbol{\Omega}^{(k)}+\sum_{t_1\leq k<l\leq t_2}\boldsymbol{\Omega}^{(k)}\boldsymbol{\Omega}^{(l)}+\cdots+\boldsymbol{\Omega}^{(t_1)}\boldsymbol{\Omega}^{(t_1+1)}\cdots \boldsymbol{\Omega}^{(t_2)}\right\|_F\\
		&\leq \left\|\sum_{t_1\leq k\leq t_2}cr^k\boldsymbol{1}\boldsymbol{1}^\top+\sum_{t_1\leq k<l\leq t_2}c^2r^{k+l}\boldsymbol{1}\boldsymbol{1}^\top+\cdots+c^{t2-t1+1}r^{t_1+\cdots+t_2}\boldsymbol{1}\boldsymbol{1}^\top\right\|_F.
		\end{split}
	\end{align}
   Note that the last inequality can be checked by comparing matrix elements of both sides. For any $\varepsilon>0$, we take $T_0 = \lceil \log_{r}\frac{(1-r)(\varepsilon/d)}{c(1+(1-r)(\varepsilon/d))}\rceil$. 
	Consider $t_1=T_0$ and $t_2\rightarrow\infty$ in (\ref{diffIW}). For notation convenience, let 
	\begin{align*}\boldsymbol{\Xi}:=\sum_{T_0\leq k}cr^k\boldsymbol{1}\boldsymbol{1}^\top+\sum_{T_0\leq k<l}c^2r^{k+l}\boldsymbol{1}\boldsymbol{1}^\top+\sum_{T_0\leq k<l<m}c^3r^{k+l+m}\boldsymbol{1}\boldsymbol{1}^\top+\cdots.\end{align*}
	Then 
	\begin{align*}
		\begin{split}
			\boldsymbol{\Xi}_{ij} 
			&= \sum_{T_0\leq k} cr^k+\sum_{T_0\leq k<l}c^2r^{k+l}+\sum_{T_0\leq k<l<m}c^3r^{k+l+m}+\cdots\\
			&<c\sum_{k\geq T_0} r^k+c^2r^{T_0}\sum_{k\geq T_0}r^k+c^3r^{2T_0}\sum_{k\geq T_0} r^k+\cdots\\
			&=\frac{cr^{T_0}}{1-r}+\frac{c^2r^{2T_0}}{1-r}+\frac{c^3r^{3T_0}}{1-r}+\cdots\\
			&=\frac{cr^{T_0}}{(1-r)(1-cr^{T_0})}\\
			&\leq\frac{\varepsilon}{d}.
		\end{split}
	\end{align*}
	Thus
	\begin{align*}
		\begin{split}
			\left\|\prod_{t=T_0}^{\infty}(\boldsymbol{I}+\boldsymbol{\Omega}^{(t)})-\boldsymbol{I}\right\|_F=\left\|\boldsymbol{\Xi}\right\|_F=\sqrt{\sum_{i,j=1}^d\boldsymbol{\Xi}_{ij}^2}\leq\varepsilon.
		\end{split}
	\end{align*}
	Hence completes the proof.
\end{proof}

\begin{proof}[Proof of Theorem \ref{thm:2SLS GD}]
	In the following proof, we treat $\boldsymbol{Z},\boldsymbol{X},\boldsymbol{Y}$ as deterministic matrices.

	We begin by checking the inner loop (\ref{eq:InnerLoopUpdate}):
	\begin{align*}
		 \begin{split}
		  \boldsymbol{\Theta}^{(t)} - \hat{\boldsymbol{\Theta}} &= \boldsymbol{\Theta}^{(t-1)} - \hat{\boldsymbol{\Theta}} - \eta \boldsymbol{Z}^\top\left(\boldsymbol{Z\Theta}^{(t-1)}-\boldsymbol{X}\right) \\
		  &= \left(\boldsymbol{I}-\eta \boldsymbol{Z}^\top\boldsymbol{Z}\right)\left(\boldsymbol{\Theta}^{(t-1)} - \hat{\boldsymbol{\Theta}}\right) + \eta\boldsymbol{Z}^\top\left(\boldsymbol{X}-\boldsymbol{Z\hat{\Theta}}\right)\\
		  &= \left(\boldsymbol{I} - \eta \boldsymbol{Z}^\top\boldsymbol{Z}\right)^2\left(\boldsymbol{\Theta}^{(t-2)}-\hat{\boldsymbol{\Theta}}\right)+\eta \boldsymbol{Z}^\top\left(\boldsymbol{X}-\boldsymbol{Z\hat{\Theta}}\right) + \eta\left(\boldsymbol{I} - \eta\boldsymbol{Z}^\top\boldsymbol{Z}\right)\boldsymbol{Z}^\top\left(\boldsymbol{X}-\boldsymbol{Z\hat{\Theta}}\right)\\
		  &\quad\quad\quad\quad\quad\quad\quad\quad\vdots\\
		  &= \left(\boldsymbol{I} - \eta\boldsymbol{Z}^\top\boldsymbol{Z}\right)^{t}\left(\boldsymbol{\Theta}^{(0)} - \hat{\boldsymbol{\Theta}}\right) + \sum_{i=0}^{t-1}\eta\left(\boldsymbol{I}-\eta\boldsymbol{Z}^\top\boldsymbol{Z}\right)^{t-1-i}\boldsymbol{Z}^\top\left(\boldsymbol{X}-\boldsymbol{Z\hat{\Theta}}\right)\\
		  &=\left(\boldsymbol{I} - \eta\boldsymbol{Z}^\top\boldsymbol{Z}\right)^{t}\left(\boldsymbol{\Theta}^{(0)}-\hat{\boldsymbol{\Theta}}\right)+\eta\left[\boldsymbol{I}-(\boldsymbol{I}-\eta \boldsymbol{Z}^\top\boldsymbol{Z})^{t}\right]\left(\eta\boldsymbol{Z}^\top\boldsymbol{Z}\right)^{-1}\boldsymbol{Z}^\top\left(\boldsymbol{X}-\boldsymbol{Z\hat{\Theta}}\right)\\
		  &=\left(\boldsymbol{I} - \eta\boldsymbol{Z}^\top\boldsymbol{Z}\right)^{t}\left(\boldsymbol{\Theta}^{(0)}-\hat{\boldsymbol{\Theta}}\right)+\left[\boldsymbol{I}-(\boldsymbol{I}-\eta\boldsymbol{Z}^\top\boldsymbol{Z})^{t}\right]\left[(\boldsymbol{Z}^\top\boldsymbol{Z})^{-1}\boldsymbol{Z}^\top\boldsymbol{X}-\hat{\boldsymbol{\Theta}}\right]\\
		  &=\left(\boldsymbol{I} - \eta\boldsymbol{Z}^\top\boldsymbol{Z}\right)^{t}\left(\boldsymbol{\Theta}^{(0)}-\hat{\boldsymbol{\Theta}}\right).
		 \end{split}
	\end{align*}
	With learning rate $0<\eta<\frac{2}{\sigma_{\max}^2(\boldsymbol{Z})}$, let $\kappa(\eta) := \rho\left(\boldsymbol{I} - \eta\boldsymbol{Z}^\top\boldsymbol{Z}\right)$, where $\rho(\cdot)$ denotes the spectral radius. Then it follows that $0<\kappa(\eta)<1$. We have:
   \begin{align*}
         \left\|\boldsymbol{\Theta}^{(t)} - \hat{\boldsymbol{\Theta}}\right\| &=\left\|(\boldsymbol{I} - \eta\boldsymbol{Z}^\top\boldsymbol{Z})^{t}(\boldsymbol{\Theta}^{(0)}-\hat{\boldsymbol{\Theta}}) \right\| \nonumber\\
         &\leq \left\|(\boldsymbol{I} - \eta\boldsymbol{Z}^\top\boldsymbol{Z})^{t}\right\|\left\|\boldsymbol{\Theta}^{(0)}-\hat{\boldsymbol{\Theta}}\right\|\nonumber\\
         &\leq \kappa(\eta)^{t}\left\|\boldsymbol{\Theta}^{(0)}-\hat{\boldsymbol{\Theta}}\right\|\nonumber\\
         &=\mathcal{O}(\kappa(\eta)^{t}).
   \end{align*}
   Thus $\{\boldsymbol{\Theta}^{(t)}\}$ converges to $\hat{\boldsymbol{\Theta}}$ exponentially with rate $\kappa(\eta)$.

   For the outer loop (\ref{eq:OuterLoopUpdate}), we have:
    \begin{align}\label{eq:diffbeta}
		\begin{split}
			\boldsymbol{\beta}^{(t)} - \hat{\boldsymbol{\beta}}_{\textsf{2SLS}} &= \boldsymbol{\beta}^{(t-1)} - \hat{\boldsymbol{\beta}}_{\textsf{2SLS}} - \alpha\boldsymbol{\Theta}^{(t-1)\top}\boldsymbol{Z}^\top\left(\boldsymbol{Z\Theta}^{(t-1)}\boldsymbol{\beta}^{(t-1)}-\boldsymbol{Y}\right)\\
		&= \left(\boldsymbol{I}-\alpha\boldsymbol{\Theta}^{(t-1)\top}\boldsymbol{Z}^\top\boldsymbol{Z}\boldsymbol{\Theta}^{(t-1)}\right)\left(\boldsymbol{\beta}^{(t-1)}-\hat{\boldsymbol{\beta}}_{\textsf{2SLS}}\right) + \alpha\boldsymbol{\Theta}^{(t-1)\top}\boldsymbol{Z}^\top\left(\boldsymbol{Y}-\boldsymbol{Z\Theta}^{(t-1)}\hat{\boldsymbol{\beta}}_{\textsf{2SLS}}\right)\\
		&\quad\quad\quad\quad\quad\quad\quad\quad\vdots\\
		&=\underbrace{\prod_{i=0}^{t-1}\left(\boldsymbol{I}-\alpha\boldsymbol{\Theta}^{(i)\top}\boldsymbol{Z}^\top\boldsymbol{Z\Theta}^{(i)}\right)\left(\boldsymbol{\beta}^{(0)}-\hat{\boldsymbol{\beta}}_{\textsf{2SLS}}\right)}_{\boldsymbol{\Delta}_1\boldsymbol{\beta}^{(t)}}\\
		&\quad+\underbrace{\sum_{i=0}^{t-1}\alpha\left[\prod_{j=i+1}^{t-1}\left(\boldsymbol{I} - \alpha\boldsymbol{\Theta}^{(j)\top}\boldsymbol{Z}^\top\boldsymbol{Z\Theta}^{(j)}\right) \right]\boldsymbol{\Theta}^{(i)\top}\boldsymbol{Z}^\top\left(\boldsymbol{Y}-\boldsymbol{Z\Theta}^{(i)}\hat{\boldsymbol{\beta}}_{\textsf{2SLS}}\right)}_{\boldsymbol{\Delta}_2\boldsymbol{\beta}^{(t)}}.
		\end{split}
    \end{align}
    To simplify notations, let 
    \begin{align*}
		\boldsymbol{R}^{(t)}&\coloneqq\boldsymbol{\Theta}^{(t)} - \hat{\boldsymbol{\Theta}} = \left(\boldsymbol{I}-\eta\boldsymbol{Z}^\top\boldsymbol{Z}\right)^t\left(\boldsymbol{\Theta}^{(0)} - \hat{\boldsymbol{\Theta}}\right),\\
		\boldsymbol{V}^{(t)}&\coloneqq\left(\boldsymbol{I}-\alpha\hat{\boldsymbol{\Theta}}^\top\boldsymbol{Z}^\top\boldsymbol{Z}\hat{\boldsymbol{\Theta}}\right)^{t},\\
		\boldsymbol{W}^{(t)}&\coloneqq\boldsymbol{R}^{(t)\top}\boldsymbol{Z}^\top\boldsymbol{Z}\hat{\boldsymbol{\Theta}}+\hat{\boldsymbol{\Theta}}^\top\boldsymbol{Z}^\top\boldsymbol{Z}\boldsymbol{R}^{(t)}+\boldsymbol{R}^{(t)\top}\boldsymbol{Z}^\top\boldsymbol{Z}\boldsymbol{R}^{(t)}.
	\end{align*}
    With learning rates $0<\alpha<\frac{2}{\sigma_{\max}^2(\boldsymbol{Z}\hat{\boldsymbol{\Theta}})}$, $0<\eta<\frac{2}{\sigma_{\max}^2(\boldsymbol{Z})}$, let $\gamma(\alpha):=\rho\left(\boldsymbol{I}-\alpha\hat{\boldsymbol{\Theta}}^\top\boldsymbol{Z}^\top\boldsymbol{Z}\hat{\boldsymbol{\Theta}}\right)$. Then it follows that $0<\gamma(\alpha)<1$. We have:
    \begin{align}\label{eq:RBound}
        \left\|\boldsymbol{R}^{(t)}\right\| \leq \kappa(\eta)^{t}\left\|\boldsymbol{\Theta}^{(0)}-\hat{\boldsymbol{\Theta}}\right\|,
    \end{align}
    \begin{align*}
        \left\|\boldsymbol{V}^{(t)}\right\| \leq \gamma(\alpha)^{t},\end{align*}
        and
   \begin{align*}
		\begin{split}
			\left\|\boldsymbol{W}^{(t)}\right\|&=\left\|\boldsymbol{R}^{(t)\top}\boldsymbol{Z}^\top\boldsymbol{Z}\hat{\boldsymbol{\Theta}}+\hat{\boldsymbol{\Theta}}^\top\boldsymbol{Z}^\top\boldsymbol{ZR}^{(t)}+\boldsymbol{R}^{(t)\top}\boldsymbol{Z}^\top\boldsymbol{ZR}^{(t)}\right\|\\
			&\leq 2\left\|\hat{\boldsymbol{\Theta}}^\top\boldsymbol{Z}^\top\boldsymbol{Z}\right\|\left\|\boldsymbol{R}^{(t)}\right\| +\left\| \boldsymbol{Z}^\top\boldsymbol{Z}\right\|\left\|\boldsymbol{R}^{(t)}\right\|^2\\
			&\leq 2\kappa(\eta)^{t}\left\|\hat{\boldsymbol{\Theta}}\boldsymbol{Z}^\top\boldsymbol{Z}\right\|\left\|\boldsymbol{\Theta}^{(0)}-\hat{\boldsymbol{\Theta}}\right\| +\kappa(\eta)^{2t}\left\|\boldsymbol{Z}^\top\boldsymbol{Z}\right\|\left\|\boldsymbol{\Theta}^{(0)}-\hat{\boldsymbol{\Theta}}\right\|^2\\
			&\leq \kappa(\eta)^t \left(2\left\|\hat{\boldsymbol{\Theta}}\boldsymbol{Z}^\top\boldsymbol{Z}\right\|\left\|\boldsymbol{\Theta}^{(0)}-\hat{\boldsymbol{\Theta}}\right\|+\left\|\boldsymbol{Z}^\top\boldsymbol{Z}\right\|\left\|\boldsymbol{\Theta}^{(0)}-\hat{\boldsymbol{\Theta}}\right\|^2\right)\\
			&=\mathcal{O}(\kappa(\eta)^t).
	  	\end{split}
	\end{align*}
	Then from equation (\ref{eq:diffbeta}), we have:
    \begin{align*}
        \begin{split}
			\boldsymbol{\Delta}_1\boldsymbol{\beta}^{(t)}&=\prod_{i=0}^{t-1}\left(\boldsymbol{I}-\alpha\boldsymbol{\Theta}^{(i)\top}\boldsymbol{Z}^\top\boldsymbol{Z\Theta}^{(i)}\right)\left(\boldsymbol{\beta}^{(0)}-\hat{\boldsymbol{\beta}}_{\textsf{2SLS}}\right)\\
			&=\prod_{i=0}^{t-1}\left[\boldsymbol{I}-\alpha \hat{\boldsymbol{\Theta}}^\top\boldsymbol{Z}^\top\boldsymbol{Z}\hat{\boldsymbol{\Theta}}-\alpha\left(\boldsymbol{R}^{(i)\top}\boldsymbol{Z}^\top\boldsymbol{Z}\hat{\boldsymbol{\Theta}}+\hat{\boldsymbol{\Theta}}^\top\boldsymbol{Z}^\top\boldsymbol{Z}\boldsymbol{R}^{(i)}+\boldsymbol{R}^{(i)\top}\boldsymbol{Z}^\top\boldsymbol{Z}\boldsymbol{R}^{(i)}\right)\right]\left(\boldsymbol{\beta}^{(0)}-\hat{\boldsymbol{\beta}}_{\textsf{2SLS}}\right)\\
			&=\prod_{i=0}^{t-1}\left[\boldsymbol{I}-\alpha\hat{\boldsymbol{\Theta}}^\top\boldsymbol{Z}^\top\boldsymbol{Z}\hat{\boldsymbol{\Theta}}-\alpha\boldsymbol{W}^{(i)}\right]\left(\boldsymbol{\beta}^{(0)}-\hat{\boldsymbol{\beta}}_{\textsf{2SLS}}\right)\\
			&=\left(\boldsymbol{I}-\alpha \hat{\boldsymbol{\Theta}}^\top\boldsymbol{Z}^\top\boldsymbol{Z}\hat{\boldsymbol{\Theta}}\right)^{t}\prod_{i=0}^{t-1}\left[\boldsymbol{I}-\alpha\left(\boldsymbol{I}-\alpha \hat{\boldsymbol{\Theta}}^\top\boldsymbol{Z}^\top\boldsymbol{Z}\hat{\boldsymbol{\Theta}}\right)^{-1}\boldsymbol{W}^{(i)}\right]\left(\boldsymbol{\beta}^{(0)}-\hat{\boldsymbol{\beta}}_{\textsf{2SLS}}\right)\\
			&=\boldsymbol{V}^{(t)}\prod_{i=0}^{t-1}\left[\boldsymbol{I}-\alpha\left(\boldsymbol{I}-\alpha \hat{\boldsymbol{\Theta}}^\top\boldsymbol{Z}^\top\boldsymbol{Z}\hat{\boldsymbol{\Theta}}\right)^{-1}\boldsymbol{W}^{(i)}\right]\left(\boldsymbol{\beta}^{(0)}-\hat{\boldsymbol{\beta}}_{\textsf{2SLS}}\right).
        \end{split}
	\end{align*}
	We denote $\boldsymbol{\Psi} := \alpha\left(\boldsymbol{I}-\alpha\hat{\boldsymbol{\Theta}}^\top\boldsymbol{Z}^\top\boldsymbol{Z}\hat{\boldsymbol{\Theta}}\right)^{-1}$. By Lemma ~\ref{lemma:convergence}, we take $\varepsilon=1$, $c_0$ be a constant such that $\left\|\boldsymbol{W}^{(t)}\right\|_F\leq c_0\kappa(\eta)^t$, and $T_0=\lceil \log_{\kappa(\eta)}\frac{(1-\kappa(\eta))}{\|\Psi\|_Fc_0(p+(1-\kappa(\eta)))}\rceil$.

	Then we have:
    \begin{align}\label{eq:ProductBound}
        \left\|\prod_{i=T_0}^{t-1}\left(\boldsymbol{I}-\boldsymbol{\Psi W}^{(i)}\right)\right\|\leq \left\|\prod_{i=T_0}^{t-1}\left(\boldsymbol{I}-\boldsymbol{\Psi W}^{(i)}\right)\right\|_F< \sqrt{p}+1.
    \end{align}
    Hence
	\begin{align}\label{eq:Delta1betaBound}
        \begin{split}
            \left\|\boldsymbol{\Delta}_1\boldsymbol{\beta}^{(t)}\right\| &= \left\|\boldsymbol{V}^{(t)}\prod_{i=0}^{t-1}(\boldsymbol{I}-\boldsymbol{\Psi W}^{(i)})(\boldsymbol{\beta}^{(0)}-\hat{\boldsymbol{\beta}}_{\textsf{2SLS}})\right\|\\
            &\leq \left\|\boldsymbol{V}^{(t)}\right\|\left\|\prod_{i=0}^{T_0-1}\left(\boldsymbol{I}-\boldsymbol{\Psi W}^{(i)}\right)\right\|\left\|\prod_{i=T_0}^{t-1}(\boldsymbol{I}-\boldsymbol{\Psi W}^{(i)})\right\|\left\|\boldsymbol{\beta}^{(0)}-\hat{\boldsymbol{\beta}}_{\textsf{2SLS}}\right\|\\
            &<\gamma(\alpha)^{t}\left\|\prod_{i=0}^{T_0-1}\left(\boldsymbol{I}-\boldsymbol{\Psi W}^{(i)}\right)\right\|\left(\sqrt{p}+1\right)\left\|\boldsymbol{\beta}^{(0)}-\hat{\boldsymbol{\beta}}_{\textsf{2SLS}}\right\|\\
            &=\mathcal{O}(\gamma(\alpha)^{t}).
        \end{split}
    \end{align}
	Next we consider $\boldsymbol{\Delta}_2\boldsymbol{\beta}^{(t)}$:
    \begin{align*}
		\begin{split}
			\boldsymbol{\Delta}_2\boldsymbol{\beta}^{(t)} &= \sum_{i=0}^{t-1}\alpha\left[\prod_{j=i+1}^{t-1}\left(\boldsymbol{I} - \alpha\boldsymbol{\Theta}^{(j)\top}\boldsymbol{Z}^\top\boldsymbol{Z\Theta}^{(j)}\right) \right]\boldsymbol{\Theta}^{(i)\top}\boldsymbol{Z}^\top\left(\boldsymbol{Y}-\boldsymbol{Z\Theta}^{(i)}\hat{\boldsymbol{\beta}}_{\textsf{2SLS}}\right)\\
			&=\sum_{i=0}^{t-1}\alpha\left[\prod_{j=i+1}^{t-1}\left(\boldsymbol{I} - \alpha\hat{\boldsymbol{\Theta}}^\top\boldsymbol{Z}^\top\boldsymbol{Z}\hat{\boldsymbol{\Theta}}-\alpha\boldsymbol{W}^{(j)}\right)\right]\left(\boldsymbol{R}^{(i)} + \hat{\boldsymbol{\Theta}}\right)^\top\boldsymbol{Z}^\top\left[\boldsymbol{Y}-\boldsymbol{Z}\left(\boldsymbol{R}^{(i)} + \hat{\boldsymbol{\Theta}}\right)\hat{\boldsymbol{\beta}}_{\textsf{2SLS}}\right]\\
			&=\sum_{i=0}^{t-1}\alpha\left(\boldsymbol{I}-\alpha\hat{\boldsymbol{\Theta}}^\top\boldsymbol{Z}^\top\boldsymbol{Z}\hat{\boldsymbol{\Theta}}\right)^{t-1-i}\prod_{j=i+1}^{t-1}\left[\boldsymbol{I} -\alpha\left(\boldsymbol{I}-\alpha\hat{\boldsymbol{\Theta}}^\top\boldsymbol{Z}^\top\boldsymbol{Z}\hat{\boldsymbol{\Theta}}\right)^{-1}\boldsymbol{W}^{(j)}\right]\\&\quad\quad\quad\quad\cdot\left(\boldsymbol{R}^{(i)}+\hat{\boldsymbol{\Theta}}\right)^\top\boldsymbol{Z}^\top\left[\boldsymbol{Y}-\boldsymbol{Z}\left(\boldsymbol{R}^{(i)} + \hat{\boldsymbol{\Theta}}\right)\hat{\boldsymbol{\beta}}_{\textsf{2SLS}}\right]\\
			&=\sum_{i=0}^{t-1}\alpha\left[\boldsymbol{V}^{(t-1-i)}\prod_{j=i+1}^{t-1}\left(\boldsymbol{I} -\boldsymbol{\Psi}\boldsymbol{W}^{(j)}\right)\right]\left(\boldsymbol{R}^{(i)}+\hat{\boldsymbol{\Theta}}\right)^\top\boldsymbol{Z}^\top\left[\boldsymbol{Y}-\boldsymbol{Z}\left(\boldsymbol{R}^{(i)} + \hat{\boldsymbol{\Theta}}\right)\hat{\boldsymbol{\beta}}_{\textsf{2SLS}}\right].
		\end{split} 
   \end{align*}
   For convenience, let $\boldsymbol{\Delta}_2\boldsymbol{\beta}^{(t)}:=\boldsymbol{\Delta}_{21}\boldsymbol{\beta}^{(t)}+\boldsymbol{\Delta}_{22}\boldsymbol{\beta}^{(t)}$, where
    \begin{align*}
        \boldsymbol{\Delta}_{21}\boldsymbol{\beta}^{(t)}:&=\sum_{i=0}^{t-1}\alpha\left[\boldsymbol{V}^{(t-1-i)}\prod_{j=i+1}^{t-1}\left(\boldsymbol{I} -\boldsymbol{\Psi}\boldsymbol{W}^{(j)}\right)\right]\boldsymbol{R}^{(i)\top}\boldsymbol{Z}^\top\left[\boldsymbol{Y}-\boldsymbol{Z}\left(\boldsymbol{R}^{(i)} + \hat{\boldsymbol{\Theta}}\right)\hat{\boldsymbol{\beta}}_{\textsf{2SLS}}\right],\\
        \boldsymbol{\Delta}_{22}\boldsymbol{\beta}^{(t)}:&=\sum_{i=0}^{t-1}\alpha\left[\boldsymbol{V}^{(t-1-i)}\prod_{j=i+1}^{t-1}\left(\boldsymbol{I} -\boldsymbol{\Psi}\boldsymbol{W}^{(j)}\right)\right]\hat{\boldsymbol{\Theta}}^\top\boldsymbol{Z}^\top\left[\boldsymbol{Y}-\boldsymbol{Z}\left(\boldsymbol{R}^{(i)} + \hat{\boldsymbol{\Theta}}\right)\hat{\boldsymbol{\beta}}_{\textsf{2SLS}}\right].
    \end{align*}
    Suppose $\tilde{M}_1,\tilde{M}_2$ are the upper bounds such that 
    \begin{align*}
        \left\|\boldsymbol{Z}^\top\left[\boldsymbol{Y}-\boldsymbol{Z}\left(\boldsymbol{R}^{(i)} + \hat{\boldsymbol{\Theta}}\right)\hat{\boldsymbol{\beta}}_{\textsf{2SLS}}\right]\right\|\leq \tilde{M}_1,\quad \forall i=0,\ldots,t-1,
    \end{align*}
    \begin{align*}
        \left\|\prod_{j=i+1}^{t-1}\left(\boldsymbol{I} -\boldsymbol{\Psi}\boldsymbol{W}^{(j)}\right)\right\|\leq \tilde{M}_2, \quad\forall i=0,\ldots,t-1.
    \end{align*}
    We know such $\tilde{M}_1,\tilde{M}_2$ exist because of the bounds given by (\ref{eq:RBound}) and (\ref{eq:ProductBound}). Let $\tilde{M}=\tilde{M}_1\tilde{M}_2$. Then
	\begin{align*}
        \begin{split}
            \left\|\boldsymbol{\Delta}_{21}\boldsymbol{\beta}^{(t)}\right\|&\leq \tilde{M}\left\|\sum_{i=0}^{t-1}\alpha\boldsymbol{V}^{(t-1-i)}\boldsymbol{R}^{(i)\top}\right\|\\
            &\leq \tilde{M}\alpha\sum_{i=0}^{t-1}\left\|\boldsymbol{V}^{(t-1-i)}\right\|\left\|\boldsymbol{R}^{(i)}\right\|\\
            &\leq \tilde{M}\alpha\left\|\boldsymbol{\Theta}^{(0)}-\hat{\boldsymbol{\Theta}}\right\|\sum_{i=0}^{t-1}\gamma(\alpha)^{t-1-i}\kappa(\eta)^i\\
            &= \tilde{M}\alpha\left\|\boldsymbol{\Theta}^{(0)}-\hat{\boldsymbol{\Theta}}\right\|\sum_{i=0}^{t-1}\gamma(\alpha)^{t-1}\left(\frac{\kappa(\eta)}{\gamma(\alpha)}\right)^i\\
            &=\mathcal{O}\left(\frac{\gamma(\alpha)^t-\kappa(\eta)^t}{\gamma(\alpha)-\kappa(\eta)}\right)\\
            &\leq\mathcal{O}(\max\{\gamma(\alpha)^t,\kappa(\eta)^t\}),
        \end{split} 
    \end{align*}
	and similarly,
    \begin{align*}
      \begin{split}
          \left\|\boldsymbol{\Delta}_{22}\boldsymbol{\beta}^{(t)}\right\|&\leq \tilde{M}\left\|\sum_{i=0}^{t-1}\alpha\boldsymbol{V}^{(t-1-i)}\hat{\boldsymbol{\Theta}}^{\top}\right\|\\
          &\leq \tilde{M}\alpha\left\|\hat{\boldsymbol{\Theta}}\right\|\sum_{i=0}^{t-1}\left\|\boldsymbol{V}^{(t-1-i)}\right\|\\
        &\leq \tilde{M}\alpha\left\|\hat{\boldsymbol{\Theta}}\right\|\sum_{i=0}^{t-1}\gamma(\alpha)^{t-1-i}\\
        &=\mathcal{O}(\gamma(\alpha)^t).
        \end{split}
    \end{align*}
	Thus
	\begin{align}\label{eq:Delta2betaBound}
		\begin{split}
			\left\|\boldsymbol{\Delta}_2\boldsymbol{\beta}^{(t)}\right\|&= \left\|\boldsymbol{\Delta}_{21}\boldsymbol{\beta}^{(t)}+\boldsymbol{\Delta}_{22}\boldsymbol{\beta}^{(t)}\right\|\\
			&\leq \left\|\boldsymbol{\Delta}_{21}\boldsymbol{\beta}^{(t)}\right\|+\left\|\boldsymbol{\Delta}_{22}\boldsymbol{\beta}^{(t)}\right\|\\
			&\leq\mathcal{O}(\max\{\gamma(\alpha)^t,\kappa(\eta)^t\}).
		\end{split}
	\end{align}
	Therefore, plugging (\ref{eq:Delta1betaBound}) and (\ref{eq:Delta2betaBound}) into (\ref{eq:diffbeta}), we have:
	\begin{align*}
		\left\|\boldsymbol{\beta}^{(t)} - \hat{\boldsymbol{\beta}}_{\textsf{2SLS}}\right\| \leq \mathcal{O}(\max\{\gamma(\alpha)^t,\kappa(\eta)^t\}),
	\end{align*}
	Hence completes the proof.
\end{proof}

\subsection{Proof of Theorem \ref{thm:transformer}}\label{proof:thm:transformer}
\begin{proof}[Proof of Theorem \ref{thm:transformer}]
   For ease of notations, we ignore $l$ in the following proof. Consider the input matrix taking the form:
   \begin{align*}
       \boldsymbol{H}^{(0)}=\begin{bmatrix}
           \boldsymbol{z}_1 & \cdots & \boldsymbol{z}_n &\boldsymbol{z}_{n+1}\\ \boldsymbol{x}_1 & \cdots &\boldsymbol{x}_n &\boldsymbol{x}_{n+1}\\ y_1 &\cdots & y_n & 0\\ \boldsymbol{\Theta}_{:,1}^{(0)} & \cdots& \boldsymbol{\Theta}_{:,1}^{(0)} & \boldsymbol{\Theta}_{:,1}^{(0)}\\\vdots & \vdots &\vdots&\vdots \\\boldsymbol{\Theta}_{:,p}^{(0)} & \cdots &\boldsymbol{\Theta}_{:,p}^{(0)} & \boldsymbol{\Theta}_{:,p}^{(0)} \\\boldsymbol{\beta}^{(0)} &\cdots&\boldsymbol{\beta}^{(0)}&\boldsymbol{\beta}^{(0)}\\ \hat{\boldsymbol{x}}_1^{(0)} & \cdots&\hat{\boldsymbol{x}}_n^{(0)}& \hat{\boldsymbol{x}}_{n+1}^{(0)}\\
           1&\cdots &1&1\\1&\cdots &1&0
       \end{bmatrix}  \in \mathbb{R}^{D\times (n+1)},
   \end{align*}
    i.e., element-wise, 
    \begin{align*}\boldsymbol{h}_i^{(0)}=\left(\boldsymbol{z}_i,\boldsymbol{x}_i,y_it_i,\boldsymbol{\Theta}_{:,1}^{(0)},\ldots,\boldsymbol{\Theta}_{:,p}^{(0)},\boldsymbol{\beta}^{(0)},\hat{\boldsymbol{x}}_i^{(0)},1,t_i\right)^
    \top,\quad i=1,\ldots,n+1,\end{align*}

    where $D=qp+3p+q+3$, $t_i:=\mathbbm{1}\{i\leq n\}$ is the indicator for training sample. We can take any initialization for $\boldsymbol{\Theta}^{(0)}$, $\boldsymbol{\beta}^{(0)}$ and $\hat{\boldsymbol{x}}^{(0)}$. To avoid abuse of notations, we omit the superscript of those parameters to be updated in the following proof.

    Recall the definitions (\ref{def:H}) and (\ref{def:h}). Our goal is to show that there exists a series of attention parameters $\boldsymbol{\theta}_{\textsf{ATTN}}^{(1:2)} = \{(\boldsymbol{Q}_m^{(1:2)}, \boldsymbol{K}_m^{(1:2)}, \boldsymbol{V}_m^{(1:2)})\}_{m\in[M]}\subset \mathbb{R}^{D\times D}$ such that $\boldsymbol{\theta}_{\textsf{ATTN}}^{(1:2)}$ updates $\boldsymbol{\Theta},\boldsymbol{\beta}$  on the corresponding rows. i.e, if we denote $D_0:=q+p+1$, the updates on row $D_0+1$ to row $D_0+qp$ correspond to $\boldsymbol{\Theta}$, and the updates on row $D_0+qp+1$ to row $D_0+qp+p$ correspond to $\boldsymbol{\beta}$.

	\noindent   \textbf{1) In the first layer, the transformer updates the current first-stage estimate $\hat{\boldsymbol{x}}$}.

    For $m=2k-1,k=1,\ldots,p$, define $\boldsymbol{Q}_m^{(1)},\boldsymbol{K}_m^{(1)},\boldsymbol{V}_m^{(1)}$ such that:
    \begin{align} \label{eq:first layer construction 1}
        \boldsymbol{Q}_m^{(1)}\boldsymbol{h}_i^{(0)}=\begin{bmatrix}                                                       
           z_{i1}\\ \vdots\\z_{iq}\\ \hat{x}_{ik}^{(0)}\\\boldsymbol{0}
        \end{bmatrix},
        \boldsymbol{K}_m^{(1)}\boldsymbol{h}_j^{(0)}=\begin{bmatrix}
           \boldsymbol{\Theta}_{1k}^{(0)}\\\vdots\\ \boldsymbol{\Theta}_{qk}^{(0)}\\ -1\\\boldsymbol{0}
        \end{bmatrix},       
        \boldsymbol{V}_m^{(1)}\boldsymbol{h}_j^{(0)}=\boldsymbol{e}_{D_0+qp+p+k}.
      \end{align}

      For $m=2k,k=1,\ldots,p$, define $\boldsymbol{Q}_m^{(1)},\boldsymbol{K}_m^{(1)},\boldsymbol{V}_m^{(1)}$ such that:
      \begin{align} \label{eq:first layer construction 2}
         \boldsymbol{Q}_m^{(1)}\boldsymbol{h}_i^{(0)}=\begin{bmatrix}                                                       
             -z_{i1}\\ \vdots\\-z_{iq}\\  \hat{x}_{ik}^{(0)}\\\boldsymbol{0}
          \end{bmatrix},
          \boldsymbol{K}_m^{(1)}\boldsymbol{h}_j^{(0)}=\begin{bmatrix}
             \boldsymbol{\Theta}_{1k}^{(0)}\\\vdots\\\boldsymbol{\Theta}_{qk}^{(0)}\\ 1\\\boldsymbol{0}
          \end{bmatrix},       
          \boldsymbol{V}_m^{(1)}\boldsymbol{h}_j^{(0)}=-\boldsymbol{e}_{D_0+qp+p+k},
      \end{align}
	where $\boldsymbol{e}_j\in \mathbb{R}^{D}$ is the standard unit vector with only one 1 at the $j$-th coordinate. Note that the above are just linear transformations on $\boldsymbol{h}_i$ or $\boldsymbol{h}_j$, hence such matrices $\boldsymbol{Q}_m^{(1)},\boldsymbol{K}_m^{(1)},\boldsymbol{V}_m^{(1)}$ must exist. Then we have:
	\begin{align*}
		\begin{split}
			\boldsymbol{h}_i^{(1)}&=\boldsymbol{h}_i^{(0)} + \sum_{m=1}^{2p}\frac{1}{n+1}\sum_{j=1}^{n+1}\sigma\left(\langle\boldsymbol{Q}_m^{(1)}\boldsymbol{h}_i^{(0)}, \boldsymbol{K}_m^{(1)}\boldsymbol{h}_j^{(0)}\rangle\right)\cdot \boldsymbol{V}_m^{(1)}\boldsymbol{h}_j^{(0)}\\
			&=\boldsymbol{h}_i^{(0)}+\sum_{k=1}^p\frac{1}{n+1}\sum_{j=1}^{n+1}\left[\sigma\left(\sum_{l=1}^qz_{il}\boldsymbol{\Theta}_{lk}^{(0)}- \hat{x}_{ik}^{(0)}\right)-\sigma\left(-\sum_{l=1}^qz_{il}\boldsymbol{\Theta}_{lk}^{(0)}+\hat{x}_{ik}^{(0)}\right)\right] \cdot\boldsymbol{e}_{D_0+qp+p+k}\\
			&=\boldsymbol{h}_i^{(0)}+\sum_{k=1}^p\left[\sum_{l=1}^qz_{il}\boldsymbol{\Theta}_{lk}^{(0)}-\hat{x}_{ik}^{(0)}\right]\boldsymbol{e}_{D_0+qp+p+k}\\
			&=\boldsymbol{h}_i^{(0)}+\sum_{k=1}^p\left(\hat{x}_{ik}^{(1)}-\hat{x}_{ik}^{(0)}\right)\boldsymbol{e}_{D_0+qp+p+k}.
		\end{split}
	\end{align*}
	Thus this layer correctly updates the first-stage prediction values  $\hat{\boldsymbol{x}}^{(1)}_1,\ldots,\hat{\boldsymbol{x}}_{n+1}^{(1)}$, where $\hat{\boldsymbol{x}}_i^{(1)}:=[\boldsymbol{Z\Theta}^{(0)}]_{i,:}=\sum_{l=1}^qz_{il}\boldsymbol{\Theta}_{l,:}^{(0)}$. We will have:
	\begin{align*}
		\boldsymbol{H}^{(1)}=\begin{bmatrix}
			\boldsymbol{z}_1 & \cdots & \boldsymbol{z}_n &\boldsymbol{z}_{n+1}\\ \boldsymbol{x}_1 & \cdots &\boldsymbol{x}_n &\boldsymbol{x}_{n+1}\\ y_1 &\cdots & y_n & 0\\ \boldsymbol{\Theta}_{:,1}^{(0)} & \cdots& \boldsymbol{\Theta}_{:,1}^{(0)} & \boldsymbol{\Theta}_{:,1}^{(0)}\\\vdots & \vdots &\vdots&\vdots \\\boldsymbol{\Theta}_{:,p}^{(0)} & \cdots &\boldsymbol{\Theta}_{:,p}^{(0)} & \boldsymbol{\Theta}_{:,p}^{(0)} \\\boldsymbol{\beta}^{(0)} &\cdots&\boldsymbol{\beta}^{(0)}&\boldsymbol{\beta}^{(0)}\\ \hat{\boldsymbol{x}}_1^{(1)} & \cdots&\hat{\boldsymbol{x}}_n^{(1)}& \hat{\boldsymbol{x}}_{n+1}^{(1)}\\
			1&\cdots &1&1\\1&\cdots &1&0
		\end{bmatrix}.
	\end{align*}
	\textbf{2) In the second layer, the transformer does the gradient updates on the parameters $\boldsymbol{\Theta}$ and $\boldsymbol{\beta}$}. 

	For $m=2k-1, k=1,\ldots,p,$ define $\boldsymbol{Q}_m^{(2)},\boldsymbol{K}_m^{(2)},\boldsymbol{V}_m^{(2)}$ such that:
      \begin{align} \label{eq:second layer construction 1}
         \boldsymbol{Q}_m^{(2)}\boldsymbol{h}_i^{(1)}=\begin{bmatrix}
               \boldsymbol{\Theta}_{:,k}^{(0)}\\ -1\\-1\\ \vdots\\\boldsymbol{0}
         \end{bmatrix},
         \boldsymbol{K}_m^{(2)}\boldsymbol{h}_j^{(1)}=\begin{bmatrix}
               \boldsymbol{z}_j\\x_{jk}t_j\\R(1-t_j)\\\vdots\\ \boldsymbol{0}
         \end{bmatrix},       
         \boldsymbol{V}_m^{(2)}\boldsymbol{h}_j^{(1)}=-(n+1)\eta\sum_{l=1}^qz_{jl}\boldsymbol{e}_{D_0+(k-1)q+l}.
      \end{align}

   For $m=2k,k=1,\ldots,p$, define $\boldsymbol{Q}_m^{(2)},\boldsymbol{K}_m^{(2)},\boldsymbol{V}_m^{(2)}$  such that:
      \begin{align} \label{eq:second layer construction 2}
        \boldsymbol{Q}_m^{(2)}\boldsymbol{h}_i^{(1)}=\begin{bmatrix}
            -\boldsymbol{\Theta}_{:,k}^{(0)}\\ 1\\-1\\ \vdots\\\boldsymbol{0}
        \end{bmatrix},
        \boldsymbol{K}_m^{(2)}\boldsymbol{h}_j^{(1)}=\begin{bmatrix}
            \boldsymbol{z}_j\\x_{jk}t_j\\R(1-t_j)\\ \vdots\\\boldsymbol{0}
        \end{bmatrix},       
        \boldsymbol{V}_m^{(2)}\boldsymbol{h}_j^{(1)}=(n+1)\eta\sum_{l=1}^qz_{jl}\boldsymbol{e}_{D_0+(k-1)q+l},
      \end{align}
	where $R=\underset{\substack{i = 1, \ldots, n+1\\t = 0, 1, \ldots}}{\max}\{||\boldsymbol{\Theta}^{(t)\top}\boldsymbol{z}_i||\}$. Then we have:
    \begin{align*}
        \begin{split}
            \sigma\left(\langle\boldsymbol{Q}_{2k-1}^{(2)}\boldsymbol{h}_i^{(1)},\boldsymbol{K}_{2k-1}^{(2)}\boldsymbol{h}_j^{(1)}\rangle\right)
            &=\sigma\left(\boldsymbol{\Theta}_{:,k}^{(0)\top}\boldsymbol{z}_j-x_{jk}t_j-R(1-t_j)\right)\\
            &=\sigma\left(\boldsymbol{\Theta}_{:,k}^{(0)\top}\boldsymbol{z}_j-x_{jk}\right)\mathbbm{1}\{t_j=1\}\\
            &=\sigma\left(\boldsymbol{\Theta}_{:,k}^{(0)\top}\boldsymbol{z}_j-x_{jk}\right)t_j,\\
        \end{split}
    \end{align*}
    and
    \begin{align*}
        \begin{split}
            \sigma\left(\langle\boldsymbol{Q}_{2k}^{(2)}\boldsymbol{h}_i^{(1)},\boldsymbol{K}_{2k}^{(2)}\boldsymbol{h}_j^{(1)}\rangle\right)
            &=\sigma\left(-\boldsymbol{\Theta}_{:,k}^{(0)\top}\boldsymbol{z}_j+x_{jk}t_j-R(1-t_j)\right)\\
            &=\sigma\left(-\boldsymbol{\Theta}_{:,k}^{(0)\top}\boldsymbol{z}_j+x_{jk}\right)\mathbbm{1}\{t_j=1\}\\
            &=\sigma\left(-\boldsymbol{\Theta}_{:,k}^{(0)\top}\boldsymbol{z}_j+x_{jk}\right)t_j.
        \end{split}
    \end{align*}
    So that
    \begin{align*}
        \begin{split}
            &\sum_{m=1}^{2p}  \sigma\left(\langle\boldsymbol{Q}_{m}^{(2)}\boldsymbol{h}_i^{(1)},\boldsymbol{K}_{m}^{(2)}\boldsymbol{h}_j^{(1)}\rangle\right)\boldsymbol{V}_m^{(2)}\boldsymbol{h}_j^{(1)}\\
            &=-(n+1)t_j\eta\sum_{k=1}^p\left[\sigma\left(\boldsymbol{\Theta}_{:,k}^{(0)\top}\boldsymbol{z}_j-x_{jk}\right)-\sigma\left(-\boldsymbol{\Theta}_{:,k}^{(0)\top}\boldsymbol{z}_j+x_{jk}\right)\right]\cdot\sum_{l=1}^qz_{jl}\boldsymbol{e}_{D_0+(k-1)q+l} \\
            &=-(n+1)t_j\eta\sum_{k=1}^p\sum_{l=1}^qz_{jl}\left(\boldsymbol{\Theta}_{:,k}^{(0)\top}\boldsymbol{z}_j-x_{jk}\right)\boldsymbol{e}_{D_0+(k-1)q+l}.
        \end{split}
    \end{align*} 
	Similarly, for $m=2p+1, 2p+2$, define $\boldsymbol{Q}_m^{(2)},\boldsymbol{K}_m^{(2)},\boldsymbol{V}_m^{(2)}$  such that:
	\begin{align} \label{eq:second layer construction 3}
		\boldsymbol{Q}_{2p+1}^{(2)}\boldsymbol{h}_i^{(1)}=\begin{bmatrix}
			\boldsymbol{\beta}^{(0)}\\ -1\\-1\\\vdots\\ \boldsymbol{0}
		\end{bmatrix},
		\boldsymbol{K}_{2p+1}^{(2)}\boldsymbol{h}_j^{(1)}=\begin{bmatrix}
			\hat{\boldsymbol{x}}_j^{(1)}\\y_jt_j\\R'(1-t_j)\\ \vdots\\\boldsymbol{0}
		\end{bmatrix},       
		\boldsymbol{V}_{2p+1}\boldsymbol{h}_j^{(1)}=-(n+1)\alpha\sum_{l=1}^p\hat{x}_{jl}^{(1)}\boldsymbol{e}_{D_0+qp+l},
	\end{align}
	\begin{align} \label{eq:second layer construction 4}
		\boldsymbol{Q}_{2p+2}^{(2)}\boldsymbol{h}_i^{(1)}=\begin{bmatrix}
			-\boldsymbol{\beta}^{(0)}\\ 1\\-1\\\vdots\\ \boldsymbol{0}
		\end{bmatrix},
		\boldsymbol{K}_{2p+2}^{(2)}\boldsymbol{h}_j^{(1)}=\begin{bmatrix}
			\hat{\boldsymbol{x}}_j^{(1)}\\y_jt_j\\R'(1-t_j)\\ \vdots\\ \boldsymbol{0}
		\end{bmatrix},       
		\boldsymbol{V}_{2p+2}\boldsymbol{h}_j^{(1)}=(n+1)\alpha\sum_{l=1}^p\hat{x}_{jl}^{(1)}\boldsymbol{e}_{D_0+qp+l},
	\end{align}
	where $R'=\underset{\substack{i = 1, \ldots, n+1\\t = 0, 1, \ldots}}{\max}\{|\boldsymbol{\beta}^{(t)\top}\boldsymbol{x}_i|\}$. Then  
	\begin{align*}
		\begin{split}
			\sigma\left(\langle \boldsymbol{Q}_{2p+1}^{(2)}\boldsymbol{h}_i^{(1)},\boldsymbol{K}_{2p+1}^{(2)}\boldsymbol{h}_j^{(1)}\rangle\right)
			&=\sigma\left(\boldsymbol{\beta}^{(0)\top}\hat{\boldsymbol{x}}_j^{(1)}-y_jt_j-R'(1-t_j)\right)\\
			&=\sigma\left(\boldsymbol{\beta}^{(0)\top}\hat{\boldsymbol{x}}_j^{(1)}-y_j\right)\mathbbm{1}\{t_j=1\}\\
			&=\sigma\left(\boldsymbol{\beta}^{(0)\top}\hat{\boldsymbol{x}}_j^{(1)}-y_j\right)t_j,
			\end{split}
			\end{align*}
			and
			\begin{align*}
			\begin{split}
			\sigma\left(\langle\boldsymbol{Q}_{2p+2}^{(2)}\boldsymbol{h}_i^{(1)},\boldsymbol{K}_{2p+2}^{(2)}\boldsymbol{h}_j^{(1)}\rangle\right)
			&=\sigma\left(-\boldsymbol{\beta}^{(0)\top}\hat{\boldsymbol{x}}_j^{(1)}+y_jt_j-R'(1-t_j)\right)\\
			&=\sigma\left(-\boldsymbol{\beta}^{(0)\top}\hat{\boldsymbol{x}}_j^{(1)}+y_j\right)\mathbbm{1}\{t_j=1\}\\
			&=\sigma\left(-\boldsymbol{\beta}^{(0)\top}\hat{\boldsymbol{x}}_j^{(1)}+y_j\right)t_j.
		\end{split}
	\end{align*}
	So that
	\begin{align*}
		\begin{split}
			&\sum_{m=2p+1}^{2p+2} \sigma\left(\langle\boldsymbol{Q}_{m}^{(2)}\boldsymbol{h}_i^{(1)},\boldsymbol{K}_{m}^{(2)}\boldsymbol{h}_j^{(1)}\rangle\right)\boldsymbol{V}_m^{(2)}\boldsymbol{h}_j^{(1)}\\
			&=-(n+1)t_j\alpha\left[\sigma\left(\boldsymbol{\beta}^{(0)\top}\hat{\boldsymbol{x}}_j^{(1)}-y_j\right)-\sigma\left(-\boldsymbol{\beta}^{(0)\top}\hat{\boldsymbol{x}}_j^{(1)}+y_j\right)\right]\cdot \sum_{l=1}^p\hat{x}_{jl}^{(1)}\boldsymbol{e}_{D_0+qp+l}\\
			&=-(n+1)t_j\alpha\sum_{l=1}^p\hat{x}_{jl}^{(1)}\left(\boldsymbol{\beta}^{(0)\top}\hat{\boldsymbol{x}}_j^{(1)}-y_j\right)\boldsymbol{e}_{D_0+qp+l}.
		\end{split}
	\end{align*}
	Thus the final output, for $i=1,\ldots,n+1$:
      \begin{align*}
         \begin{split}
               \boldsymbol{h}_i^{(2)}&= \boldsymbol{h}_i^{(1)} + \sum_{m=1}^{2p+2}\frac{1}{n+1}\sum_{j=1}^n\sigma\left(\langle\boldsymbol{Q}_{m}^{(2)}\boldsymbol{h}_i^{(1)},\boldsymbol{K}_{m}^{(2)}\boldsymbol{h}_j^{(1)}\rangle\right)\boldsymbol{V}_m^{(2)}\boldsymbol{h}_j^{(1)}\\
               &=\begin{bmatrix}
                  \boldsymbol{z}_i\\ \boldsymbol{x}_i\\ y_it_i \\
                  \boldsymbol{\Theta}_{:,1}^{(0)}-\eta\sum_{j=1}^n\boldsymbol{z}_j\left(\boldsymbol{z}_j^\top\boldsymbol{\Theta}_{:,1}^{(0)}-x_{j1}\right)\\
                  \vdots\\
                  \boldsymbol{\Theta}_{:,p}^{(0)}-\eta\sum_{j=1}^n\boldsymbol{z}_j\left(\boldsymbol{z}_j^\top\boldsymbol{\Theta}_{:,p}^{(0)}-x_{jp}\right)\\
                  \boldsymbol{\beta}^{(0)}-\alpha\sum_{j=1}^n\hat{\boldsymbol{x}}_j^{(1)}\left(\hat{\boldsymbol{x}}_j^{(1)\top}\boldsymbol{\beta}^{(0)}-y_j\right)\\
                  \hat{\boldsymbol{x}}_i^{(1)}\\1\\t_i
               \end{bmatrix}\\
               &=\begin{bmatrix}
                  \boldsymbol{z}_i\\ \boldsymbol{x}_i\\ y_it_i \\
                     \boldsymbol{\Theta}_{:,1}^{(0)}-\eta\boldsymbol{Z}^\top\left[\boldsymbol{Z\Theta}^{(0)}-\boldsymbol{X}\right]_{:,1}\\
                  \vdots\\
                  \boldsymbol{\Theta}_{:,p}^{(0)}-\eta\boldsymbol{Z}^\top\left[\boldsymbol{Z\Theta}^{(0)}-\boldsymbol{X}\right]_{:,p}\\
                  \boldsymbol{\beta}^{(0)}-\alpha\boldsymbol{\Theta}^{(0)\top}\boldsymbol{Z}^\top\left(\boldsymbol{Z\Theta}^{(0)}\boldsymbol{\beta}^{(0)}-\boldsymbol{y}\right)\\
                  \hat{\boldsymbol{x}}_i^{(1)}\\1\\t_i
               \end{bmatrix}.
         \end{split}
      \end{align*}
    This corresponds to a one-step 2SLS GD update of $\boldsymbol{\Theta},\boldsymbol{\beta}$, according to (\ref{eq:2SLS GD Update}). Therefore, the final output matrix is:

    \begin{align*}
        \boldsymbol{H}^{(2)}=\begin{bmatrix}
            \boldsymbol{z}_1 & \cdots & \boldsymbol{z}_n &\boldsymbol{z}_{n+1}\\ \boldsymbol{x}_1 & \cdots &\boldsymbol{x}_n &\boldsymbol{x}_{n+1}\\ y_1 &\cdots & y_n & 0\\ \boldsymbol{\Theta}_{:,1}^{(1)} & \cdots& \boldsymbol{\Theta}_{:,1}^{(1)} & \boldsymbol{\Theta}_{:,1}^{(1)}\\\vdots & \vdots &\vdots&\vdots \\\boldsymbol{\Theta}_{:,p}^{(1)} & \cdots &\boldsymbol{\Theta}_{:,p}^{(1)} & \boldsymbol{\Theta}_{:,p}^{(1)} \\\boldsymbol{\beta}^{(1)} &\cdots&\boldsymbol{\beta}^{(1)}&\boldsymbol{\beta}^{(1)}\\ \hat{\boldsymbol{x}}_1^{(1)} & \cdots&\hat{\boldsymbol{x}}_n^{(1)}& \hat{\boldsymbol{x}}_{n+1}^{(1)}\\
            1&\cdots &1&1\\1&\cdots &1&0
        \end{bmatrix}.
    \end{align*}
	Thus the proof is complete. We further note that in construction steps (\ref{eq:first layer construction 1})(\ref{eq:first layer construction 2})(\ref{eq:second layer construction 1})(\ref{eq:second layer construction 2})(\ref{eq:second layer construction 3})(\ref{eq:second layer construction 4}), regardless of the initial values of $\boldsymbol{\Theta}^{(0)},\boldsymbol{\beta}^{(0)},$ and $\hat{\boldsymbol{x}}^{(0)}$, the matrices $\boldsymbol{Q}_m^{(1:2)},\boldsymbol{K}_m^{(1:2)},\boldsymbol{V}_m^{(1:2)}$ do the same linear transformations on the input vectors. Therefore they are identical across different layers.
\end{proof}

\subsection{Proof of Theorem \ref{thm:excessloss}}\label{proof:thm:excessloss}
\begin{lemma}[Generalization of pretraining, from Theorem 20 in \cite{statistician}]\label{lemma:generalize pretraining}
    Given optimization problm (\ref{def:trainedtf}), with probability at least $1-\zeta$, the solution $\hat{\boldsymbol{\theta}}$ satisfies: 
    \begin{align*}
        L_{\textsf{ICL}}(\hat{\boldsymbol{\theta}})\leq \underset{\boldsymbol{\theta}\in\boldsymbol{\vartheta}}{\inf}L_{\textsf{ICL}}(\boldsymbol{\theta})+\mathcal{O}\left(B_y^2\sqrt{\frac{L^2(MD^2+DD')\log(2+\max\{B_\theta,R,B_y\})+\log(1/\zeta)}{N}}\right).
    \end{align*}
\end{lemma}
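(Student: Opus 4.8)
This is an instance of the generic transformer-ICL generalization bound of \citet{statistician} (their Theorem~20), and the plan is to reproduce that argument in our setting. The first step is the standard empirical-risk-minimization reduction: for any fixed $\boldsymbol{\theta}^\star\in\boldsymbol{\vartheta}_{L,M,D',B_\theta}$, since $\hat{\boldsymbol{\theta}}$ minimizes $\hat{L}_{\textsf{ICL}}$ over $\boldsymbol{\vartheta}$,
\[
L_{\textsf{ICL}}(\hat{\boldsymbol{\theta}})\le\hat{L}_{\textsf{ICL}}(\hat{\boldsymbol{\theta}})+\Delta\le\hat{L}_{\textsf{ICL}}(\boldsymbol{\theta}^\star)+\Delta\le L_{\textsf{ICL}}(\boldsymbol{\theta}^\star)+2\Delta,\qquad \Delta:=\sup_{\boldsymbol{\theta}\in\boldsymbol{\vartheta}}\bigl|\hat{L}_{\textsf{ICL}}(\boldsymbol{\theta})-L_{\textsf{ICL}}(\boldsymbol{\theta})\bigr|,
\]
and then take the infimum over $\boldsymbol{\theta}^\star$, so the whole task is to bound $\Delta$ with probability at least $1-\zeta$. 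Since both $y_{n+1,k}$ and $\widetilde{\textsf{TF}}_{\boldsymbol{\theta}}(\boldsymbol{H}_k)$ are clipped to $[-B_y,B_y]$, the per-prompt loss $\ell_k(\boldsymbol{\theta})=(y_{n+1,k}-\widetilde{\textsf{TF}}_{\boldsymbol{\theta}}(\boldsymbol{H}_k))^2$ lies in $[0,4B_y^2]$, and the prompts $\boldsymbol{H}_k$ are i.i.d.; this bounded i.i.d.\ empirical process is exactly what a covering/uniform-convergence argument controls.

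I would bound $\Delta$ by covering $\boldsymbol{\vartheta}_{L,M,D',B_\theta}$. It sits in a Euclidean ball of radius $O(B_\theta)$ in dimension $d\lesssim L(MD^2+DD')$ (each of the $L$ layers carries $M$ heads $\boldsymbol{Q}_m,\boldsymbol{K}_m,\boldsymbol{V}_m\in\mathbb{R}^{D\times D}$ plus $\boldsymbol{W}_1,\boldsymbol{W}_2$ of sizes $D'\times D$ and $D\times D'$), so its $\delta$-covering number obeys $\log\mathcal{N}(\delta)\lesssim d\log(1+B_\theta/\delta)$. One then establishes that $\boldsymbol{\theta}\mapsto\widetilde{\textsf{TF}}_{\boldsymbol{\theta}}(\boldsymbol{H})$ is Lipschitz in $\boldsymbol{\theta}$ over $\boldsymbol{\vartheta}$ with a constant $L_{\mathrm{Lip}}\le\mathrm{poly}(B_\theta,R)^{O(L)}$, uniformly over admissible inputs: this is an induction on the $L$ layers, using that each ReLU-attention or MLP map is locally Lipschitz in its parameters and in its (bounded) input and that the per-layer clip $\textsf{clip}_R$ confines every intermediate representation to norm $\le R$, with the residual connections only adding $1$ to the per-layer factor; consequently $\ell_k$ is $O(B_yL_{\mathrm{Lip}})$-Lipschitz in $\boldsymbol{\theta}$. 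Combining a union bound with Hoeffding's inequality over a $\delta$-cover gives $\sup_j|\hat{L}_{\textsf{ICL}}(\boldsymbol{\theta}_j)-L_{\textsf{ICL}}(\boldsymbol{\theta}_j)|\lesssim B_y^2\sqrt{(\log\mathcal{N}(\delta)+\log(1/\zeta))/N}$ on an event of probability $\ge1-\zeta$; passing to general $\boldsymbol{\theta}$ costs an extra discretization term $O(B_yL_{\mathrm{Lip}}\delta)$, which is $O(B_y/N)$ for $\delta=1/(NL_{\mathrm{Lip}})$. With that $\delta$, $\log(1+B_\theta/\delta)\lesssim\log(B_\theta NL_{\mathrm{Lip}})\lesssim L\log(2+\max\{B_\theta,R,B_y\})$ (the factor $L$ coming from $\log L_{\mathrm{Lip}}\lesssim L\log(\cdots)$, the $\log N$ being lower order), so $\log\mathcal{N}(\delta)\lesssim L^2(MD^2+DD')\log(2+\max\{B_\theta,R,B_y\})$, and substituting this back produces the stated bound on $\Delta$, hence on $L_{\textsf{ICL}}(\hat{\boldsymbol{\theta}})$.

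The main obstacle is the parameter-Lipschitz estimate for the clipped, $L$-layer transformer: one must track, layer by layer, how a perturbation of $(\boldsymbol{Q}_m,\boldsymbol{K}_m,\boldsymbol{V}_m,\boldsymbol{W}_1,\boldsymbol{W}_2)$ propagates through the ReLU attention, the residual links, and the clips, and show the accumulated constant is $\mathrm{poly}(B_\theta,R)^{O(L)}$ with the precise dependence needed so that $\log L_{\mathrm{Lip}}$ contributes exactly one extra power of $L$ (turning the naive $L$ into $L^2$) together with the $\log(2+\max\{B_\theta,R,B_y\})$ factor. Everything else — the ERM decomposition, the boundedness of $\ell_k$ from the clipping, the parameter count, and the Hoeffding-plus-union-bound step — is routine.
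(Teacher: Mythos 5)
The paper does not prove this lemma at all—it imports it verbatim from Theorem 20 of \citet{statistician}—so there is no in-paper argument to compare against beyond that citation. Your reconstruction (ERM decomposition, clipping-induced boundedness of the per-prompt loss, a covering of the parameter ball of dimension $O(L(MD^2+DD'))$, a parameter-Lipschitz bound of order $\mathrm{poly}(B_\theta,R)^{O(L)}$ supplying the second factor of $L$, and Hoeffding plus a union bound over the cover) is exactly the argument underlying the cited theorem, so the proposal is correct and takes essentially the same route.
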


\begin{proof}[Proof of Theorem \ref{thm:excessloss}] 
	We begin by showing the (clipped) \textsf{2SLS} predictor achieves small excess loss under in-context distribution $\boldsymbol{\mathcal{P}}$:
	\begin{align*}
		&\mathbb{E}_{\mathcal{P}}\left[\left(\langle\textsf{clip}_{B_\beta}(\hat{\boldsymbol{\beta}}_{\textsf{2SLS}}),\boldsymbol{x}_{n+1}\rangle-y_{n+1}\right)^2\right] \\&=\mathbb{E}_{\mathcal{P}}\left[\left(\langle\textsf{clip}_{B_\beta}(\hat{\boldsymbol{\beta}}_{\textsf{2SLS}})-\boldsymbol{\beta},\boldsymbol{x}_{n+1}\rangle+\langle\boldsymbol{\beta},\boldsymbol{x}_{n+1}\rangle-y_{n+1}\right)^2\right]\\
		&=\mathbb{E}_{\mathcal{P}}\left[\langle\textsf{clip}_{B_\beta}(\hat{\boldsymbol{\beta}}_{\textsf{2SLS}})-\boldsymbol{\beta},\boldsymbol{x}_{n+1}\rangle^2\right]+2E_\mathcal{P}\left[\langle\textsf{clip}_{B_\beta}(\hat{\boldsymbol{\beta}}_{\textsf{2SLS}})-\boldsymbol{\beta},\boldsymbol{x}_{n+1}\rangle\big(\langle\boldsymbol{\beta},\boldsymbol{x}_{n+1}\rangle-y_{n+1}\big)\right]\\
		&\quad\quad+\mathbb{E}_{\mathcal{P}}\left[\big(\langle\boldsymbol{\beta},\boldsymbol{x}_{n+1}\rangle-y_{n+1}\big)^2\right]\\
		&=\underbrace{\mathbb{E}_{\mathcal{P}}\left[\langle\textsf{clip}_{B_\beta}(\hat{\boldsymbol{\beta}}_{\textsf{2SLS}})-\boldsymbol{\beta},\boldsymbol{x}_{n+1}\rangle^2\right]}_{\text{Excess Loss}}+\mathbb{E}_{\mathcal{P}}\left[\big(\langle\boldsymbol{\beta},\boldsymbol{x}_{n+1}\rangle-y_{n+1}\big)^2\right],\\
	\end{align*}
	where $\mathbb{E}_\mathcal{P}[\langle\textsf{clip}_{B_\beta}(\hat{\boldsymbol{\beta}}_{\textsf{2SLS}})-\boldsymbol{\beta},\boldsymbol{x}_{n+1}\rangle\big(\langle\boldsymbol{\beta},\boldsymbol{x}_{n+1}\rangle-y_{n+1}\big)]=0$ follows from the independence between $\langle\textsf{clip}_{B_\beta}(\hat{\boldsymbol{\beta}}_{\textsf{2SLS}})-\boldsymbol{\beta},\boldsymbol{x}_{n+1}\rangle$ and $\big(\langle\boldsymbol{\beta},\boldsymbol{x}_{n+1}\rangle-y_{n+1}\big)$ with $\mathbb{E}_{\mathcal{P}}\left[\langle\boldsymbol{\beta},\boldsymbol{x}_{n+1}\rangle-y_{n+1}\right]=\mathbb{E}_{\mathcal{P}}[\epsilon_{n+1}]=0$.

	To bound the excess loss, we have
	\begin{align}\label{eq:excess loss 2sls}
		\begin{split}
			\mathbb{E}_{\mathcal{P}}\left[\langle\textsf{clip}_{B_\beta}(\hat{\boldsymbol{\beta}}_{\textsf{2SLS}})-\boldsymbol{\beta},\boldsymbol{x}_{n+1}\rangle^2\right]&=\mathbb{E}_{\mathcal{P}}\left[\left\|\boldsymbol{x}_{n+1}^\top\left(\textsf{clip}_{B_\beta}(\hat{\boldsymbol{\beta}}_{\textsf{2SLS}})-\boldsymbol{\beta}\right)\right\|^2\right]\\
			&\leq\mathbb{E}_{\mathcal{P}}\left[\left\|\boldsymbol{x}_{n+1}\right\|^2\left\|\textsf{clip}_{B_\beta}(\hat{\boldsymbol{\beta}}_{\textsf{2SLS}})-\boldsymbol{\beta}\right\|^2\right]\\
			&=\mathbb{E}_{\mathcal{P}}\left[\left\|\boldsymbol{x}_{n+1}\right\|^2\right]\mathbb{E}_{\mathcal{P}}\left[\left\|\textsf{clip}_{B_\beta}(\hat{\boldsymbol{\beta}}_{\textsf{2SLS}})-\boldsymbol{\beta}\right\|^2\right]\\
			&\leq \mathcal{O}\left(B_x^2\left(\frac{q}{n}\left(\frac{B_\beta^2}{K}+C^2(n)(\boldsymbol{\phi}^\top\boldsymbol{\Sigma}_u\boldsymbol{\phi}+\sigma_\epsilon^2)\right)\right)\right),
		\end{split}
	\end{align}
	where the last inequality follows from (\ref{eq:Emse of 2SLS}).

	Next, for the ICL loss, we have
	\begin{align}\label{eq:ICL loss decomposition}
		\begin{split}
			&L_{\textsf{ICL}}(\boldsymbol{\theta})\\
			&=\mathbb{E}_\pi\mathbb{E}_{\mathcal{P}}\left[\left(\widetilde{\textsf{TF}}_{\boldsymbol{\theta}}(\boldsymbol{H})-y_{n+1}\right)^2\right]\\
			&=\mathbb{E}_\pi\mathbb{E}_{\mathcal{P}}\left[\left(\widetilde{\textsf{TF}}_{\boldsymbol{\theta}}(\boldsymbol{H})-\langle\textsf{clip}_{B_\beta}(\hat{\boldsymbol{\beta}}_{\textsf{2SLS}}),\boldsymbol{x}_{n+1}\rangle+\langle\textsf{clip}_{B_\beta}(\hat{\boldsymbol{\beta}}_{\textsf{2SLS}}),\boldsymbol{x}_{n+1}\rangle-y_{n+1}\right)^2\right]\\
			&=\mathbb{E}_\pi\bigg\{\mathbb{E}_{\mathcal{P}}\Big[\left(\widetilde{\textsf{TF}}_{\boldsymbol{\theta}}(\boldsymbol{H})-\langle\textsf{clip}_{B_\beta}(\hat{\boldsymbol{\beta}}_{\textsf{2SLS}}),\boldsymbol{x}_{n+1}\rangle\right)^2\Big]+\mathbb{E}_{\mathcal{P}}\Big[\left(\langle\textsf{clip}_{B_\beta}(\hat{\boldsymbol{\beta}}_{\textsf{2SLS}}),\boldsymbol{x}_{n+1}\rangle-y_{n+1}\right)^2\Big]\\
			&\quad\quad+2\mathbb{E}_{\mathcal{P}}\Big[\left(\widetilde{\textsf{TF}}_{\boldsymbol{\theta}}(\boldsymbol{H})-\langle\textsf{clip}_{B_\beta}(\hat{\boldsymbol{\beta}}_{\textsf{2SLS}}),\boldsymbol{x}_{n+1}\rangle\right)\left(\langle\textsf{clip}_{B_\beta}(\hat{\boldsymbol{\beta}}_{\textsf{2SLS}}),\boldsymbol{x}_{n+1}\rangle-y_{n+1}\right)\Big]\bigg\}\\
			&\leq\mathbb{E}_\pi\bigg\{\mathbb{E}_{\mathcal{P}}\Big[\left(\widetilde{\textsf{TF}}_{\boldsymbol{\theta}}(\boldsymbol{H})-\langle\textsf{clip}_{B_\beta}(\hat{\boldsymbol{\beta}}_{\textsf{2SLS}}),\boldsymbol{x}_{n+1}\rangle\right)^2\Big]+\mathbb{E}_{\mathcal{P}}\Big[\langle\textsf{clip}_{B_\beta}(\hat{\boldsymbol{\beta}}_{\textsf{2SLS}})-\boldsymbol{\beta},\boldsymbol{x}_{n+1}\rangle^2\Big]+\mathbb{E}_{\mathcal{P}}\Big[\big(\langle\boldsymbol{\beta},\boldsymbol{x}_{n+1}\rangle-y_{n+1}\big)^2\Big]\\
			&\quad\quad+2\mathbb{E}_{\mathcal{P}}\Big[\left|\widetilde{\textsf{TF}}_{\boldsymbol{\theta}}(\boldsymbol{H})-\langle\textsf{clip}_{B_\beta}(\hat{\boldsymbol{\beta}}_{\textsf{2SLS}}),\boldsymbol{x}_{n+1}\rangle\right|\Big]\mathbb{E}_{\mathcal{P}}\Big[\left|\langle\textsf{clip}_{B_\beta}(\hat{\boldsymbol{\beta}}_{\textsf{2SLS}}),\boldsymbol{x}_{n+1}\rangle-y_{n+1}\right|\Big]\bigg\}\\
			&\leq \mathbb{E}_\pi\bigg\{\mathbb{E}_{\mathcal{P}}\Big[\Big(\widetilde{\textsf{TF}}_{\boldsymbol{\theta}}(\boldsymbol{H})-\langle\textsf{clip}_{B_\beta}(\hat{\boldsymbol{\beta}}_{\textsf{2SLS}}),\boldsymbol{x}_{n+1}\rangle\Big)^2\Big]+\mathbb{E}_{\mathcal{P}}\Big[\langle\textsf{clip}_{B_\beta}(\hat{\boldsymbol{\beta}}_{\textsf{2SLS}})-\boldsymbol{\beta},\boldsymbol{x}_{n+1}\rangle^2\Big]+\mathbb{E}_{\mathcal{P}}\Big[\big(\langle\boldsymbol{\beta},\boldsymbol{x}_{n+1}\rangle-y_{n+1}\big)^2\Big]\\
			&\quad\quad+2\mathbb{E}_{\mathcal{P}}\Big[\left|\widetilde{\textsf{TF}}_{\boldsymbol{\theta}}(\boldsymbol{H})-\langle\textsf{clip}_{B_\beta}(\hat{\boldsymbol{\beta}}_{\textsf{2SLS}}),\boldsymbol{x}_{n+1}\rangle\right|\Big]\left(\mathbb{E}_{\mathcal{P}}\left[\big|\langle\textsf{clip}_{B_\beta}(\hat{\boldsymbol{\beta}}_{\textsf{2SLS}})-\boldsymbol{\beta},\boldsymbol{x}_{n+1}\rangle\big|\right]+\mathbb{E}_{\mathcal{P}}\Big[\big|\langle\boldsymbol{\beta},\boldsymbol{x}_{n+1}\rangle-y_{n+1}\big|\Big]\right)\bigg\}.
		\end{split}
	\end{align}
	From Corollary \ref{coro:transformer}, we know that there exists a $L=2\bar{L}+1$-layer attention-only transformer model $\boldsymbol{\theta}$, with $M=2(p+1)$ heads, and embedding dimension $D=qp+3p+q+3$, such that for any $\boldsymbol{H}$, given any learning rates $\alpha, \eta$ and $\Lambda$ as defined in~\eqref{eq:lambda}, the following holds\footnote{The clipping bound on $\hat{\boldsymbol{\beta}}_{\textsf{2SLS}}$ can be matched by adjusting the clipping threshold on the last layer of $\widetilde{\textsf{TF}}_{\boldsymbol{\theta}}$.}:
	\begin{align*}
		\left|\widetilde{\textsf{TF}}_{\boldsymbol{\theta}}(\boldsymbol{H})-\langle\textsf{clip}_{B_\beta}(\hat{\boldsymbol{\beta}}_{\textsf{2SLS}}), \boldsymbol{x}_{n+1}\rangle\right|\leq\mathcal{O}\left(B_x\Lambda^{\bar{L}}\right).
	\end{align*}
	Denote $\Lambda^\star:=\underset{\alpha,\eta}{\min}\;\mathbb{E}_\pi\mathbb{E}_{\mathcal{P}}\left[\Lambda|\boldsymbol{H},\alpha,\eta\right]$, then under $\alpha^\star,\eta^\star$, we have:
	\begin{align}\label{eq:ICL loss term 1}
		\mathbb{E}_\pi\mathbb{E}_{\mathcal{P}}\left[\left|\widetilde{\textsf{TF}}_{\boldsymbol{\theta}}(\boldsymbol{H})-\langle\textsf{clip}_{B_\beta}(\hat{\boldsymbol{\beta}}_{\textsf{2SLS}}), \boldsymbol{x}_{n+1}\rangle\right|\right]\leq\mathcal{O}\left(B_x(\Lambda^{\star})^{\bar{L}}\right),
	\end{align}
	and 
	\begin{align}\label{eq:ICL loss term 2}
		\mathbb{E}_\pi\mathbb{E}_{\mathcal{P}}\Big[\big(\widetilde{\textsf{TF}}_{\boldsymbol{\theta}}(\boldsymbol{H})-\langle\textsf{clip}_{B_\beta}(\hat{\boldsymbol{\beta}}_{\textsf{2SLS}}), \boldsymbol{x}_{n+1}\rangle\big)^2\Big]\leq\mathcal{O}\left(B_x^2\mu_{\Lambda,2}^\star\right),
	\end{align}
	where $\mu_{\Lambda,2}^\star:=\mathbb{E}_\pi\mathbb{E}_{\mathcal{P}}\left[\Lambda^{2\bar{L}}|\boldsymbol{H},\alpha^\star,\eta^\star\right]$ is close to 0.

	With condition (\ref{eq:meta distribution condition}), from Cauchy-Schwarz inequality, we have:
	\begin{align}\label{eq:ICL loss term 3}
		\mathbb{E}_\pi\mathbb{E}_{\mathcal{P}}\Big[\big|\langle\boldsymbol{\beta},\boldsymbol{x}_{n+1}\rangle-y_{n+1}\big|\Big]&\leq \mathbb{E}_\pi\left[\sqrt{\mathbb{E}_{\mathcal{P}}\left(\epsilon_{n+1}^2\right)}\right]
		=\mathbb{E}_\pi[\sigma_\epsilon]
		\leq\tilde{\sigma}_\epsilon.
	\end{align}
	Also, from (\ref{eq:excess loss 2sls}), we have:
	\begin{align}\label{eq:ICL loss term 4}
		\mathbb{E}_\pi\mathbb{E}_{\mathcal{P}}\left[\langle\textsf{clip}_{B_\beta}(\hat{\boldsymbol{\beta}}_{\textsf{2SLS}})-\boldsymbol{\beta},\boldsymbol{x}_{n+1}\rangle^2\right]\leq\mathcal{O}\left(B_x^2\left(\frac{q}{n}\left(\frac{B_\beta^2}{K}+C^2(n)\tilde{\sigma}^2\right)\right)\right).
	\end{align}
	Further, 
	\begin{align}\label{eq:ICL loss term 5}
		\begin{split}
			\mathbb{E}_\pi\mathbb{E}_{\mathcal{P}}\left[\big|\langle\textsf{clip}_{B_\beta}(\hat{\boldsymbol{\beta}}_{\textsf{2SLS}})-\boldsymbol{\beta},\boldsymbol{x}_{n+1}\rangle\big|\right]&\leq \sqrt{\mathbb{E}_\pi\mathbb{E}_{\mathcal{P}}\left[\langle\textsf{clip}_{B_\beta}(\hat{\boldsymbol{\beta}}_{\textsf{2SLS}})-\boldsymbol{\beta},\boldsymbol{x}_{n+1}\rangle^2\right]}\\
			&\leq \mathcal{O}\left(B_x\sqrt{\frac{q}{n}\left(\frac{B_\beta^2}{K}+C^2(n)\tilde{\sigma}^2\right)}\right).
		\end{split}
	\end{align}
	Finally, with (\ref{eq:ICL loss term 1})(\ref{eq:ICL loss term 2})(\ref{eq:ICL loss term 3})(\ref{eq:ICL loss term 4})(\ref{eq:ICL loss term 5}), rearranging the terms in (\ref{eq:ICL loss decomposition}), we have:
	\begin{align}\label{eq:excess loss}
		\begin{split}
			&L_{\textsf{ICL}}(\boldsymbol{\theta}) - \mathbb{E}_{\pi}\mathbb{E}_{\mathcal{P}}\left[(y_{n+1}-\langle\boldsymbol{\beta},\boldsymbol{x}_{n+1}\rangle)^2\right] \\&\leq \mathcal{O}\left(B_x^2\left(\mu_{\Lambda,2}^\star+(\Lambda^\star)^{\bar{L}}\sqrt{\frac{q}{n}\left(\frac{B_\beta^2}{K}+C^2(n)\tilde{\sigma}^2\right)}+\frac{q}{n}\left(\frac{B_\beta^2}{K}+C^2(n)\tilde{\sigma}^2\right)\right)+ B_x(\Lambda^\star)^{\bar{L}}\tilde{\sigma}_\epsilon\right).\\
			&\leq \mathcal{O}\left((\Lambda^\star)^{\bar{L}}\left(B_x^2\sqrt{\frac{q}{n}\left(\frac{B_\beta^2}{K}+C^2(n)\tilde{\sigma}^2\right)}+B_x\tilde{\sigma}_\epsilon\right)+B_x^2\left(\frac{q}{n}\left(\frac{B_\beta^2}{K}+C^2(n)\tilde{\sigma}^2\right)+\mu_{\Lambda,2}^\star\right)\right).
		\end{split}
	\end{align}
	Thus combining Lemma \ref{lemma:generalize pretraining} with (\ref{eq:excess loss}) completes the proof.
\end{proof}

\section{Additional Experiments and Discussions}
For all experiments in this section, to be consistent with our main experiment in Section \ref{sec:simulation}, we generate \(n=50\) training samples with $p=5, q=10$, following the data generating process described in Algorithm \ref{alg:data generating process}. The task parameters $\boldsymbol{\Theta}, \boldsymbol{\beta}, \boldsymbol{\Phi}, \boldsymbol{\phi}$ are sampled from standard Gaussian distribution, the covariance matrices $\boldsymbol{\Sigma}_z, \boldsymbol{\Sigma}_u, \boldsymbol{\Sigma}_\omega$ are set to be identity matrices, and the noise level $\sigma_\epsilon$ is set to 1.

\subsection{Simulations Verifying Theorem \ref{thm:2SLS GD}}\label{sec:exp 2SLS GD}

We use the GD-based \textsf{2SLS} method (\ref{eq:2SLS GD Update}) to estimate the causal effect $\boldsymbol{\beta}$. For the simulated data, we calculate the following metrics:
\begin{align*}\frac{2}{\sigma_{\max}^2(\boldsymbol{Z}\hat{\boldsymbol{\Theta}})}=0.0016, \frac{2}{\sigma_{\max}^2(\boldsymbol{Z})}=0.0212.\end{align*}

By Theorem \ref{thm:2SLS GD}, the gradient descent converges when $\alpha\in (0,0.0016)$ and $\eta\in(0,0.0212)$. The overall convergence rate is determined by  $\Lambda:= \max\{\gamma(\alpha),\kappa(\eta)\}$, where 
\begin{align*}
    \gamma(\alpha)&:=\rho\left(\boldsymbol{I}-\alpha\hat{\boldsymbol{\Theta}}^\top\boldsymbol{Z}^\top\boldsymbol{Z\hat{\Theta}}\right),\\
    \kappa(\eta)&:=\rho\left(\boldsymbol{I}-\eta\boldsymbol{Z}^\top\boldsymbol{Z}\right).
\end{align*}

We first set $\alpha=0.0012$ and vary $\eta$. The corresponding convergence rates are determined by $\Lambda = \max(0.87, \kappa(\eta))$. Next, we set $\eta=0.01$ and vary $\alpha$. The corresponding convergence curves are determined by $\Lambda = \max(\gamma(\alpha), 0.82)$. We compare the estimates $\hat{\boldsymbol{\beta}}^{(t)}$ with the \textsf{2SLS} estimate $\hat{\boldsymbol{\beta}}_{\textsf{2SLS}}$ as the iteration proceeds. The convergence results are shown in Figure \ref{fig:gradconv}.

\begin{figure}[H]
    \centering
    \begin{subfigure}[b]{0.45\textwidth}
        \includegraphics[width=\textwidth]{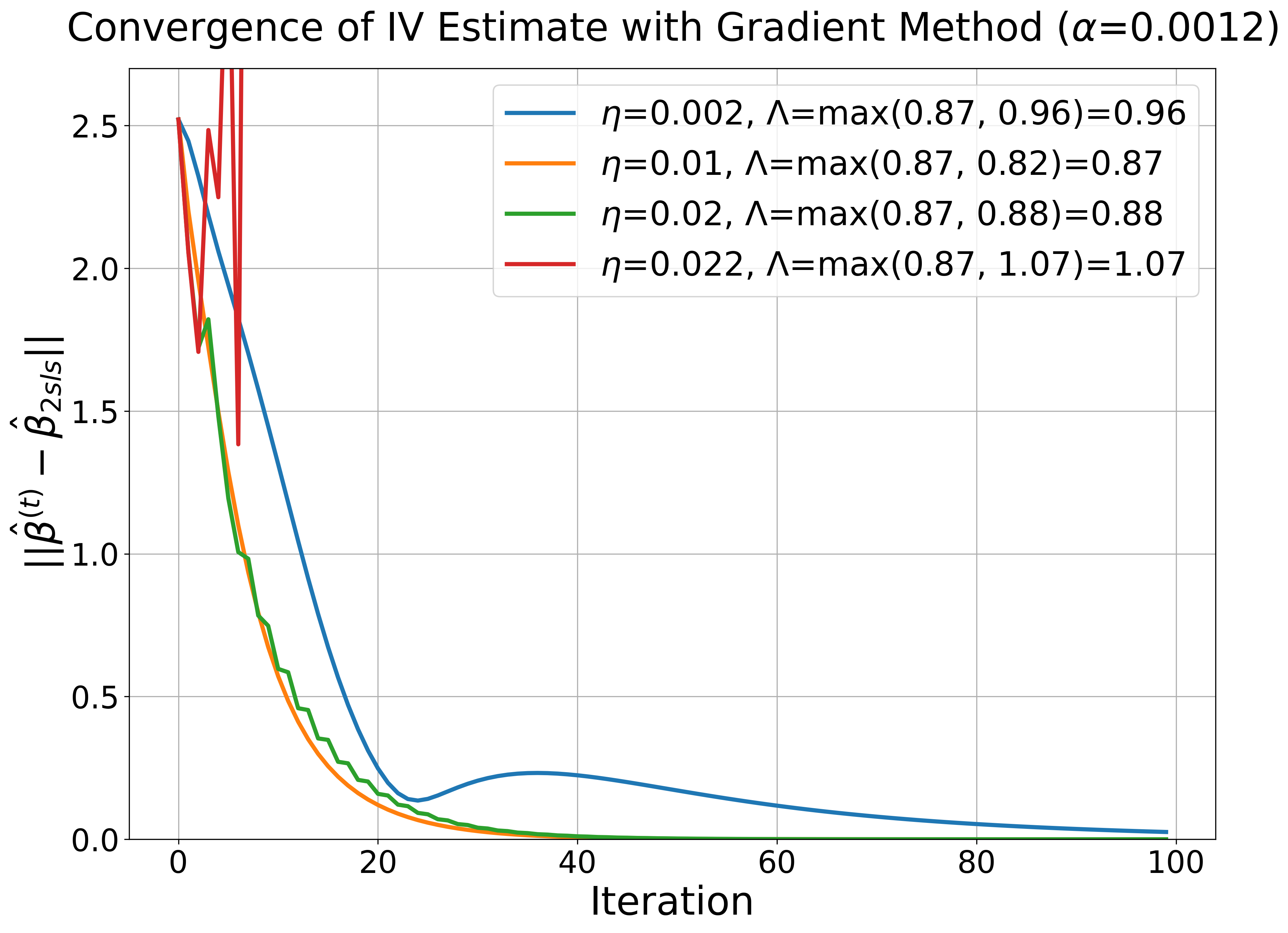}
        \caption{}
        \label{fig:gradconv(a)}
    \end{subfigure}
    \begin{subfigure}[b]{0.45\textwidth}
        \includegraphics[width=\textwidth]{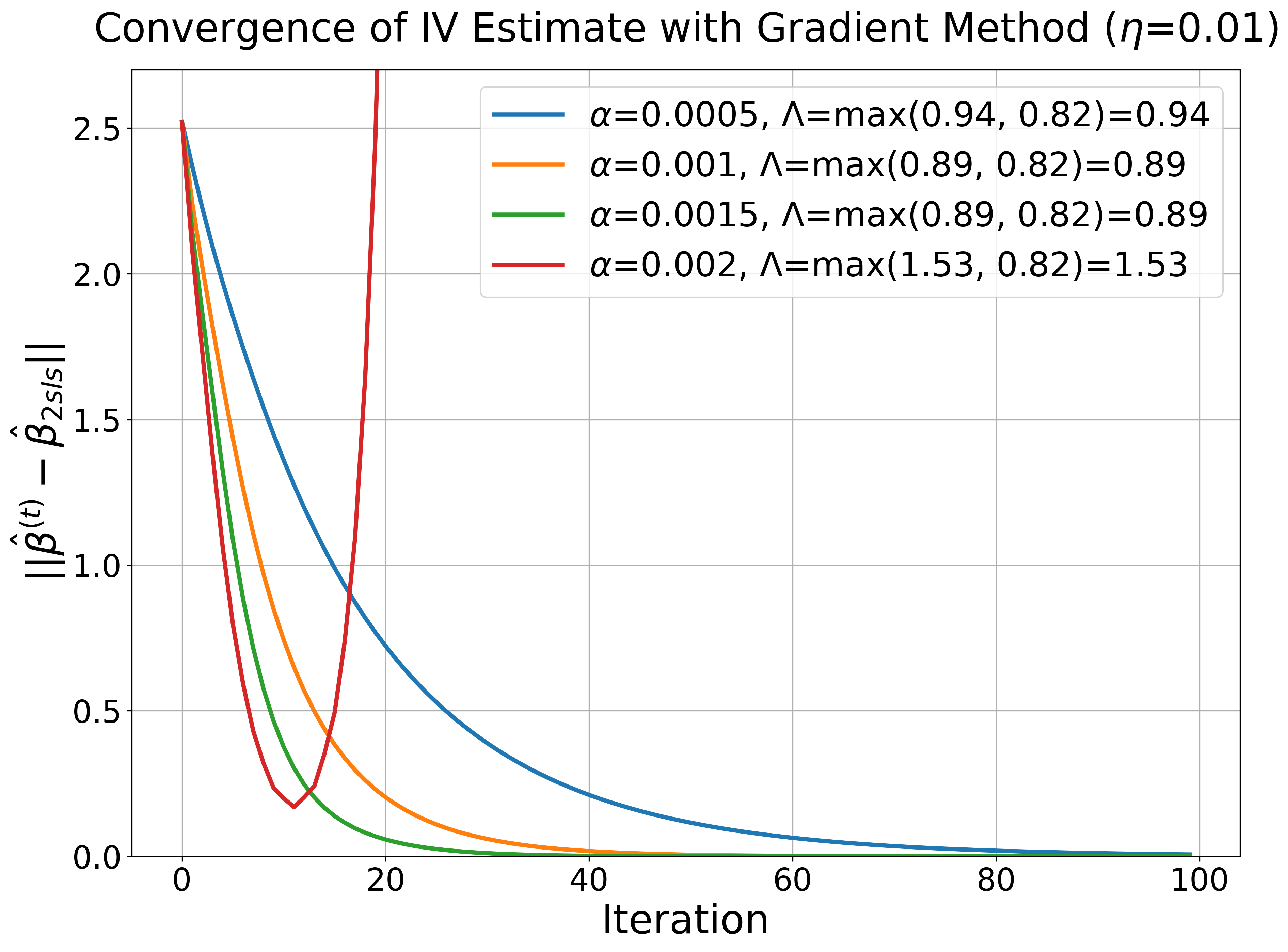}
        \caption{}
        \label{fig:gradconv(b)}
    \end{subfigure}
    \caption{The convergence of the GD-based \textsf{2SLS} method with (a) fixed $\alpha=0.0012$ and varying $\eta$ and (b) fixed $\eta=0.01$ and varying $\alpha$.}
    \label{fig:gradconv}
\end{figure}

The results in Figure \ref{fig:gradconv} are consistent with our theoretical analysis in Theorem \ref{thm:2SLS GD}. It is worth noting that in Figure \ref{fig:gradconv(a)}, when $\eta$ is relatively large (or small), the convergence curves exhibit some suiggly patterns. This is due to the innerloop updates (\ref{eq:InnerLoopUpdate}) are converging faster (or slower) than the outer loop updates (\ref{eq:OuterLoopUpdate}). However, the overall convergence rate is still determined by $\Lambda$. This pattern doesn't appear in Figure \ref{fig:gradconv(b)} as we set $\eta$ to be a moderate value, which ensures that the inner loop and outer loop converge synchronously.

Next, we show the bias of the GD estimator. For better convergence, we set $\alpha^\star=\frac{1}{\sigma_{\max}^2(\boldsymbol{Z\hat{\Theta}})}$ and $\eta^\star=\frac{1}{\sigma_{\max}^2(\boldsymbol{Z})}$. We compare the biases of the GD estimator with $n=50,100,150$ in-context samples. The results are shown in Figure \ref{fig:bias_vs_iteration}.
\begin{figure}[H]
    \centering
    \includegraphics[width=0.5\textwidth]{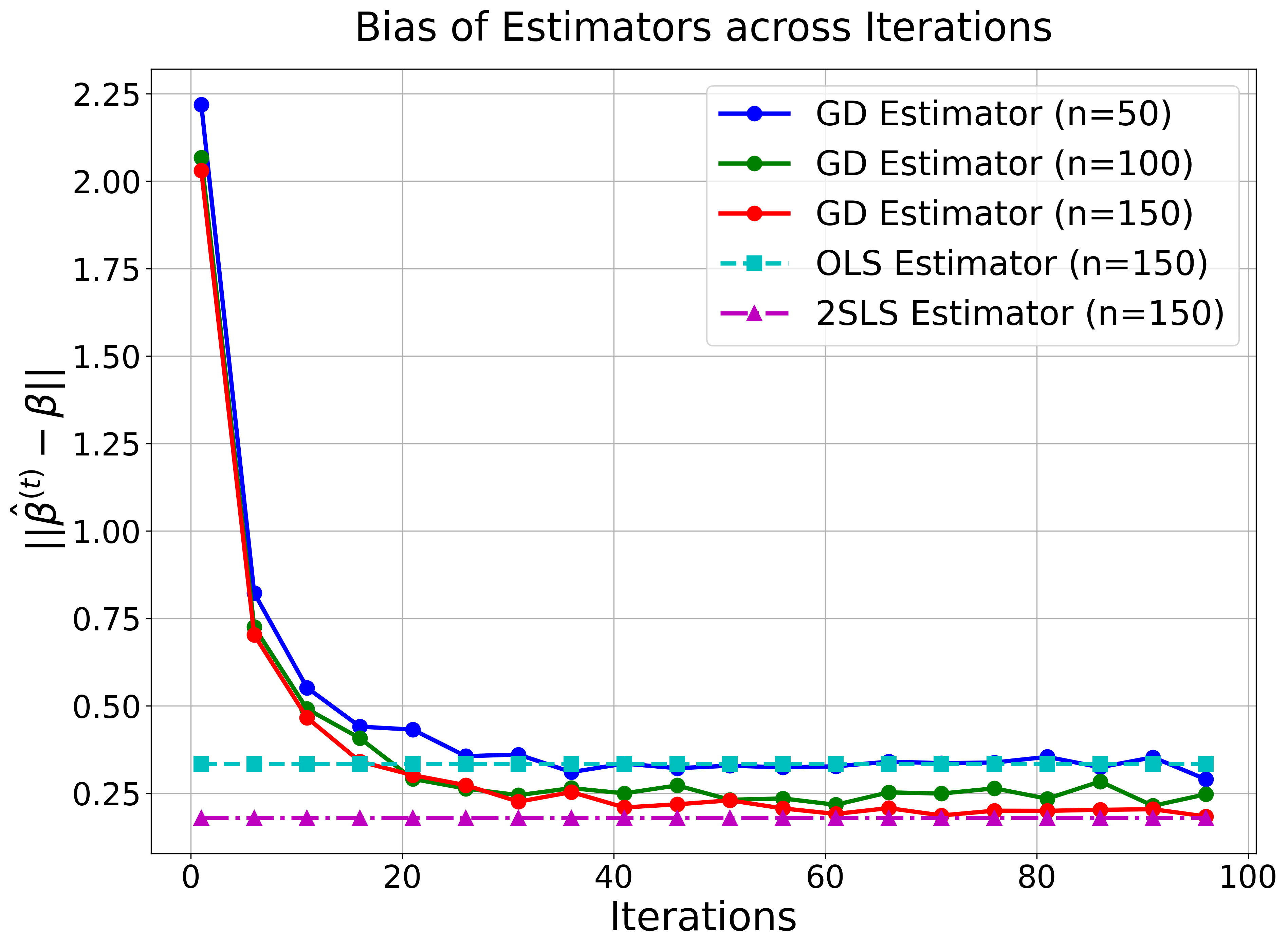}
    \caption{The convergence of the GD-based \textsf{2SLS} method with $\alpha^\star=\frac{1}{\sigma_{\max}^2(\boldsymbol{Z\hat{\Theta}})}$ and $\eta^\star=\frac{1}{\sigma_{\max}^2(\boldsymbol{Z})}$. The biases of \textsf{2SLS} estimator and \textsf{OLS} estimator at $n=150$ are plotted for comparison.}
\label{fig:bias_vs_iteration}
\end{figure}

\subsection{Discussions on \textsf{2SLS} with $\ell_2$-Regularization}
In this section, we briefly discuss a generalization of our analysis to the case where the \textsf{2SLS} estimator is regularized by the $\ell_2$ penalty (Ridge 2SLS). For this case, the bi-level optimization problem \eqref{eq:IV loss objective} is modified as follows:
\begin{align*}
    \begin{aligned}
    \min_{\boldsymbol{\beta}} \quad\mathcal{L}(\boldsymbol{\beta})&=\frac{1}{n}\sum_{i=1}^n(y_i-\boldsymbol{z}_i^\top\hat{\boldsymbol{\Theta}}\boldsymbol{\beta})^2+\frac{1}{2}\lambda\|\boldsymbol{\beta}\|^2,\\
    \text{where }\quad \hat{\boldsymbol{\Theta}}:=&\underset{\boldsymbol{\Theta}}{\arg\min}\quad\frac{1}{n}\sum_{j=1}^n(\boldsymbol{x}_j-\boldsymbol{z}_j^\top\boldsymbol{\Theta})^2+\frac{1}{2}\tau\|\boldsymbol{\Theta}\|_F^2,
    \end{aligned}
\end{align*}
where $\lambda, \tau\geq 0$ are regularization parameters. To solve this problem,
the GD updates in \eqref{eq:2SLS GD Update} is modified as follows:
\begin{subequations}\label{eq:2SLS Ridge GD Update}
    \begin{align}
        \boldsymbol{\Theta}^{(t+1)} &= \boldsymbol{\Theta}^{(t)} - \eta \left[\boldsymbol{Z}^\top(\boldsymbol{Z\Theta}^{(t)}-\boldsymbol{X})+\tau\boldsymbol{\Theta}^{(t)}\right],\label{eq:InnerLoopUpdate Ridge}\\
        \boldsymbol{\beta}^{(t+1)} &= \boldsymbol{\beta}^{(t)} - \alpha \left[\boldsymbol{\Theta}^{(t)\top}\boldsymbol{Z}^\top(\boldsymbol{Z\Theta}^{(t)}\boldsymbol{\beta}^{(t)}-\boldsymbol{Y})+\lambda\boldsymbol{\beta}^{(t)}\right].\label{eq:OuterLoopUpdate Ridge}
    \end{align}
\end{subequations}
The only difference between \eqref{eq:2SLS GD Update} and \eqref{eq:2SLS Ridge GD Update} is the additional terms $\tau\boldsymbol{\Theta}^{(t)}$ in \eqref{eq:InnerLoopUpdate Ridge}, and $\lambda\boldsymbol{\beta}^{(t)}$ in \eqref{eq:OuterLoopUpdate Ridge}. The convergence analysis of the $\ell_2$-regularized GD updates in \eqref{eq:2SLS Ridge GD Update} can be conducted in a similar manner as in Theorem \ref{thm:2SLS GD}. The only difference is that the convergence rate $\Lambda$ is now determined by the spectral radiuses of $\boldsymbol{I}-\eta(\boldsymbol{Z}^\top\boldsymbol{Z}+\tau\boldsymbol{I})$ and $\boldsymbol{I}-\alpha(\hat{\boldsymbol{\Theta}}^\top\boldsymbol{Z}^\top\boldsymbol{Z}\hat{\boldsymbol{\Theta}}+\lambda \boldsymbol{I})$, respectively.

With the same configuration as in Theorem \ref{thm:transformer} but adding $p+1$ attention heads in the second layer (i.e. the second layer needs $3p+3$ heads), we can show that transformers are able to implement the $\ell_2$-regularized GD updates in \eqref{eq:2SLS Ridge GD Update}. The proof follows directly from Appendix \ref{proof:thm:transformer}, with the construction of the new attention heads in the second layer as follows.

For $m=2p+2+k, k=1,\ldots,p$, define $\boldsymbol{Q}_m^{(2)}, \boldsymbol{K}_m^{(2)}, \boldsymbol{V}_m^{(2)}$ such that:

\begin{align*}
    \boldsymbol{Q}_m^{(2)}\boldsymbol{h}_i^{(1)}=\begin{bmatrix}
        1\\\boldsymbol{0}
    \end{bmatrix}, 
    \boldsymbol{K}_m^{(2)}\boldsymbol{h}_i^{(1)}=\begin{bmatrix}
        1\\\boldsymbol{0}
    \end{bmatrix}, 
    \boldsymbol{V}_m^{(2)}\boldsymbol{h}_i^{(1)}=-\eta\tau\sum_{l=1}^q\Theta_{lk}^{(0)}\boldsymbol{e}_{D_0+(k-1)q+l}.
\end{align*}

For $m=3p+3$, define $\boldsymbol{Q}_{3p+3}^{(2)}, \boldsymbol{K}_{3p+3}^{(2)}, \boldsymbol{V}_{3p+3}^{(2)}$ such that:
\begin{align*}
    \boldsymbol{Q}_{3p+3}^{(2)}\boldsymbol{h}_i^{(1)}=\begin{bmatrix}
        1\\\boldsymbol{0}
    \end{bmatrix}, 
    \boldsymbol{K}_{3p+3}^{(2)}\boldsymbol{h}_i^{(1)}=\begin{bmatrix}
        1\\\boldsymbol{0}
    \end{bmatrix}, 
    \boldsymbol{V}_{3p+3}^{(2)}\boldsymbol{h}_i^{(1)}=-\alpha\lambda\sum_{l=1}^q\beta_{l}^{(0)}\boldsymbol{e}_{D_0+qp+l}.
\end{align*}

Then the remaining proof follows exactly the same as Appendix \ref{proof:thm:transformer}. This result indicates that transformers are potentially capable of handling multicollinearity in IV regression problem. We conduct experiments to validate this and the results are shown in Appendix \ref{sec:exp multicollinearity}.

\subsection{Experiments on Multicollinearity IV Problem}\label{sec:exp multicollinearity}
As a supplement to Section \ref{sec:sim results}, we examine the case where multicollinearity occurs in the IV regression problem. We generate the test prompts in the same way using Algorithm \ref{alg:data generating process}, but introduce multicollinearity in the endogenous variable $\boldsymbol{x}$ and instrument $\boldsymbol{z}$. 

Specifically, we first generate 4 columns of $\boldsymbol{X}$, and 9 columns of $\boldsymbol{Z}$, and then set $\boldsymbol{X}_{:,5}\sim\mathcal{N}(2\boldsymbol{X}_{:,4}, 10^{-6}\boldsymbol{I}_n)$, and $\boldsymbol{Z}_{:,10}\sim\mathcal{N}(2\boldsymbol{Z}_{:,9},10^{-6}\boldsymbol{I}_n)$. The results are shown in Figure \ref{fig:multicollinearity}. As shown in Figure \ref{fig:multicollinearity}, both ordinary \textsf{OLS} and \textsf{2SLS} estimators fail to estimate the coefficients, while the trained transformer model is still able to provide consistent predictions and coefficient estimates.

\begin{figure}[h]
    \centering
    \begin{subfigure}[b]{0.48\textwidth}
        \includegraphics[width=\textwidth]{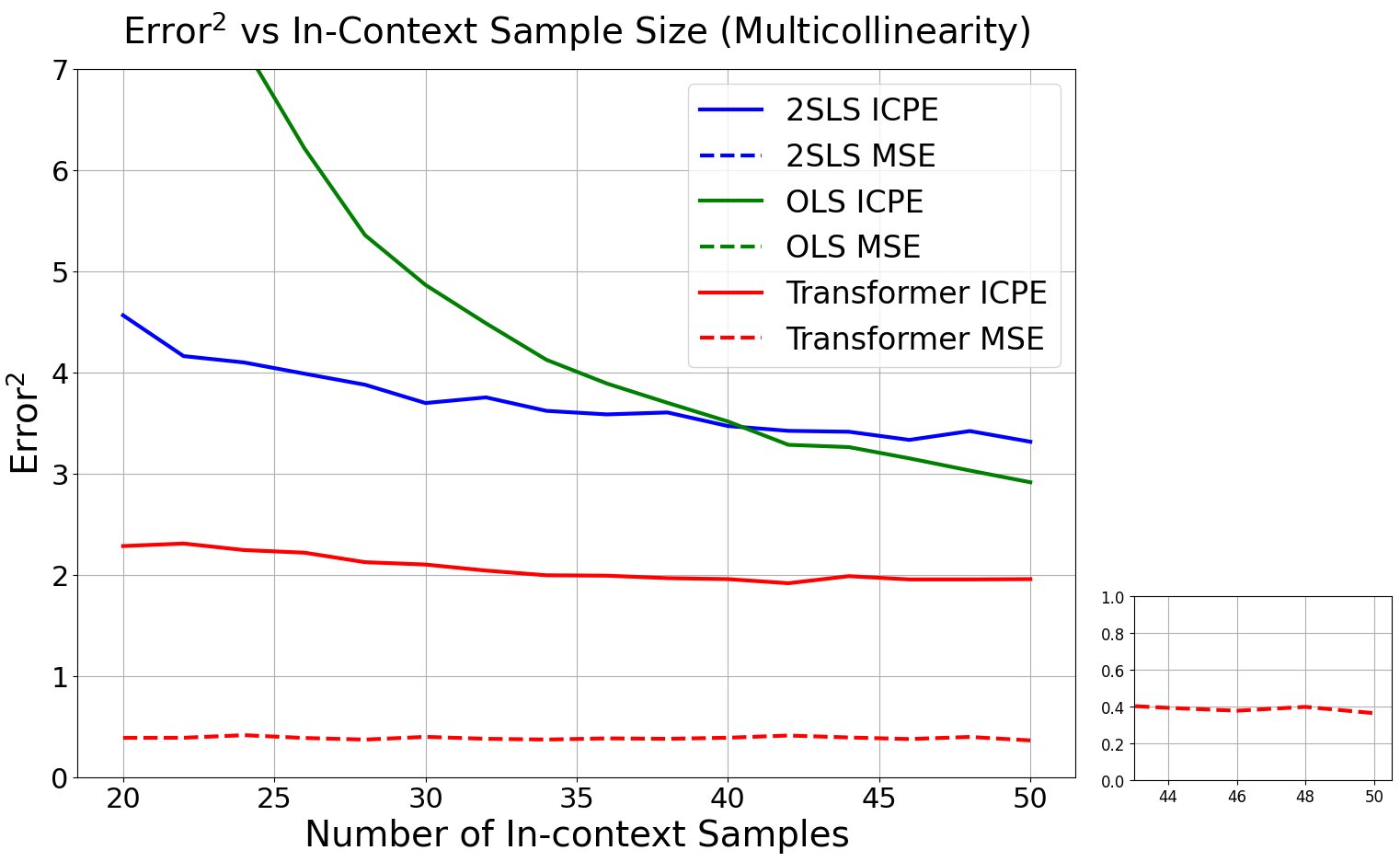}
        \caption{}
        \label{fig:multicollinearity}
    \end{subfigure}
    \begin{subfigure}[b]{0.48\textwidth}
        \includegraphics[width=\textwidth]{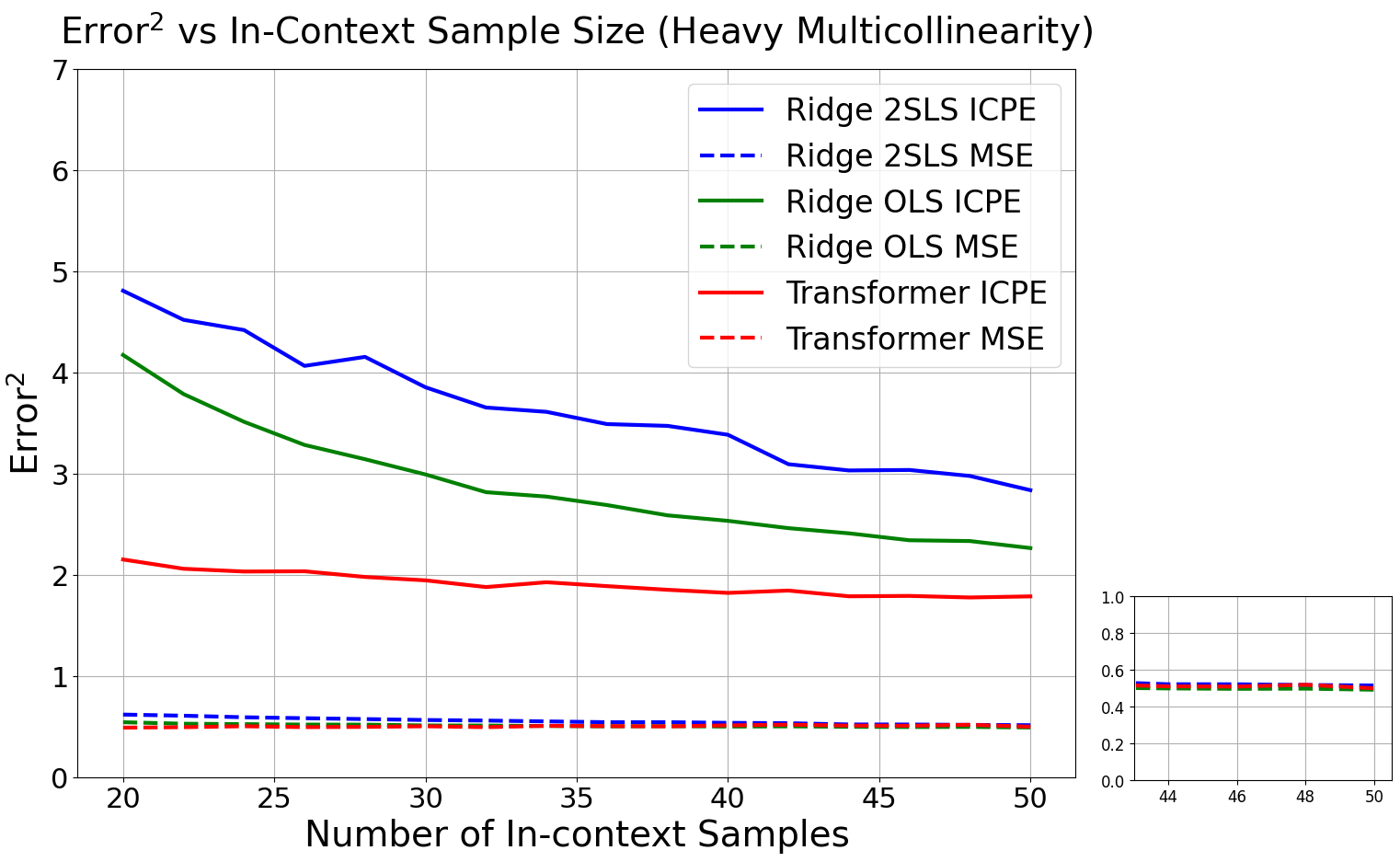}
        \caption{}
        \label{fig:heavy-multicollinearity}
    \end{subfigure}
    \caption{The ICL performance of the trained transformer model in endogeneity tasks with multicollinearity. (a) 1 collinear column in $\boldsymbol{X}$, and 1 collinear column in $\boldsymbol{Z}$. Note that the coefficient MSEs for $\textsf{2SLS}$ and $\textsf{OLS}$ are both out of range. (b) 2 collinear columns in $\boldsymbol{X}$, and 5 collinear columns in $\boldsymbol{Z}$. We compare the performance to the $\ell _2$-regularized $\textsf{2SLS}$ and $\textsf{OLS}$ estimators. The curves are averaged over 500 simulations.}
\end{figure}

We further examine a more difficult case where heavy multicollinearity occurs. We first generate 3 column of $\boldsymbol{X}$, and 5 column of $\boldsymbol{Z}$, and then set $\boldsymbol{X}_{:,j}\sim\mathcal{N}(2\boldsymbol{X}_{:,j-2}, 10^{-6}\boldsymbol{I}_n)$ for $j=4,5$, and $\boldsymbol{Z}_{:,j}\sim\mathcal{N}(2\boldsymbol{Z}_{:,j-5},10^{-6}\boldsymbol{I}_n)$ for $j=6,7,8,9,10$. For better comparisons, we now compare the performance of the trained transformer model to the $\ell_2$-regularized \textsf{2SLS} and \textsf{OLS} estimators (with all regularization parameters set to 1). The results are shown in Figure \ref{fig:heavy-multicollinearity}.

These results suggest that the trained transformer model is capable to handle multicollinearity in IV regression problems, even though it has not been specifically trained with multicollinearity tasks. 

\subsection{Experiments on Complex Non-Linear IV Problem}\label{sec:exp complex non-linear}

As a supplement to Section \ref{sec:sim results}, we examine a more complex scenario where the instrument $\boldsymbol{z}$ has non-linear effect on the endogenous regressor $\boldsymbol{x}$. We consider the following data generating process:
\begin{align*}
    y&=\langle\boldsymbol{\beta},\boldsymbol{x}\rangle+\epsilon_1, \quad\text{and}\quad
    \boldsymbol{x}=g(\boldsymbol{z})+\boldsymbol{\epsilon_2},
\end{align*}
where $g:\mathbb{R}^q\rightarrow \mathbb{R}^p$ is a two-layer fully connected neural network with ReLU activation function. Similar to Section \ref{sec:simulation}, the test prompts are generated using Algorithm \ref{alg:data generating process}, with all task parameters and weights of neural network sampled from standard Gaussian distribution. The results are shown in Figure \ref{fig:nonlinear}. From this figure, we can see that the trained transformer model still achieves optimal performance in this complex non-linear setting.
\begin{figure}[h]
    \centering
    \includegraphics[width=0.6\textwidth]{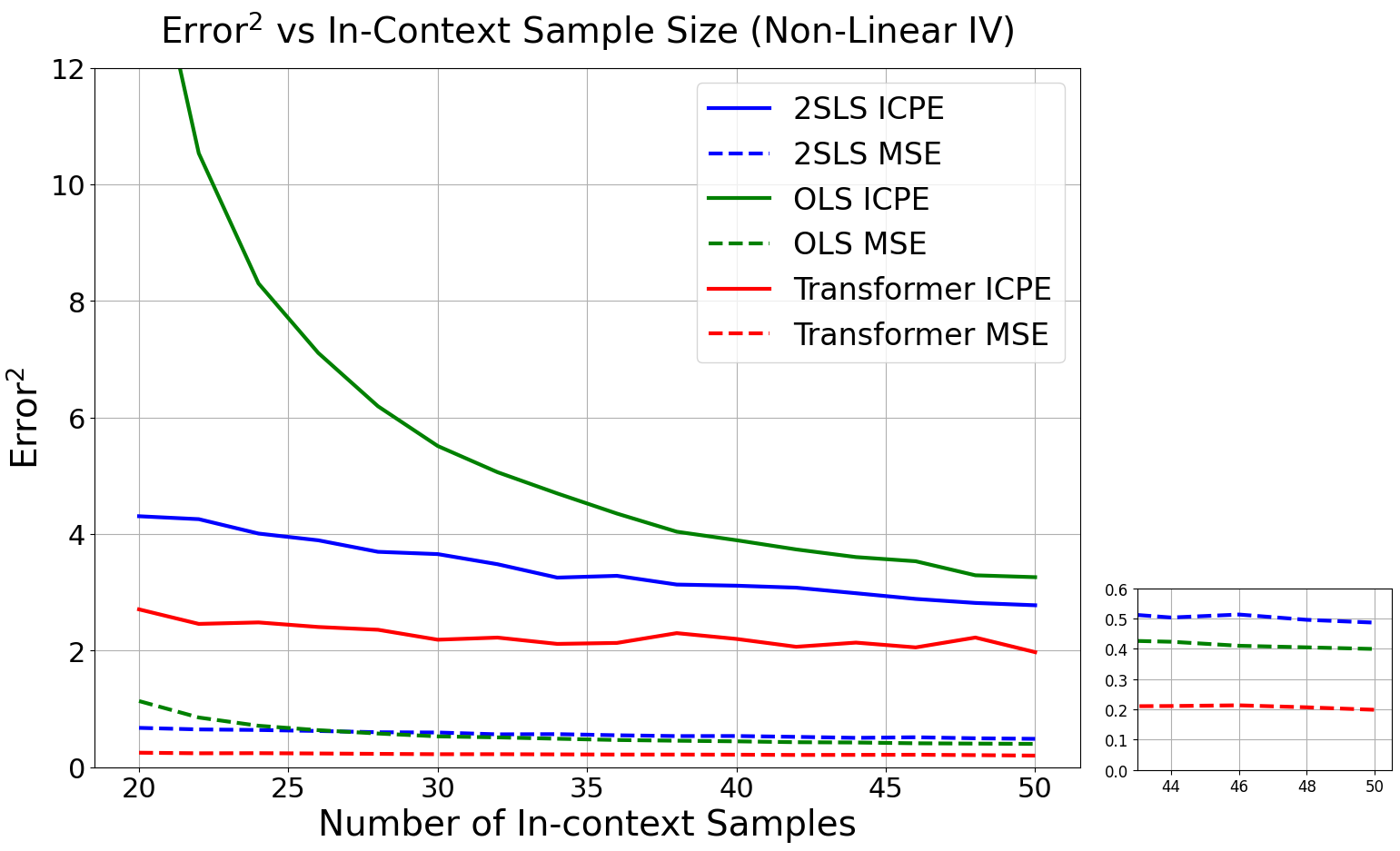}
    \caption{The ICL performance of the trained transformer model in complex non-linear endogeneity tasks where the IV has non-linear effect on the endogenous variable. The curves are averaged over 500 simulations.}
\label{fig:nonlinear}
\end{figure}

\subsection{Experiments on Varying Endogeneity Strength}\label{sec:exp vary endogeneity}
As a supplement to Section \ref{sec:sim results}, we examine the performance of the trained transformer model in standard IV tasks with varying endogeneity strengths. The strength of endogeneity is determined by the correlation between $\boldsymbol{x}$ and the endogenous error $\epsilon_1$. To vary the endogeneity strength, in Algorithm \ref{alg:data generating process}, we multiply $\boldsymbol{u}$ by a factor $r\in(0,2)$ when generating test prompts. The results are shown in Figure \ref{fig:endogeneity_strength}, which illustrates that the trained transformer model is comparable with the optimal \textsf{2SLS} estimator in these standard IV tasks, regardless of the endogeneity level.
\begin{figure}[H]
    \centering
    \includegraphics[width=0.6\textwidth]{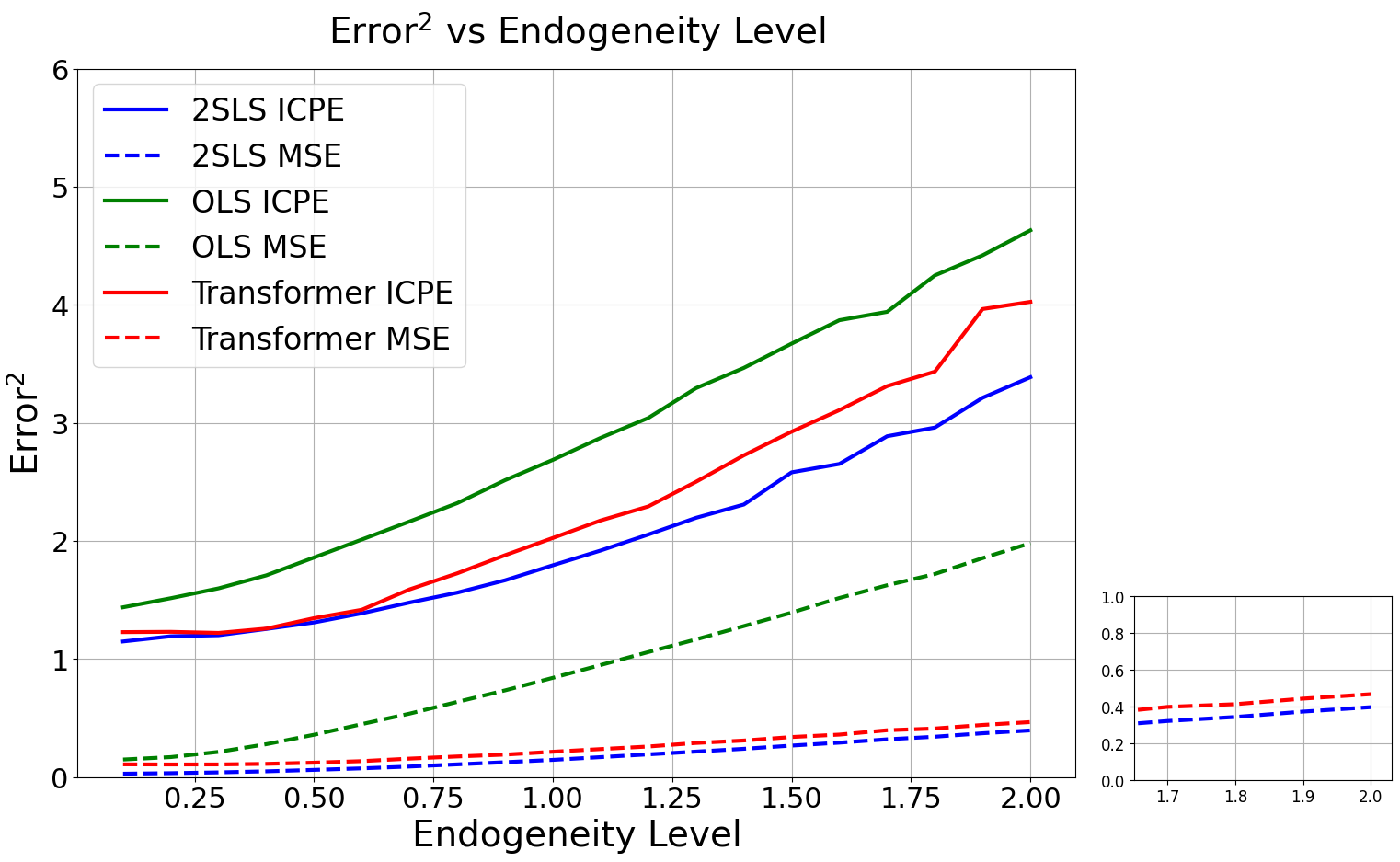}
    \caption{The ICL performance of the trained transformer model in tasks with varying endogeneity strengths. The curves are averaged over 500 simulations.}
\label{fig:endogeneity_strength}
\end{figure}

\subsection{Experiments on Real-World Dataset}
In this section we provide an example to illustrate how the pretrained transformer model can be applied to a real-world dataset. We use the dataset from the study of \cite{laborsupply}. This study investigates the effect of childbearing on labor supply. For demonstration purpose, we consider a simplified setup. We focus on a subset of the dataset that contains 6421 samples from Alabama. The outcome variable $y$ is mother's labor supply (number of working weeks in a year divided by 52), the endogenous variable $x$ is the number of children ($\geq 2$), and the instrument $z$ is an indicator variable of whether the first and second children are of the same sex\footnote{Research found that parents of same-sex siblings are significantly more likely to go on to have an additional child \citep{Westoff1972}, while it is not directly correlated with mother's labor supply as mixture of sex of the first two children can be considered as randomly assigned.}.

For each run we randomly select 50 samples from the dataset, and make the boxplot of the estimated $\beta$ over 500 runs\footnote{For large enough model that can fit in the entire dataset, this step can be ignored. As shown in the simulation study in Section \ref{sec:simulation}, a single estimate is expected to perform at least as good as \textsf{2SLS} estimator, given the same number of samples.}. As the ground truth effect $\beta$ is unknown, we compare them to the \textsf{OLS} and \textsf{2SLS} estimates over all samples. The results are shown in Figure \ref{fig:labor_supplyt.png}.

\begin{figure}[h]
    \centering
    \includegraphics[width=0.6\textwidth]{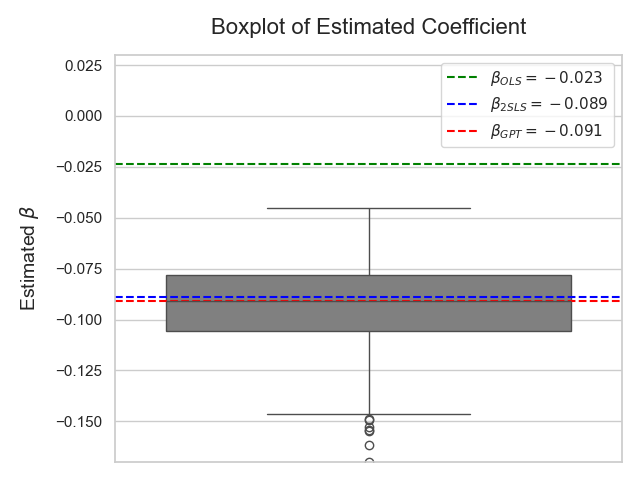}
    \caption{The boxplot of transformer's estimates over 500 runs on the labor supply dataset, comparing to the \textsf{OLS} and \textsf{2SLS} estimates. $\beta_{\textsf{GPT}}$ is taken to be the median of all runs. The gray box represents the interquartile range, where the middle 50\% of the estimated values fall. The whiskers of the box indicate the spread of the estimates. Any points falling outside of the whisker can be considered as outliers.}
\label{fig:labor_supplyt.png}
\end{figure}

The final estimate $\beta_{\textsf{GPT}}=-0.091$, which suggests that with each increase in the number of children, the mother's labor supply is expected to drop 9.1\% (approximately 4.73 weeks per year). This result is closer to the \textsf{2SLS} estimate $\beta_{\textsf{2SLS}}=-0.089$ than the \textsf{OLS} estimate $\beta_{\textsf{OLS}}=-0.023$. This example demonstrates the potential of the pretrained transformer model in handling real-world IV problems.

\subsection{Experimental Detail}
The training of the transformer in our experiment was conducted on a Windows 11 machine with the following specifications:

\begin{itemize}
    \item GPU: NVIDIA GeForce RTX 4090
    \item CPU: Intel Core i9-14900KF
    \item Memory: 32 GB DDR5, 5600MHz
\end{itemize}

The training process took around 10 hours.  
\end{document}